\theoremstyle{definition}      %
\newtheorem{theorem}{Theorem}
\newtheorem{assumption}{Assumption}
\newtheorem{condition}{Condition}
\newtheorem{remark}{Remark}
\newtheorem{lemma}{Lemma}
\newtheorem{corollary}{Corollary}
\newtheorem{proposition}{Proposition}
\newcommand{\varphilin}{\varphi_{\mathrm{lin}}}
\newcommand{\BG}{\bm{G}}
\newcommand{\Be}{\bm{\epsilon}}
\newcommand{\Bf}{\bm{f}}
\newcommand{\sF}{\mathscr{F}}
\newcommand{\sL}{\mathscr{L}}
\newcommand{\Lipl}{\Lip_{\ell,\cC}}
\newcommand{\Lipf}{\Lip_{\varphi,\cC}}
\newcommand{\Lipdf}{\Lip_{\nabla\varphi,\cC}}
\newcommand{\bv}{\bm\varphi}
\newcommand{\bE}{\mathbb{E}}
\newcommand{\bN}{\mathbb{N}}
\newcommand{\bP}{\mathbb{P}}
\newcommand{\bR}{\mathbb{R}}
\newcommand{\bS}{\mathbb{S}}
\newcommand{\bZ}{\mathbb{Z}}
\newcommand{\cB}{\mathcal{B}}
\newcommand{\cC}{\mathcal{C}}
\newcommand{\cD}{\mathcal{D}}
\newcommand{\cE}{\mathcal{E}}
\newcommand{\cF}{\mathcal{F}}
\newcommand{\cH}{\mathcal{H}}
\newcommand{\cM}{\mathcal{M}}
\newcommand{\cN}{\mathcal{N}}
\newcommand{\cO}{\mathcal{O}}
\newcommand{\cP}{\mathcal{P}}
\newcommand{\cR}{\mathcal{R}}
\newcommand{\cS}{\mathcal{S}}
\newcommand{\Z}{\mathscr{Z}}
\newcommand{\BH}{\bm{H}}
\newcommand{\bbh}{\bar{\bm{H}}}
\newcommand{\s}{\bm{s}}
\newcommand{\Lip}{\rr{Lip}}
\newcommand{\er}{\widehat{\cR}_{\cS}}
\newcommand{\rr}[1]{\mathrm{#1}}
\newcommand{\z}{\bm{z}}
\author{Semih Cayci}
\affil{\small Department of Mathematics\\RWTH Aachen University\\
\texttt{cayci@mathc.rwth-aachen.de}}
\date{}
\title{{Non-Asymptotic Optimization and Generalization Bounds for Stochastic Gauss–Newton in Overparameterized Models}}
\begin{document}
\maketitle
\begin{abstract}
An important question in deep learning is how higher-order optimization methods affect generalization. In this work, we analyze a stochastic Gauss-Newton (SGN) method with Levenberg–Marquardt damping and mini-batch sampling for training overparameterized deep neural networks with smooth activations in a regression setting. Our theoretical contributions are twofold. First, we establish finite-time convergence bounds via a variable-metric analysis in parameter space, with explicit dependencies on the batch size, network width and depth. Second, we derive non-asymptotic generalization bounds for SGN using uniform stability in the overparameterized regime, characterizing the impact of curvature, batch size, and overparameterization on generalization performance. Our theoretical results identify a favorable generalization regime for SGN in which a larger minimum eigenvalue of the Gauss–Newton matrix along the optimization path yields tighter stability bounds.

\end{abstract}

\section{Introduction}
In overparameterized models, the training algorithm plays a central role in generalization by implicitly selecting a particular interpolating hypothesis that both fits the data and generalizes well \citep{vardi2023implicit, bartlett2021deep}. A core theoretical and practical challenge in deep learning is to understand this phenomenon. Curvature-aware training algorithms, such as Gauss--Newton (GN) and its variants, have gained increasing attention in deep learning \citep{abreu2025potential,liu2025adam,martens2010deep, lecun2002efficient, botev2017practical, martens2015optimizing}, reinforcement learning \citep{kakade2001natural}, and scientific machine learning \citep{hao2024gauss, mckay2025near, muller2023achieving} due to their ability to achieve robust and fast convergence even in ill-conditioned settings. Despite their empirical success, a concrete theoretical understanding of these second-order methods in deep learning, particularly in the non-asymptotic regime, remains largely elusive. In particular, a non-asymptotic (finite-time and finite-width) theoretical characterization of both the global convergence and the generalization behavior of curvature-aware preconditioned optimization remains an important open problem in deep learning \citep{martens2020new, amari2020does}.

In this work, we analyze the convergence and generalization errors of the stochastic Gauss-Newton method, equipped with empirical Gauss-Newton preconditioner, Levenberg-Marquardt damping, mini-batch sampling and metric projection. Two main questions that motivate our work are as follows:
\begin{itemize}
    \item[\ding{118}] \textbf{Convergence:} \textit{Can we establish non-asymptotic (finite-time) training error bounds for \textsc{SGN}?}
    \item[\ding{118}] \textbf{Generalization:} \textit{How does Gauss-Newton preconditioning impact generalization performance of the resulting predictor?}
\end{itemize}
The use of data- and time-dependent, stochastic and curvature-aware preconditioners, which are correlated with the stochastic gradient, constitutes the main challenge in the analysis of \textsc{SGN}. In our analysis, we interpret SGN as a variable-metric method akin to AdaGrad and Adam \citep{duchi2018introductory}, and adopt a parameter space analaysis with respect to a Lyapunov function that fits the trajectory- and data-dependent Bregman divergence.

\subsection{Main Contributions}
In this work, we establish non-asymptotic optimization and generalization guarantees for SGN for deep and wide feedforward networks in the near-initialization regime. To the best of our knowledge, this is the first theoretical study that establishes stability-based (path-dependent) generalization bounds for Gauss-Newton in deep learning. Our main contributions include the following:

    \paragraph{\textbullet\hskip 3pt Non-asymptotic convergence guarantees for \textsc{SGN}.} We prove that the stochastic Gauss-Newton method with Levenberg-Marquardt damping, mini-batch sampling and metric projection achieves the non-asymptotic convergence rate
\[
  \mathcal{O}\!\left(
    \frac{1}{k}\left[\bar{r}_k\log k+\lambda+\lambda^{-1}\right]
    \;+\;
    \frac{1}{\sqrt{m}}
  \right),
\]
where $k$ is the number of iterations, $B$ the batch size, $m$ is the network width, $\lambda$ is the Levenberg-Marquardt damping factor, and $\bar{r}_k \leq p$ is the intrinsic dimension of the average Jacobian covariance for a $p$-dimensional parameter space.

\paragraph{\textbullet\hskip 3pt Robustness to ill-conditioning.} Our \emph{optimization} bounds (see Section \ref{sec:opt}) do not require strict positive definiteness of the neural tangent kernel, indicating that SGN remains effective even under highly ill-conditioned kernels unlike existing analyses of GN methods. Moreover, the optimization guarantees avoid any polynomial dependence of the network size $m$ on the sample size $n$.

\paragraph{\textbullet\hskip 3pt Generalization bounds for \textsc{SGN}.} We establish non-asymptotic and \emph{algorithm-dependent} generalization bounds for \textsc{SGN} via the concept of algorithmic stability in the sense of \citep{bousquet2002stability}, and show that \textsc{SGN} is uniformly stable under appropriately-chosen damping and learning rate choices, overparameterization and kernel non-degeneracy (see Theorem \ref{thm:stability}). Here, the damping factor $\lambda$ governs a trade-off between optimization and generalization performance. We identify benign cases, where persistence of excitation (i.e., well-conditioned preconditioner across iterations) implies a favorable generalization and optimization performance. Interestingly, our theoretical analysis shows that increasing the model complexity (network width) implies improved generalization bounds in the overparameterized regime, aligning with the empirical observations. On the technical side, stability analysis of \textsc{SGN} is challenging due to its data- and path-dependent preconditioner. We address this by establishing approximate co-coercivity and metric perturbation bounds in the wide-network regime.

\subsection{Related Works}
\paragraph{Convergence of GN in Deep Learning.} Gauss–Newton and its variants are classical tools for nonlinear least squares \citep{nocedal2006numerical,gratton2007approximate,bertsekas1996incremental} and have inspired practical second-order routines in deep learning \citep{liu2025adam, botev2017practical, abreu2025potential}. However, a concrete theoretical understanding of GN methods in deep learning is still in a nascent stage, and there has been a surge of interest recently to analyze the convergence of Gauss-Newton variants in deep learning \citep{cayci2024riemannian, arbel2023rethinking, cai2019gram, zhang2019fast}, following the NTK analyses in \citep{chizat2019lazy, du2018gradient}. These works typically require (i) a lower bound on the NTK/Gram spectrum, and (ii) polynomial overparameterization $m = \Omega(n^q),~q\geq 2$ for convergence bounds. Our optimization results establish non-asymptotic convergence bounds for stochastic Gauss–Newton (i) without assuming NTK conditioning (i.e., data separation), and (ii) without massive overparameterization. On the technical side, to capture stochastic and path-dependent metric in \textsc{SGN}, we devise a new variable-metric analysis in parameter space for Gauss-Newton iterations, while existing works \citep{cai2019gram, zhang2019fast, arbel2023rethinking, cayci2024riemannian} utilize a function space analysis.

\paragraph{Generalization of Gauss-Newton.} The impact of curvature-aware preconditioning on generalization in deep learning has been a focal point of interest \citep{amari2020does, zhang2019fast, arbel2023rethinking}. The implicit bias of Gauss-Newton was studied by \cite{arbel2023rethinking} empirically without any quantitative generalization bounds. The existing generalization bounds are either (i) algorithm-agnostic (e.g., bounds based on Rademacher complexity that solely depend on the hypothesis class rather than the training rule \citep{zhang2019fast}), or (ii) asymptotic comparisons that analyze the risk of the limiting solution via bias–variance decompositions (in ridgeless regression/RKHS) \citep{amari2020does}, and therefore do not yield finite-sample, finite-time bounds that track optimization trajectories in deep networks. Unlike the existing works, we derive generalization bounds for SGN that \textit{track} the optimization path of SGN.

\paragraph{Algorithmic stability.} Uniform stability framework is due to \cite{bousquet2002stability}, and has been improved and refined by \cite{feldman2019high, klochkov2021stability}. Most applications of the algorithmic stability framework in machine learning focus on first-order methods (e.g., SGD), where stability analyses are used for generalization bounds \citep{hardt2016train,lei2020fine,bassily2020stability, hellstrom2025generalization}. However, these results are not applicable for curvature-aware, preconditioned second-order methods, such as \textsc{SGN}, as the parameters are path-dependent (non-Markovian) and evolve under time- and data-dependent metrics, which we address in this work.

\subsection{Notation}
For $\bm{x}\in\bR^d, \rho > 0$, $\cB_2(\bm{x},\rho)=\{\bm{y}\in\bR^d:\|\bm{x}-\bm{y}\|_2\leq \rho\}$. $d_H(\bm{s},\bm{s}')$ denotes the Hamming distance between two vectors $\bm{s},\bm{s'}\in\bR^n$. For any $n\in\bZ_+$, $[n]:=\{1,2,\ldots,n\}$. For any symmetric positive definite matrix $\bm{M}\in\bR^{n\times n}$ and $w\in\bR^n$, $\|\bm{w}\|_{\bm{M}}:=\sqrt{w^\top \bm{M}w}$, and $\lambda_{\min}(\bm{M})$ and $\lambda_{\max}(\bm{M})$ denotes the minimum and maximum eigenvalue of $\bm{M}$, respectively.
\section{Problem Setup}\label{sec:problem-setup}
\subsection{Supervised Learning Problem}
Let $\mathscr{X}\subset \bR^d$ be the input space, $\mathscr{Y}\subset \bR$ be the output space, and $\Z = \mathscr{X}\times\mathscr{Y}$ be the sample space. Let $\cP$ be the sample distribution over $\Z$, and let $\mu(\cdot)=\int_{\mathscr{Y}}\cP(\cdot,dy)$ be the input distribution. Let $\bm{z}:=(\bm{x}, y)\sim\cP$ be an input-output pair. We assume that $\cP$ has a compact support set such that $$\|\bm{x}\|_2\leq 1\quad \mbox{and}\quad |y|\leq 1,$$ almost surely. Given a predictor $\varphi:\bR^d\rightarrow\bR$, the loss at $\bm{z}=(\bm{x},y)$ is denoted as $\ell(\varphi(\bm{x}), \bm{z})$. The goal in supervised learning is to minimize the population risk \begin{equation}\tag{SL}\cR(\varphi):=\bE_{\bm{z}\sim\cP}\left[\ell(\varphi\left(\bm{x}), \z\right)\right].\label{eqn:sl}\end{equation} 
\begin{assumption}\label{assumption:loss}
    For any $\bm{z}\in\mathscr{Z}$, the loss function $y'\mapsto \ell(y',\bm{z})$ is twice-differentiable, $\nu$-strongly convex for some $\nu > 0$, and (locally) $\beta_K$-Lipschitz within any compact $K\subset\bR$ for some $\beta_K>0$ .
\end{assumption}
\noindent Note that the square loss $$\ell(y',\bm{z}) = \frac{1}{2}(y-y')^2$$ satisfies Assumption \ref{assumption:loss} with $\nu = 1$ and $\beta_K=1+\sup_{u\in K}|u|$. Logistic loss under Tikhonov regularization is another example, $$\ell(y',\mathbf z)=\log\bigl(1+e^{-y\,y'}\bigr)+\frac{\lambda}{2}(y')^2,\quad \lambda>0$$ with $\nu = \lambda$ and $\beta_K = 1+\lambda\sup_{u\in K}|u|$.

For a sequence of independent and identically-distributed sequence $\bm{z}_i:=(\bm{x}_i,y_i)\overset{iid}{\sim}\cP,~ i\in[n]$, we denote the training set by $\cS:=\{\bm{z}_i:i\in[n]\}$ and define the empirical risk as $$\widehat{\cR}_{\cS}(\varphi):=\frac{1}{n}\sum_{i=1}^n\ell(\varphi(\bm{x}_i), \bm{z}_i).$$ Given a parametric class $\cF_\Theta:=\{\varphi_\theta:\theta\in\Theta\}$ of predictors, the empirical risk minimization (ERM) is the problem of solving
\begin{equation}\tag{ERM}
    \min_{\theta\in\Theta}~\er(\varphi_\theta).
    \label{eqn:erm}
\end{equation}
A fundamental quantity in solving \eqref{eqn:sl} is the generalization error \citep{hellstrom2025generalization}:
\begin{equation}
    \mathrm{gen}(\varphi;\cS):=\cR(\varphi)-\er(\varphi).
\end{equation}
An empirical risk minimizer with a low generalization error effectively solves \eqref{eqn:sl} by achieving low population risk \citep{hellstrom2025generalization}. For a solution $\widehat{\theta}_\cS$ to \eqref{eqn:erm}, the fundamental challenge is to obtain a bound on $\bE_{\cS\sim\cP^n}[\mathrm{gen}(\varphi_{\widehat{\theta}_{\cS}};\cS)].$ Traditionally, uniform convergence bounds with Rademacher complexity \citep{bartlett2002rademacher} and VC dimension \citep{anthony2009neural} have been used to establish generalization bounds. However, these bounds are \emph{algorithm-agnostic} (i.e., insensitive to the learning algorithm), and therefore fail to capture essential dynamics under overparameterization \citep{zhang2021understanding}.

\subsection{Deep Feedforward Neural Networks}
We consider a deep feedforward neural network of depth $H \geq 1$ and width $m$ with a smooth activation function $\sigma:\bR\rightarrow\bR$, which satisfies $$\|\sigma\|_\infty\leq \sigma_0,~\|\sigma'\|_\infty\leq \sigma_1,\mbox{ and }\|\sigma''\|_\infty\leq \sigma_2.$$ Note that $\tanh$ (with $\sigma_0=1,\sigma_1=2,\sigma_2=2$) and sigmoid function satisfy this condition.

We consider the following model: let $W^{(1)}\in\bR^{m\times d}$ and $W^{(h)}\in\bR^{m\times m}$ for $h=2,3,\ldots,H$, and $$\bm{W}:=(W^{(1)},\ldots,W^{(H)}).$$ Then, given a training input $\bm{x}_j\in\bR^d$, the neural network is defined recursively as
\begin{align*}
    \bm{x}_j^{(h)}(\bm{W})&=\frac{1}{\sqrt{m}}\cdot\vec{\sigma}\Big(W^{(h)}\bm{x}^{(h-1)}_j(\bm{W})\Big),\quad h\in[H],\\
    \varphi(\bm{x}_j;w)&=c^\top\bm{x}_j^{(H)}(\bm{W})
\end{align*}
where $\bm{x}_j^{(0)}(\bm{W})=\bm{x}_j$, $w={\rr{vec}}(\bm{W},c)$ is the parameter vector, and $\vec{\sigma}(z)=[\sigma(z_1)~\ldots~\sigma(z_m)]^\top$. This parameterization, along with $\cO(1/\sqrt{m})$ scaling, is common in the neural tangent kernel literature \citep{du2018gradient, chizat2019lazy}. We note that our analysis extends to general output scaling factors $\mathcal{O}(1/m^\zeta)$ for any $\zeta > 0$ except Corollary \ref{cor:ntk} that necessitates $\zeta =0.5$.

Let $p$ be the number of parameters of the neural network. The following elementary result will be useful in both the generalization and optimization bounds.
\begin{lemma}\label{lemma:continuity}
    For any compact and convex $\cC\subset \bR^p$ with 
    \begin{equation}\label{eqn:radius}
    \sup_{w,w'\in\cC}\|w-w'\|_2 =: r_{\cC},
    \end{equation}
    let $\kappa_0:=\max_{h\in[H]}\frac{\|W_0^{(h)}\|_2}{\sqrt{m}}\mbox{ and }\zeta_0:=\|c_0\|.$ Also, let $\kappa_\cC = \kappa_0+\frac{r_{\cC}}{\sqrt{m}}$ and $\zeta_\cC:=\zeta_0+\frac{r_{\cC}}{\sqrt{m}}$. We have the following (local) Lipschitz continuity results in $\cC$:
    \begin{enumerate}[label=(\roman*)]
        \item For any $H \geq 1$ and $\|\bm{x}\|_2\leq 1$,
        \begin{equation}
            \sup_{w\in\cC}\|\nabla_w\varphi(\bm{x};w)\|_2 \leq \sigma_0 + \zeta_{\cC}\frac{\sigma_0\sigma_1}{\sqrt{m}}\sqrt{H}(\sigma_1\kappa_{\cC})^{H-1}
        \end{equation}
        \item For any $H \geq 1$ and $\|\bm{x}\|_2\leq 1$,
        \begin{align*}
            \sup_{w\in\cC}\|\nabla_w^2\varphi(\bm{x};w)\|_2 &\leq 8(\sigma_2\sigma_0+\sigma_1^{2}\zeta_{\mathcal{C}})\Big[\frac{H\kappa_\cC^{H-1}}{\sqrt{m}}+\Big(\frac{H\kappa_\cC^{H-1}}{\sqrt{m}}\Big)^2\Big]\\
            &=:\rr{Lip}_{\nabla\varphi,\cC}
        \end{align*}
    \end{enumerate}
\end{lemma}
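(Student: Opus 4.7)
The plan is to reduce both bounds to careful applications of the chain rule, after first establishing uniform bounds on the weights and hidden activations throughout $\cC$. For any $w=\mathrm{vec}(\bm{W},c)\in\cC$, the diameter condition $\sup_{w,w'\in\cC}\|w-w'\|_2 = r_{\cC}$ and the triangle inequality yield $\|W^{(h)}\|_2/\sqrt{m}\leq \kappa_\cC$ for every $h\in[H]$ and $\|c\|_2\leq \zeta_\cC$. Combining $\|\sigma\|_\infty\leq \sigma_0$ with the recursion defining $\bm{x}^{(h)}_j(\bm{W})$ gives by induction $\|\bm{x}^{(h)}_j(\bm{W})\|_2\leq \sigma_0$ for $h\geq 1$, while $\vec{\sigma}$ being $\sigma_1$-Lipschitz combined with the $1/\sqrt{m}$ scaling gives the forward-Jacobian bound $\|\nabla_{\bm{x}^{(h-1)}}\bm{x}^{(h)}\|_2\leq \sigma_1\kappa_\cC$. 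These two estimates are the building blocks.

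For part (i), I would split the parameter vector into the head $c$ and the hidden blocks $W^{(h)}$. The $c$-block contributes $\|\bm{x}^{(H)}_j\|_2\leq \sigma_0$, which accounts for the first summand. For each hidden block, backpropagation gives $\nabla_{W^{(h)}}\varphi = \tfrac{1}{\sqrt{m}}\,D^{(h)}\bm{\delta}^{(h)}(\bm{x}^{(h-1)}_j)^\top$, where $D^{(h)}$ is the diagonal of $\sigma'$-activations and $\bm{\delta}^{(h)}$ is obtained by composing $H-h$ forward-Jacobians onto $c$; bounding each factor uniformly on $\cC$ yields $\|\nabla_{W^{(h)}}\varphi\|_F \leq \tfrac{\sigma_0\sigma_1\zeta_\cC}{\sqrt{m}}(\sigma_1\kappa_\cC)^{H-h}$. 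Squaring, summing over $h\in[H]$ via the crude bound $\sum_{h=1}^H(\sigma_1\kappa_\cC)^{2(H-h)}\leq H(\sigma_1\kappa_\cC)^{2(H-1)}$, taking square roots, and combining with the $c$-block through an $\ell_2$ triangle inequality produces the claimed $\sqrt{H}(\sigma_1\kappa_\cC)^{H-1}$ factor.

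For part (ii), I would differentiate each of the above blocks a second time. The Hessian splits into three families of blocks: $(W^{(h)},W^{(k)})$ with $h\neq k$, diagonal $(W^{(h)},W^{(h)})$ blocks, and mixed blocks involving $c$. Every second-order block produces exactly two Jacobian-chain factors, together covering at most $H-1$ layers of $\sigma_1\kappa_\cC$, together with one extra $\sigma_2/\sqrt{m}$ curvature factor (when one $\vec{\sigma}''$ appears) or a $\sigma_1^2\zeta_\cC/\sqrt{m}$ factor (when two $\vec{\sigma}'$'s meet) and a residual $1/\sqrt{m}$ from the layer scaling. The block-Frobenius bound $\|\nabla^2_w\varphi\|_2^2\leq \sum_{h,k}\|\nabla^2_{W^{(h)}W^{(k)}}\varphi\|_2^2$ (plus the $c$-mixed blocks) then gives the linear piece $H\kappa_\cC^{H-1}/\sqrt{m}$, whereas those cross-blocks in which \emph{two} independent $1/\sqrt{m}$ factors survive contribute the squared piece $(H\kappa_\cC^{H-1}/\sqrt{m})^2$; the combined constant $8(\sigma_2\sigma_0+\sigma_1^2\zeta_\cC)$ absorbs the two curvature contributions together with the constants from summing over the $O(H^2)$ blocks.

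The main obstacle is precisely this Hessian bookkeeping: one has to verify that every differentiation step strips off exactly one forward-Jacobian factor, so that the total number of $\sigma_1\kappa_\cC$ factors remains $H-1$ rather than $H$, and that the number of unabsorbed $1/\sqrt{m}$ factors decides whether a block contributes to the linear or to the squared term. The clean telescoping identity $\|\prod_{j=h+1}^H \nabla_{\bm{x}^{(j-1)}}\bm{x}^{(j)}\|_2\leq (\sigma_1\kappa_\cC)^{H-h}$ established in the setup will keep the combinatorics at most linear in $H$ per block, yielding the stated form after summation.
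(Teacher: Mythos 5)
Your part~(i) is essentially the paper's argument: the same forward bound $\|\bm{x}^{(h)}\|_2\le\sigma_0$, the same layer-to-layer Jacobian bound $\sigma_1\kappa_\cC$, the backpropagated vectors with $\|u_h\|_2\le(\sigma_1\kappa_\cC)^{H-h}\zeta_\cC$, and the $\sqrt{H}$ factor from summing squared block norms, so nothing more needs to be said there. For part~(ii) you take a genuinely different, though closely related, route: you assemble the Hessian block by block over $(W^{(h)},W^{(k)})$ and $(c,W^{(h)})$ and combine via a block-norm inequality, whereas the paper never forms two-index blocks at all; it bounds the Fr\'echet derivative of each gradient block along an arbitrary unit direction $\delta w$, using the three perturbation estimates $\|\delta\bm{x}^{(t)}\|_2$, $\|\delta\Delta_h\|_2$, $\|\delta u_h\|_2$, which aggregates all column blocks at once and keeps the combinatorics linear in $H$ per row block. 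Your route can be made to work, but one bookkeeping claim as stated is incorrect, and it is exactly the place where care is required: you assert that each second-order block carries at most $H-1$ factors of $\sigma_1\kappa_\cC$, yet an off-diagonal $W$--$W$ block with both layer indices near the input carries \emph{two} Jacobian chains and hence up to $2(H-1)$ such factors; this doubled power is precisely what the squared term $\big(H\kappa_\cC^{H-1}/\sqrt{m}\big)^2$ in the stated bound accounts for, and it cannot be explained, as you do, by two surviving $1/\sqrt{m}$ factors alone. With the counting corrected (two chains of total length at most $2(H-1)$, one curvature factor $\sigma_2\sigma_0$ or $\sigma_1^2\zeta_\cC$, and one or two $1/\sqrt{m}$ factors depending on whether the $c$-block is involved), the $\ell_2$ combination over the $\mathcal{O}(H^2)$ blocks does fit inside the stated linear-plus-squared form, at the cost of looser constants than the advertised $8(\sigma_2\sigma_0+\sigma_1^2\zeta_\cC)$; the paper's directional-derivative bookkeeping buys a shorter argument, while your block decomposition makes the provenance of the $1/\sqrt{m}$ versus $1/m$ contributions more explicit.
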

Under Assumption \ref{assumption:loss}, we define $\Lip_{\ell,\cC}:=\beta_K$ where $K=[0,\sup_{w\in\cC,\bm{x}\in\cB_2(0,1)}|\varphi(\bm{x},w)|]$.

\section{Stochastic Gauss-Newton for Empirical Risk Minimization}\label{sec:sgn}
Let $\cC\subset\bR^p$ be a closed and convex subset. For a fixed batch-size $B\leq n$, let $\{I_k:k\in\bN\}$ be an independent and identically distributed (i.i.d.) random process such that $\bP(I_k=\cM)=B/n$ for any $\cM\subset[n]$ with $|\cM|=B$. Let $$\bm{J}_k(w):=[\nabla_w^\top \varphi(\bm{x}_j;w)]_{j\in I_k}\in\bR^{B\times p},~k\in\bN$$ be the Jacobian of the predictor, $$G_k(w):=[\ell'(\varphi(\bm{x}_j;w);\bm{z}_j)]_{j\in I_k}\in\bR^B,$$ be the gradient of the loss function, and $$\Phi_k(w):=\sum_{j\in I_k}\ell(\varphi(\bm{x}_j;w);\bm{z}_j)$$ be the loss evaluated at $I_k$. Then, the gradient is $$\Psi_k(w) = \nabla \Phi_k(w)= \bm{J}_k^\top (w)G_k(w).$$ Given $\lambda, \alpha > 0$, we consider the following preconditioner, which is an incremental version of the Gauss-Newton preconditioner under stochastic subsampling \citep{bertsekas1996incremental, martens2020new}: $$\bm{H}_k=\alpha \sum_{t=0}^k\bm{J}_t^\top(w_t)\bm{J}_t(w_t)+\lambda\bm{I}.$$ For an initial parameter $w_0\in\cC$, for any $k=0,1,\ldots$, the (stochastic) Gauss-Newton update is
\begin{align}
    \begin{aligned}
    u_k &= w_k-\eta\bm{H}_k^{-1}\Psi_k(w_k),\\
    w_{k+1} &= \pi_{\cC}^{\bm{H}_k}(u_k),
    \end{aligned}
    \label{eqn:sgn}
\end{align}
where $\pi_{\cC}^{\bm{H}_k}(w) = \arg\min_{u\in\cC}\|u-w\|_{\bm{H_k}}^2$.

\begin{remark}\normalfont 
    The Gauss-Newton method can be interpreted in multiple ways, in both Euclidean and Riemannian geometries \citep{nocedal2006numerical, cayci2024riemannian}. Deviating from the classical analysis approach that interprets Gauss-Newton as an approximate Newton method (see \citep{nocedal2006numerical}), we adopt a variable-metric approach building on \cite{duchi2018introductory, hazan2007logarithmic}:
    \begin{equation*}
        w_{k+1} = \arg\min_{w\in\cC}\left\{\Psi^\top_k(w_k)(w-w_k)+\frac{1}{\eta}\cD_k(w,w_k)\right\},
    \end{equation*}
    where the Bregman divergence is $$\cD_k(w,w_k)=\frac{1}{2}(w-w_k)^\top \bm{H}_k(w-w_k),~k\in\bN.$$ Note that $\cD_k$ is akin to the (generalized) Mahalanobis distance in mirror descent, yet it is more complicated as it is time- and path-dependent to capture curvature information.
        Note that $\bm{H}_k$ is incrementally updated using the Gram matrices $\bm{J}^\top_k(w_k)\bm{J}_k(w_k)$ throughout the optimization path, reminiscent of the adaptive gradient methods such as AdaGrad \citep{duchi2011adaptive} and Shampoo \citep{gupta2018shampoo, morwani2025a}; however, \textsc{SGN} uses an approximation $\bm{J}_k^\top(w_k)\bm{J}_k(w_k)$ of the Hessian $\nabla_w^2\er(\bv(w_k))$ instead of gradient outer products, leading to significantly different dynamics. 
    \end{remark}
\begin{remark}[Beyond quadratic loss]\normalfont

    The stochastic Gauss-Newton preconditioner \( \bm{H}_k \) is derived for the squared loss \( \ell(f,\bm{z})=\tfrac12(f-y)^2 \). Our analysis in this paper applies to a broader class of loss functions that satisfy Assumption~\ref{assumption:loss}, which includes the squared loss (\(\nu=1\)). Extending the theory to more general preconditioners that incorporate \( \nabla_f^2 \ell(f,\bm{z}) \big|_{f=\varphi(\bm{x};\bm{w})} \) is an interesting direction for future work.
\end{remark}

\section{Finite-Time Optimization Bounds for SGN}\label{sec:opt}
In this section, we establish finite-time and finite-width optimization bounds for the \textsc{SGN} for \eqref{eqn:erm} with deep networks. In particular, we prove (i) global near-optimality within $\cC$, and (ii) global optimality in the neural tangent kernel regime for the \textsc{SGN} for wide neural networks.
\subsection{Finite-Time Error Bounds for \textsc{SGN}}

In the following, we provide the main optimization results for \textsc{SGN}. Recall that $\cC\subset\bR^p$ is a compact and convex parameter set with diameter $r_{\cC}$. Given a training set $\cS$ and parameter $w\in\bR^p$, we denote the predictions as $\bm{\varphi}(w)=[\varphi(\bm{x}_j;w)]_{j\in[n]}$.

\begin{theorem}[Finite-Time Bounds for \textsc{SGN}]\label{thm:optimization}\normalfont
    Let $$\cE(w_k):=\er(\bm{\varphi}(w_k))-\inf_{w\in\cC}\er(\bm \varphi(w))$$ be the optimality gap under \textsc{SGN}, and
    \begin{equation*}
        \gamma := \frac{\lambda}{\alpha\cdot \Lip_{\varphi,\cC}^2\cdot B}\quad\mbox{and}\quad \xi := \frac{\eta}{\alpha}.
    \end{equation*}
    Then, with any choice of $(\eta,\alpha,\lambda)$ such that 
    $\xi \geq \frac{2}{\nu},$
    \textsc{SGN} achieves
    \begin{align*}
        \frac{1}{k}\sum_{t=0}^{k-1}\bE[\cE(w_t)]&\leq \frac{1}{k}\Bigg[\frac{\Lip_{\varphi,\cC}^2r_{\cC}^2(\gamma+2)}{\xi}+ \xi\Lip_{\ell,\cC}^2\Big(\bE\log\frac{\det\bm{H}_k}{\det\bm{H}_0}+\frac{1}{\gamma+1}\Big)\Bigg]+ \frac{r_{\cC}^4\Lip_{\nabla\varphi,\cC}^2}{\xi}+\Lip_{\ell,\cC}r_{\cC}^2\Lip_{\nabla\varphi,\cC}\\
        &=: \kappa_{k,m}
    \end{align*}
    for any $k,m\in\bZ_+$. Let $\widehat{w}_k:=\frac{1}{k}\sum_{t=0}^{k-1}w_t$ be the Polyak-Ruppert average under SGN. Then,
    \begin{equation*}
        \bE[\cE(\widehat{w}_k)] \leq 4\cdot \kappa_{k,m} + 2\cdot r_{\cC}^4\cdot \underbrace{\Lip_{\nabla\varphi,\cC}^2}_{\lesssim 1/m}
    \end{equation*}
    for any $k \in\bZ_+$.
\end{theorem}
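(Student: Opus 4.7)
The plan is a variable-metric mirror-descent analysis in parameter space, centered on the Lyapunov function $\cD_k(w^\star, w_k) = \tfrac{1}{2}\|w^\star - w_k\|_{\bm{H}_k}^2$ for a comparator $w^\star \in \arg\min_{w\in\cC} \er(\bv(w))$. The crucial observation is that although $\er\circ\bv$ is non-convex in $w$, the empirical risk $\er$ is $\nu$-strongly convex in the prediction vector $\bv$. This lets one linearize the network via $\varphi(\bm{x}_j;w) \approx \varphi(\bm{x}_j;w_k) + \nabla_w\varphi(\bm{x}_j;w_k)^\top(w-w_k)$ at each iterate, run a convex-analysis argument on the linearized surrogate, and absorb the nonlinearity through the second-order Lipschitz constant $\Lip_{\nabla\varphi,\cC} = \cO(1/\sqrt m)$ supplied by Lemma~\ref{lemma:continuity}.

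The per-iterate descent combines two \emph{path-wise} inequalities. First, optimality of the Bregman projection $w_{k+1} = \pi_\cC^{\bm{H}_k}(u_k)$ together with $u_k = w_k - \eta\bm{H}_k^{-1}\Psi_k(w_k)$ yields, by the standard AdaGrad-style identity,
\begin{equation*}
\eta\,\Psi_k(w_k)^\top(w_k - w^\star) \leq \cD_k(w^\star, w_k) - \cD_k(w^\star, w_{k+1}) + \tfrac{\eta^2}{2}\|\Psi_k(w_k)\|_{\bm{H}_k^{-1}}^2.
\end{equation*}
Second, $\nu$-strong convexity of $\ell(\cdot;\bm{z})$ applied to the linearization, with Taylor remainder controlled by $|\varphi(\bm{x}_j;w^\star) - \varphi(\bm{x}_j;w_k) - \nabla_w\varphi^\top(w^\star - w_k)| \leq \tfrac{1}{2}\Lip_{\nabla\varphi,\cC}\|w^\star - w_k\|^2$ and the elementary split $(a+R)^2 \geq \tfrac{1}{2}a^2 - R^2$ to isolate the leading linear term, gives
\begin{equation*}
\Phi_k(w_k) - \Phi_k(w^\star) \leq \Psi_k(w_k)^\top(w_k - w^\star) - \tfrac{\nu}{4}\|w^\star - w_k\|_{\bm{J}_k^\top(w_k)\bm{J}_k(w_k)}^2 + B\,\epsilon_{\rr{lin}},
\end{equation*}
with $\epsilon_{\rr{lin}} = \cO(r_\cC^2\Lip_{\ell,\cC}\Lip_{\nabla\varphi,\cC} + r_\cC^4\Lip_{\nabla\varphi,\cC}^2)$.

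Summing these and telescoping requires handling the time-varying metric. The identity $\cD_k(w^\star, w_k) - \cD_{k-1}(w^\star, w_k) = \tfrac{\alpha}{2}\|w^\star - w_k\|_{\bm{J}_k^\top\bm{J}_k}^2$ turns the telescoped Bregman sum into $\cD_0(w^\star,w_0) + \tfrac{\alpha}{2}\sum_{k\geq 1}\|w^\star-w_k\|_{\bm{J}_k^\top\bm{J}_k}^2$, and the standing assumption $\xi \geq 2/\nu$ (equivalently $\alpha/2 \leq \eta\nu/4$) is precisely what allows the strong-convexity curvature term from the one-step bound to absorb these positive metric-growth contributions. The sum of preconditioned gradient norms is controlled via the AdaGrad log-determinant lemma: $G_k G_k^\top \preceq B\Lip_{\ell,\cC}^2 I_B$ gives $\|\Psi_k(w_k)\|_{\bm{H}_k^{-1}}^2 \leq B\Lip_{\ell,\cC}^2\,\rr{tr}(\bm{H}_k^{-1}\bm{J}_k^\top\bm{J}_k)$, and $\alpha\sum_{k\geq 1}\rr{tr}(\bm{H}_k^{-1}(\bm{H}_k - \bm{H}_{k-1})) \leq \log(\det\bm{H}_{k-1}/\det\bm{H}_0)$, while an SVD-based bound on the rank-$B$ initial step $\|\Psi_0\|_{\bm{H}_0^{-1}}^2 \leq B\Lip_{\ell,\cC}^2/[\alpha(1+\gamma)]$ yields the $\tfrac{1}{\gamma+1}$ contribution. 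Combined with the initialization bound $\bE[\cD_0(w^\star,w_0)] \leq \tfrac{r_\cC^2}{2}(\alpha B\Lip_{\varphi,\cC}^2 + \lambda)$, which supplies the $\Lip_{\varphi,\cC}^2 r_\cC^2(\gamma+2)/\xi$ term after substituting $\eta=\alpha\xi$ and $\lambda=\alpha B\Lip_{\varphi,\cC}^2\gamma$, one takes conditional expectation using $\bE[\Phi_k(w_k)\mid \cF_{k-1}] = B\,\er(\bv(w_k))$ (valid since $w_k$ is $\cF_{k-1}$-measurable and $I_k \perp \cF_{k-1}$) and divides by $\eta B k$ to obtain $\kappa_{k,m}$.

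For the Polyak--Ruppert average, a second-order Taylor expansion of $\bv$ about $\widehat w_k$ gives $\bv(w_t) = \bv(\widehat w_k) + \nabla\bv(\widehat w_k)(w_t - \widehat w_k) + R_t$ with $\|R_t\|_\infty \leq \tfrac{1}{2}\Lip_{\nabla\varphi,\cC}r_\cC^2$; averaging in $t$ makes the linear term vanish, so $\|\bv(\widehat w_k) - \tfrac{1}{k}\sum_t\bv(w_t)\|_\infty = \cO(r_\cC^2\Lip_{\nabla\varphi,\cC})$. Jensen's inequality applied to $\er$ (convex in the prediction vector) combined with a smoothness-based quadratic refinement of $\er$ then transfers the bound from $\widehat w_k$ to the trajectory average, with the extra $2r_\cC^4\Lip_{\nabla\varphi,\cC}^2$ emerging from the quadratic correction. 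The main obstacle throughout is the triple coupling of $\bm{H}_k$ to the data, time, and stochastic path --- in particular its correlation with $\Psi_k(w_k)$ through the shared batch $I_k$ --- which forces every core estimate to be derived path-wise, with expectations taken only after telescoping. The sharp threshold $\xi \geq 2/\nu$ then emerges precisely from matching the metric-growth penalty $\tfrac{\alpha}{2}\|w^\star-w_k\|_{\bm{J}_k^\top\bm{J}_k}^2$ against the usable curvature $\tfrac{\nu}{4}\|w^\star-w_k\|_{\bm{J}_k^\top\bm{J}_k}^2$ liberated by strong convexity after removing the Taylor remainder.
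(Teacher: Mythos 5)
Your proposal is correct and follows essentially the same route as the paper: the variable-metric Lyapunov function $\|w-\bar{w}\|_{\bm{H}_k}^2$ in parameter space, $\nu$-strong convexity of $\ell$ in the prediction vector combined with the $\Lip_{\nabla\varphi,\cC}$ linearization error, the Klein/log-determinant trace bound plus Sherman--Morrison--Woodbury for the preconditioned gradient sum (yielding the $1/(\gamma+1)$ term), and the threshold $\xi\geq 2/\nu$ arising from matching the metric-growth increments against the curvature released by strong convexity. The remaining differences are bookkeeping: you cancel the metric growth against the Jacobian quadratic form $\|w^\star-w_k\|_{\bm{J}_k^\top\bm{J}_k}^2$ whereas the paper rewrites the metric-growth term in function-difference form (which is what gives the subdominant $r_{\cC}^4\Lip_{\nabla\varphi,\cC}^2$ term its $1/\xi$ rather than a $\nu$ prefactor --- to recover that exact constant, spend only the fraction $2/(\xi\nu)$ of the curvature in your split), and you treat the Polyak--Ruppert average via a Taylor expansion about $\widehat{w}_k$ instead of the paper's linearization at $w_0$; both deliver the stated bound.
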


\begin{remark}\normalfont
Note that if $\xi \geq 2$ is satisfied, Theorem \ref{thm:optimization} and Lemma \ref{lemma:continuity} imply a convergence rate
\begin{equation*}
    \bE\left[\cE\left(\frac{1}{t}\sum_{t<k}w_t\right)\right]\lesssim \frac{1}{k}\left(\frac{\gamma}{\xi}+\frac{\xi}{\gamma}+\xi\bar{r}_k\log(Bk)\right)+\frac{1}{m\xi}+\frac{1}{\sqrt{m}},
\end{equation*}
where $\bar{r}_k\leq p$ is characterized in Proposition \ref{prop:continuity}. Unlike \citep{cai2019gram, cayci2024riemannian, zhang2019fast}, Theorem \ref{thm:optimization} does \emph{not} require positive definiteness of the kernel $\bE[\bm{J}_0(w_0)\bm{J}_0^\top(w_0)]$, thus the bounds holds without massive overparameterization $m=\mathrm{poly}(n)$. Also, the bound does not depend on the Lipschitz smoothness of $\ell$, suggesting robustness of SGN in regimes with ill-conditioned kernels.
\end{remark}
The complete proof of Theorem \ref{thm:optimization} is provided in Appendix \ref{app:opt}. On the technical side, our approach deviates significantly from the existing analyses \citep{nocedal2006numerical, cayci2024riemannian, arbel2023rethinking, zhang2019fast}: in order to capture both the stochasticity and the incremental nature of the preconditioner along the optimization path, we analyze the \textsc{SGN} method in parameter space via a time-varying Lyapunov function, $\mathscr{L}_k(w):=\|w-\bar{w}\|_{\bm{H}_k}^2$, bringing analytical tools from adaptive gradient methods \citep{duchi2018introductory, hazan2007logarithmic}, instead of function space analyses \citep{arbel2023rethinking, zhang2019fast, cai2019gram, cayci2024riemannian}. Crucially, the Jacobian-based preconditioner $\bm{H}_k$ necessitates a novel analysis beyond standard adaptive methods.

In the following, we characterize the $\log \det$ term.
\begin{proposition}\label{prop:continuity}
    For any compact and convex $\cC\subset \bR^p$,
        \begin{equation*}
            \log\frac{\det\bm{H}_k}{\det \bm{H}_0}\leq p\log\left( 1+\frac{\alpha}{\lambda}(k+1)\sqrt{B}\rr{Lip}_{\varphi,\cC}\right),
        \end{equation*}
        for all $k\in\bN$ almost surely. Furthermore, let 
        \begin{equation}\label{eqn:intrinsic-rank}
        \bar{\Sigma}_k:=\frac{1}{k}\sum_{t=0}^{k-1}\bE[\Sigma(w_t)] ~\mbox{where}~\Sigma(w):=\frac{1}{n}\sum_{j=1}^n\nabla\varphi(\bm{x}_j;w)\nabla^\top\varphi(\bm{x}_j;w).
        \end{equation}
        Then, for the intrinsic rank $\bar{r}_k:=\mathrm{rank}(\bar{\Sigma}_k) \leq p$, we have
        \begin{equation*}
            \bE\log \frac{\det\bm{H}_k}{\det \bm{H}_0} \leq \bar{r}_k\Big(\log(k+1) + \gamma^{-1}+\frac{\Lip_{\varphi,\cC}^2}{\bar{r}_k}\Big) \lesssim \bar{r}_k\log k.
        \end{equation*}
    
\end{proposition}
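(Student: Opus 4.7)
The plan is to apply the AM--GM inequality to the eigenvalues of $\bm{H}_k$. Since $\log\det\bm{H}_k=\sum_{i=1}^p\log\lambda_i(\bm{H}_k)$ and $\log$ is concave, Jensen's inequality on the uniform distribution over the eigenvalues gives $\log\det\bm{H}_k\leq p\log(\mathrm{tr}(\bm{H}_k)/p)$. Now the trace factors nicely through the definition of $\bm{H}_k$:
\begin{equation*}
\mathrm{tr}(\bm{H}_k)=p\lambda+\alpha\sum_{t=0}^k\|\bm{J}_t(w_t)\|_F^2\leq p\lambda+\alpha(k+1)B\,\Lip_{\varphi,\cC}^2,
\end{equation*}
where the inequality uses Lemma~\ref{lemma:continuity}(i) to bound each row of $\bm{J}_t$ by $\Lip_{\varphi,\cC}$ pointwise on $\cC$. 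Finally, $\bm{H}_0\succeq\lambda\bm{I}$ gives $\det\bm{H}_0\geq\lambda^p$, and combining with the trace bound yields the stated $p\log(\cdot)$ bound for the log-determinant ratio.

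\textbf{Part (ii): Expected bound via intrinsic rank.} The plan proceeds in three steps. First I would use concavity of $\log\det$ on positive-definite matrices to apply Jensen in the ``good'' direction: $\bE\log\det\bm{H}_k\leq\log\det\bE\bm{H}_k$, while keeping the pointwise lower bound $\log\det\bm{H}_0\geq p\log\lambda$ (from $\bm{H}_0\succeq\lambda\bm{I}$) so that $\bE\log(\det\bm{H}_k/\det\bm{H}_0)\leq\log\det\bE\bm{H}_k-p\log\lambda$. The tower property together with the sampling identity $\bE[\bm{J}_t^\top(w_t)\bm{J}_t(w_t)\mid w_t]=B\,\Sigma(w_t)$, which follows from $\bP(j\in I_t)=B/n$, produces $\bE\bm{H}_k=\lambda\bm{I}+\alpha B\bar{A}_k$ with $\bar{A}_k:=\sum_t\bE[\Sigma(w_t)]$. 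Second, I would perform the rank reduction: since $\bar{A}_k$ is a sum of PSD matrices whose ranges are contained in the range of $\bar\Sigma_k$, we have $\mathrm{rank}(\bar{A}_k)=\bar{r}_k$, so $\bE\bm{H}_k$ has exactly $p-\bar{r}_k$ eigenvalues equal to $\lambda$ and $\bar{r}_k$ eigenvalues of the form $\lambda+\alpha B\mu_i$ with $\mu_i>0$. Therefore
\begin{equation*}
\log\det\bE\bm{H}_k-p\log\lambda=\sum_{i=1}^{\bar{r}_k}\log\!\left(1+\frac{\alpha B\mu_i}{\lambda}\right).
\end{equation*}
Third, applying concavity of $\log$ across these $\bar{r}_k$ nonzero eigenvalues and using $\mathrm{tr}(\bar{A}_k)=\sum_i\mu_i\leq(k+1)\Lip_{\varphi,\cC}^2$ (from $\mathrm{tr}\,\Sigma(w)\leq\Lip_{\varphi,\cC}^2$) yields $\sum_i\log(1+\alpha B\mu_i/\lambda)\leq\bar{r}_k\log(1+(k+1)/(\bar{r}_k\gamma))$ after substituting the definition $\gamma=\lambda/(\alpha\Lip_{\varphi,\cC}^2 B)$. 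The final algebraic step uses $\log(1+x)\leq\log x+1/x$ (with $x=(k+1)/(\bar r_k\gamma)$) combined with $-\log(\bar r_k\gamma)\leq 1/(\bar r_k\gamma)$ to split the bound into $\bar{r}_k\log(k+1)+\bar{r}_k\gamma^{-1}$ plus a lower-order residual absorbed into the $\Lip_{\varphi,\cC}^2$ term stated in the proposition.

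\textbf{Main obstacle.} The principal technical point is the rank reduction: $\bm{H}_k$ itself will generically be full rank (so the deterministic bound of part (i) must carry the ambient dimension $p$), yet after Jensen the expected matrix $\bE\bm{H}_k$ depends only on $\bar\Sigma_k$, whose rank may be much smaller than $p$ in overparameterized networks. The subtle part is that Jensen must be applied asymmetrically: concavity gives an \emph{upper} bound on $\bE\log\det\bm{H}_k$ via $\log\det\bE\bm{H}_k$, but for $\bm{H}_0$ we need a \emph{lower} bound on $\bE\log\det\bm{H}_0$, which is supplied pointwise by the damping $\bm{H}_0\succeq\lambda\bm{I}$ rather than by Jensen in the wrong direction. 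The rest is bookkeeping: the trace bound on $\bar{A}_k$, AM--GM restricted to the nonzero spectrum, and the elementary $\log(1+x)\leq\log x+1/x$ manipulation that splits the $k$-dependence logarithmically from the $\gamma^{-1}$ constant.
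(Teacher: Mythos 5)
Your strategy coincides with the paper's in all essential respects: for part (i), an AM--GM/trace argument over the spectrum together with $\det\bm{H}_0\geq\lambda^p$ (the paper instead works with the nonzero eigenvalues of $\bm{S}_k=\sum_{t\le k}\bm{J}_t^\top(w_t)\bm{J}_t(w_t)$ and then uses $\mathrm{rank}(\bm{S}_k)\le p$, which lands on the same expression); and for part (ii), Jensen through concavity of $\log\det$, the sampling identity $\bE[\bm{J}_t^\top(w_t)\bm{J}_t(w_t)\mid w_t]=B\,\Sigma(w_t)$, restriction to the nonzero spectrum of the averaged covariance, and the trace bound $\rr{tr}\,\Sigma(w)\le\Lip_{\varphi,\cC}^2$. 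That your part (i) constant reads $B\Lip_{\varphi,\cC}^2/p$ rather than $\sqrt{B}\Lip_{\varphi,\cC}$ is the same discrepancy the paper's own derivation has with its statement, and your claim that the ranges of all summands of $\bar{A}_k=\sum_{t=0}^{k}\bE[\Sigma(w_t)]$ lie in the range of $\bar\Sigma_k$ is not justified for the $t=k$ term (the correct object is $\bar\Sigma_{k+1}$), but this off-by-one is already present in the paper's statement/proof, so neither point is attributable to you.

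The genuine problem is the last algebraic step of part (ii). From your (correct) intermediate bound $\bar{r}_k\log\bigl(1+\tfrac{k+1}{\gamma\bar{r}_k}\bigr)$ you apply $\log(1+x)\le\log x+1/x$ with $x=(k+1)/(\bar{r}_k\gamma)$, which produces the residual $\bar{r}_k\cdot\tfrac{\bar{r}_k\gamma}{k+1}=\tfrac{\bar{r}_k^{2}\gamma}{k+1}$, and you assert this is ``absorbed into the $\Lip_{\varphi,\cC}^2$ term.'' That absorption fails in general: it requires $k+1\gtrsim \bar{r}_k^{2}\gamma/\Lip_{\varphi,\cC}^2$, whereas the proposition is claimed for every $k$, $\bar{r}_k$ can be as large as $p$, and in the very regimes where the proposition is used (e.g.\ the remark following Corollary~\ref{cor:worst-case}, where $\lambda$ is a constant and $\alpha=1/(Bk)$, so $\gamma=\Theta(k)$) the residual is of order $\bar{r}_k^{2}$, which is controlled neither by $\Lip_{\varphi,\cC}^2$ nor by $\bar{r}_k\log k$ once $\bar{r}_k\gg\log k$. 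The fix is the paper's multiplicative splitting, which keeps the $1$'s: $1+\tfrac{k+1}{\gamma\bar{r}_k}\le\bigl(1+\tfrac{k+1}{\gamma\Lip_{\varphi,\cC}^2}\bigr)\bigl(1+\tfrac{\Lip_{\varphi,\cC}^2}{\bar{r}_k}\bigr)$ (i.e.\ $1+ab\le(1+a)(1+b)$); then $\bar r_k\log$ of the second factor is at most $\Lip_{\varphi,\cC}^2$ by $\log(1+u)\le u$, and the first factor gives $\log(k+1)+\tfrac{1}{\gamma\Lip_{\varphi,\cC}^2}$, yielding the stated $\bar{r}_k\bigl(\log(k+1)+\gamma^{-1}+\Lip_{\varphi,\cC}^2/\bar{r}_k\bigr)$ with no uncontrolled remainder. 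With that one replacement your argument is complete and essentially identical to the paper's.
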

\noindent The proof of Proposition \ref{prop:continuity} can be found in Appendix \ref{app:gen}.
\begin{remark}
    Proposition \ref{prop:continuity} and Theorem \ref{thm:optimization} together imply that SGN has an iteration complexity $\widetilde{\mathcal{O}}\left(\bar{r}_k/\epsilon\right)$, and in the worst case scenario, the iteration complexity is $\widetilde{\mathcal{O}}(p/\epsilon)$. In structured problems with low intrinsic rank $\bar{r}_k \ll p$, SGN achieves improved iteration complexity.
\end{remark}

\subsection{Global Near-Optimality in the Neural Tangent Kernel Regime}
Theorem \ref{thm:optimization} indicates that \textsc{SGN} achieves near-optimality within the parameter set $\cC$. The approximation error in $\cC$ is $\inf_{w\in\cC}\er(\bm{\varphi}(w))-\inf_{\bm f\in\bR^n}\er(\bm f)$. In this subsection, we show that for sufficiently large $\cC$, \textsc{SGN} can attain the \emph{globally} optimal predictor for $\er$ up to an arbitrarily small approximation error.

First, we define a rich function class $\cF_{\textsc{ntk}}$ that corresponds to the infinite-width limit of shallow and wide neural networks \citep{cayci2025convergence, Ji2020Polylogarithmic}.
\paragraph{\textbf{NTK Function Class.}} Let
\begin{align*}
\cH := \Big\{(v_c,v_W):\bR^{d+1}\rightarrow \bR\times\bR^d: \sup_{\omega\in\bR^{d+1}}|v_c(\omega)|\leq \bar{v}_c,\sup_{\omega\in\bR^{d+1}}|v_W(\omega)|\leq \bar{v}_W\Big\}.
\end{align*}
Then, for $\bm{v}:=(v_c,v_W)$, we define
\begin{equation*}
    \cF_{\textsc{ntk}} := \{\bm{x}\mapsto \bE_{\omega\sim\cN(0,\bm{I}_{d+1})}\langle\bm{v}(\omega),\phi(\bm{x};\omega)\rangle:\bm{v}\in\cH\},
\end{equation*}
where $$\phi(x;\omega)=\left(\sigma(\langle W,\bm{x}\rangle),~c\bm{x} \sigma'(\langle W, \bm{x}\rangle)\right)$$ is the random feature for $\omega=(c,W)\in\bR^{d+1}$. The completion of $\cF_{\textsc{ntk}}$ is the reproducing kernel Hilbert space (RKHS) of the NTK for a single hidden-layer network \citep{cayci2025convergence, Ji2020Polylogarithmic}, which is dense in the space of continuous functions on a compact set \citep{Ji2020Neural}. 

The following result for single hidden-layer network shows that $f^\star\in\cF_{\textsc{ntk}}$ can be learned by \textsc{SGN} with high probability over the random initialization.

\begin{corollary}[Near-optimality in $\cF_{\textsc{ntk}}$]
    Assume that $$y_j = f^\star(\bm{x}_j),\quad j\in[n],$$ for some $f^\star \in\cF_{\textsc{ntk}}$. Consider the random initialization $w_0=[u_0^{(i)}]_{i\in[m]}$, where 
    \begin{align}
    \begin{aligned}
        &(c_0^{(i)},W_0^{(i)})\overset{iid}{\sim}\cN(0, \bm{I}_{d+1}),\\
        &c_0^{(i+m/2)}=-c_0^{(i)}\quad\mbox{and}\quad W_0^{(i+m/2)}=W_0^{(i)}
        \end{aligned}
        \label{eqn:initialization}
    \end{align}
    for $i=1,2,\ldots,m/2$. For any $\delta \in (0, 1]$, let $$\rho_{\bm{\nu}}:=\|(\bar{v}_c,\bar{v}_W)\|_2,$$ and $\cC := \cB_2\big(w_0, \rho_{\bm{\nu}}\big)\subset\bR^{m(d+1)}.$
    \label{cor:ntk}
\noindent Then, with probability $1-\delta$ over the random initialization, \textsc{SGN} with $\xi=\eta/\alpha \geq 2/\nu$ and $\lambda=\gamma \alpha \Lip_{\varphi,\cC}^2B$ achieves
\begin{equation*}
    \bE_0\er\left(\bm \varphi(\widehat{w}_k)\right) \lesssim C_{n,\delta,\bm{\nu}}\Big(\frac{\bar{r}_k\log k}{k}+\frac{1}{\sqrt{m}}\Big),
\end{equation*}
where $\widehat{w}_k:=\frac{1}{k}\sum_{k=0}^{t-1}w_k$ is the average iterate, $C_{n,\delta,\bm{\nu}}=\rr{poly}\big(\log\left(n/\delta\right),\|\bm{\nu}\|_2\big)$, and $\bE_0[\cdot]:=\bE[\cdot|w_0]$.
\end{corollary}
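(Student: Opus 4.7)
The plan is to combine Theorem~\ref{thm:optimization} with a random-feature approximation of $f^\star$ near initialization, and verify that the data-dependent prefactors remain $\mathcal{O}(1)$ on a high-probability event over the Gaussian initialization. I would start by decomposing
\[
\bE_0\,\er(\bv(\widehat w_k))\;=\;\bE_0\,\cE(\widehat w_k)\;+\;\inf_{w\in\cC}\er(\bv(w)).
\]
Theorem~\ref{thm:optimization} together with the $\log\det$ estimate of Proposition~\ref{prop:continuity} already controls the first summand by $\mathcal{O}\big(\tfrac{1}{k}(\xi^{-1}+\xi\bar r_k\log k)+\tfrac{1}{\sqrt m}\big)$, up to prefactors built from $\Lipf,\Lipl,\Lipdf$ and $r_\cC=2\rho_{\bm\nu}$. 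Two things then remain: (a) produce some $w^\star\in\cC$ with $\er(\bv(w^\star))=\mathcal{O}(1/\sqrt m)$ on a high-probability event, and (b) show on the same event that $\Lipf=\Lipl=\mathcal{O}(1)$ and $\Lipdf=\mathcal{O}(1/\sqrt m)$, so that all prefactors collapse into $\mathrm{poly}(\log(n/\delta),\rho_{\bm\nu})$ and can be absorbed into $C_{n,\delta,\bm\nu}$.

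For (a), I would invoke the standard shallow-NTK random-features construction. Given $f^\star(\bm x)=\bE_\omega\langle\bm v^\star(\omega),\phi(\bm x;\omega)\rangle$ for some $\bm v^\star=(v_c^\star,v_W^\star)\in\cH$, define per-neuron displacements $\Delta_i:=m^{-1/2}\bigl(v_c^\star(\omega_{0,i}),\,v_W^\star(\omega_{0,i})\bigr)$ for $i\in[m]$ and set $w^\star:=w_0+\Delta$. Then $\|w^\star-w_0\|_2^2\leq \bar v_c^2+\bar v_W^2=\rho_{\bm\nu}^2$, so $w^\star\in\cC$. The antisymmetric pairing in \eqref{eqn:initialization} ensures $\varphi(\cdot;w_0)\equiv 0$, and a direct computation shows that the linearization at $w_0$ evaluates to
\[
\varphilin(\bm x;w^\star)\;=\;\tfrac1m\sum_{i=1}^m\langle\bm v^\star(\omega_{0,i}),\phi(\bm x;\omega_{0,i})\rangle,
\]
which is an unbiased $m$-sample Monte-Carlo estimator of $f^\star(\bm x)$. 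After truncating on the high-probability event $\max_i|c_{0,i}|\lesssim\sqrt{\log(m/\delta)}$, the summands are uniformly bounded, so Hoeffding and a union bound over $j\in[n]$ deliver $\sup_j|\varphilin(\bm x_j;w^\star)-f^\star(\bm x_j)|=\mathcal{O}\big(\rho_{\bm\nu}\sqrt{\log(nm/\delta)/m}\big)$ with probability at least $1-\delta/2$. Lemma~\ref{lemma:continuity}(ii) controls the Taylor remainder by $\tfrac12\Lipdf\,\rho_{\bm\nu}^2=\mathcal{O}(\rho_{\bm\nu}^2/\sqrt m)$; combined with Assumption~\ref{assumption:loss} (so that $\ell(\cdot,\bm z_j)$ is locally Lipschitz with a zero at $y_j=f^\star(\bm x_j)$), this lifts the $\ell_\infty$ closeness of $\bv(w^\star)$ to the labels into $\er(\bv(w^\star))=\mathcal{O}(1/m)$, which is strictly dominated by the $1/\sqrt m$ already present in the optimization bound.

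Step (b) is settled by standard Gaussian concentration: on an event of probability $\geq 1-\delta/2$, $\kappa_0=\|W_0^{(1)}\|_2/\sqrt m=\mathcal{O}(1)$ and $\zeta_0=\|c_0\|_2=\mathcal{O}(\sqrt m)$; substituting into Lemma~\ref{lemma:continuity} with $H=1$ produces $\Lipf=\mathcal{O}(1)$ and $\Lipdf=\mathcal{O}(1/\sqrt m)$, and $\Lipl=\mathcal{O}(1)$ follows from Assumption~\ref{assumption:loss} applied to the bounded range of $\varphi$ over $\cC$. A union bound over the two $\delta/2$-events yields the stated probability guarantee. The main obstacle I anticipate is the Monte-Carlo step in (a): while the pointwise $1/\sqrt m$ rate is elementary, one must first truncate the Gaussian coefficients $c_{0,i}$ to invoke a bounded-variable Hoeffding inequality, and then pay the extra $\log(nm/\delta)$ factor from the union bound over training inputs — this is precisely what produces the logarithmic dependence inside $C_{n,\delta,\bm\nu}$.
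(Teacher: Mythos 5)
Your route coincides with the paper's: the same decomposition into the optimization gap (Theorem \ref{thm:optimization} plus Proposition \ref{prop:continuity}) and the approximation error at a transported parameter $w^\star=w_0+m^{-1/2}\bm{v}^\star(\omega_0)$, the same check that $\|w^\star-w_0\|_2\leq\rho_{\bm\nu}$ so $w^\star\in\cC$, the same use of the antisymmetric initialization to make $\varphi(\cdot;w_0)\equiv 0$, the interpretation of $\varphilin(\cdot;w^\star)$ as a Monte--Carlo estimate of $f^\star$, a union bound over the $n$ inputs, and the transfer to $\varphi$ via the Taylor remainder of Lemma \ref{lemma:norm}/\eqref{eqn:linearization} and local Lipschitzness of the loss, concluding with $\er(\bv(\bar w))\leq\er(\bv(w^\star))$.

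Two of your steps, however, do not go through as written. First, your Hoeffding application treats the $m$ summands as independent, but under \eqref{eqn:initialization} the pairs indexed by $i$ and $i+m/2$ share the same $W_0^{(i)}$ (with $c_0$ negated), so they are dependent; the paper handles this by applying sub-Gaussian ($\psi_2$-norm) Hoeffding separately to each half of the sum and combining by the triangle inequality, and your truncation argument needs the same split. The truncation also costs an extra $\sqrt{\log(nm/\delta)}$, introducing an $m$-dependence that strictly exceeds the stated constant $C_{n,\delta,\bm\nu}=\mathrm{poly}(\log(n/\delta),\|\bm\nu\|_2)$, which the paper's $\psi_2$ argument avoids. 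Second, your step (b) is arithmetically wrong: with $\zeta_0=\|c_0\|_2=\Theta(\sqrt m)$, substituting into Lemma \ref{lemma:continuity}(ii) with $H=1$ gives $\Lipdf=\mathcal{O}(1)$, not $\mathcal{O}(1/\sqrt m)$; to obtain the $1/\sqrt m$ Hessian scaling for the shallow network one needs a sharper, per-neuron bound in terms of $\max_i|c_0^{(i)}|=\mathcal{O}(\sqrt{\log(m/\delta)})$ rather than $\|c_0\|_2$ (the paper's proof leaves this prefactor verification implicit, but your claimed deduction from Lemma \ref{lemma:continuity} is false as stated). Finally, a minor slip: local Lipschitzness of the loss at the interpolated labels yields $\er(\bv(w^\star))\lesssim\Lipl\big(\sqrt{\log(n/\delta)/m}+\Lipdf r_{\cC}^2\big)$, i.e.\ order $1/\sqrt m$ as in \eqref{eqn:wf-risk-bound}, not $1/m$; this does not affect the final rate.
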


The proof of Corollary \ref{cor:ntk} with exact constants can be found in Appendix \ref{app:opt}.

\section{Generalization Bounds for SGN via Uniform Stability}\label{sec:gen}
In this section, we prove finite-sample and finite-width generalization bounds for deep neural networks trained by \textsc{SGN} from the perspective of \emph{algorithmic stability} \citep{bousquet2002stability, hardt2016train}. The intuition behind the algorithmic stability is that learning algorithms with a weak dependency on the specific training data (e.g., nearly-insensitive to an arbitrary change in one sample point in the training set) generalize well to test data \citep{hellstrom2025generalization}.

We first provide a concise mathematical description of the algorithmic stability concept, and then present our generalization bounds.

\subsection{Algorithmic Stability}
We first present an overview of algorithmic stability \citep{hardt2016train, bousquet2002stability}, which will constitute the basis of our generalization bounds.

For a given training set $\bm{s}\in \mathscr{Z}^n$, let $A(\bm{s})$ be the $\bR$-valued output of a randomized learning algorithm. If, for a given $\epsilon > 0$,
\begin{equation*}
    \sup_{\bm{s},\bm{s'}:d_H(\bm{s},\bm{s'})=1}~\sup_{\bm{z}\in\mathscr{Z}}~\bE_{A}[\ell(A(\bm{s});\bm{z})-\ell(A(\bm{s'});\bm{z})] \leq \epsilon,
\end{equation*}
then $A$ is called $\epsilon$-uniformly stable. An $\epsilon$-uniformly stable algorithm achieves $
\left|\bE_{A,\cS}\left[\rr{gen}(A(\cS);\cS)\right]\right| \leq \epsilon
$
(Theorem 2.2 in \cite{hardt2016train}). In \textsc{SGN}, the randomness of the learning algorithm stems from the subsampling process $\{I_k:k\in\bN\}$.

\subsection{Uniform Stability of \textsc{SGN}}
Let $\bm{s},\bm{s}'\in\Z^n$ be two arbitrary training sets such that $d_H(\bm{s},\bm{s}')=1$, and let $j^*\in[n]$ denote the sample index where $\bm{s}$ and $\bm{s}'$ differ, i.e., $\bm{z}_{j^\star}\neq \bm{z}_{j^\star}'$. In the following, $\{(w_k,\widehat{w}_k, \Phi_k,\bm{J}_k,G_k,\Psi_k,\bm{H}_k):k\in\bN\}$ and $\{(w_k', \widehat{w}_k',\Phi_k',\bm{J}_k',G_k',\Psi_k',\bm{H}_k'):k\in\bN\}$ denote the parameters and related mappings in \textsc{SGN} trained over $\bm{s}$ and $\bm{s}'$, respectively, using the same trajectory of the subsampling process $\{I_k:k\in\bN\}$ and from the same initial condition $w_0=w_0'$. The following yields a stability bound on the average iterate, which is the output of SGN.
\begin{lemma}[Stability with midpoint metric]
    For any $k\in\bN$, let $\Delta_k := w_k-w_k'$, and define the midpoint metric
    \begin{equation}
        \bar{\BH}_k := \frac{1}{2}(\BH_k+\BH_k').
    \end{equation}
    For any $\s,\s'\in\Z^n$ such that $d_H(\s,\s')=1$, we have
    \begin{align*}
        \sup_{\bm{z}\in\Z}\bE_{(I_t)_{t\leq k}}|\ell(\varphi(\bm{x};\widehat{w}_k);\bm{z})-\ell(\varphi(\bm{x};\widehat{w}_k');\bm{z})| \leq \frac{\Lip_{\ell,\cC}\Lip_{\varphi,\cC}}{\sqrt{\lambda}}\frac{1}{k}\sum_{t=1}^{k}\bE\|\Delta_t\|_{\bar{\BH}_{t-1}},
    \end{align*}
    \label{lemma:stability}
    where $\widehat{w}_k:=\frac{1}{k}\sum_{t<k}w_t$.
\end{lemma}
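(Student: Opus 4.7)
The plan is to reduce the stability bound to a Lipschitz cascade in parameter space and then convert the resulting Euclidean norm into the midpoint $\bar{\BH}$-norm using the uniform lower bound $\bar{\BH}_t\succeq\lambda\bm{I}$ supplied by Levenberg--Marquardt damping. First I would note that both Polyak--Ruppert averages lie in $\cC$: the metric projection in \eqref{eqn:sgn} keeps every $w_t,w_t'$ inside the convex set $\cC$, so $\widehat{w}_k,\widehat{w}_k'\in\cC$ by convexity. Therefore $\varphi(\bm{x};\widehat{w}_k)$ and $\varphi(\bm{x};\widehat{w}_k')$ both lie in the compact interval $K$ on which $\ell(\cdot;\bm{z})$ is $\Lip_{\ell,\cC}$-Lipschitz by Assumption \ref{assumption:loss}, giving the first estimate
\[
|\ell(\varphi(\bm{x};\widehat{w}_k);\bm{z})-\ell(\varphi(\bm{x};\widehat{w}_k');\bm{z})|\leq \Lip_{\ell,\cC}\,|\varphi(\bm{x};\widehat{w}_k)-\varphi(\bm{x};\widehat{w}_k')|.
\]
Applying the mean value theorem together with the gradient bound from Lemma \ref{lemma:continuity}(i) then yields $|\varphi(\bm{x};\widehat{w}_k)-\varphi(\bm{x};\widehat{w}_k')|\leq \Lip_{\varphi,\cC}\|\widehat{w}_k-\widehat{w}_k'\|_2$, producing the $\Lip_{\ell,\cC}\Lip_{\varphi,\cC}$ prefactor.

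Next I would decompose the average difference as $\widehat{w}_k-\widehat{w}_k'=\tfrac{1}{k}\sum_{t=0}^{k-1}\Delta_t$ (with $\Delta_0=0$ because of the shared initialization $w_0=w_0'$) and apply the triangle inequality in $\|\cdot\|_2$. To pass from the Euclidean norm to the midpoint metric, I would invoke the damping lower bound: by construction $\bm{H}_s\succeq\lambda\bm{I}$ and $\bm{H}_s'\succeq\lambda\bm{I}$, hence $\bar{\BH}_s=\tfrac{1}{2}(\bm{H}_s+\bm{H}_s')\succeq\lambda\bm{I}$ for every $s\in\bN$. This gives $\|\Delta_t\|_2\leq \|\Delta_t\|_{\bar{\BH}_{t-1}}/\sqrt{\lambda}$; using $\bar{\BH}_{t-1}$ rather than $\bar{\BH}_t$ aligns the metric with the natural filtration for the recursive one-step analysis carried out later in Theorem \ref{thm:stability}. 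Taking expectation over $(I_t)_{t\leq k}$ and re-indexing the sum (absorbing the $t=k$ term, which is nonnegative, to extend the range) delivers the stated inequality, and since $\bm{z}$ is deterministic the supremum over $\bm{z}\in\Z$ commutes with the expectation over the subsampling process.

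The lemma itself is a Lipschitz accounting step without deep obstacles, and its role is to isolate the single quantity $\bE\|\Delta_t\|_{\bar{\BH}_{t-1}}$ on which the remaining stability analysis turns. The genuine difficulty of the program lies downstream, in producing a usable recursive bound for that quantity: a single-sample swap perturbs the stochastic gradient $\Psi_t$, the Jacobian $\bm{J}_t$, the preconditioner $\bm{H}_t$, and the projection $\pi_{\cC}^{\bm{H}_t}$ simultaneously, and handling their joint evolution is precisely where the approximate co-coercivity and metric perturbation estimates advertised in Section \ref{sec:gen} (and in the introduction) will be needed. For the present lemma the only item worth double-checking is that $\sup_{w,\bm{x}}|\varphi(\bm{x};w)|$ over $\cC\times\cB_2(0,1)$ is finite, so that $\Lip_{\ell,\cC}$ is well-defined; this is guaranteed by Lemma \ref{lemma:continuity}(i) and the compactness of $\cC$.
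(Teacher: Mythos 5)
Your proposal is correct and follows essentially the same route as the paper: Lipschitz continuity of the loss composed with the parameter-Lipschitzness of the network, the triangle inequality applied to the averaged iterates (with $\Delta_0=0$ handling the index shift), and the norm comparison $\|\cdot\|_2\leq\lambda^{-1/2}\|\cdot\|_{\bar{\BH}_{t-1}}$ from $\bar{\BH}_{t-1}\succeq\lambda\bm{I}$, before taking the supremum over $\bm{z}$ and the expectation. No gaps to report.
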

Throughout the rest of this section, we establish finite-sample bounds on $\bE\|w_k-w_k'\|_{\bar{\BH}_{k-1}}$, which implies the uniform stability of \textsc{SGN} together with Lemma \ref{lemma:stability}.

We make the following assumption for the generalization bounds. Let $\bm{J}(w):=[\nabla^\top\varphi(\bm{x}_j;w)]_{j\in[n]}\in\bR^{n\times p}$.
\begin{assumption}
We assume that there exists $\mu_0 >0$ such that $\bm{J}(w)\bm{J}^\top(w)\succeq \mu_0^2\bm{I}$ for all $w\in\cC$.
\label{assumption:ntk}
\end{assumption}
\noindent Assumption \ref{assumption:ntk} is standard in lazy training: it holds for sufficiently wide networks ($m\gtrsim n^2\log n$), small enough $r_{\cC}$, and an input space $\mathscr{X}\subset \bS^{d-1}$ with non-collinear $\bm{x}\nparallel \bm{x}'$ points with high probability at random initialization \citep{chizat2019lazy, du2018gradient}.

\begin{theorem}[Uniform Stability of \textsc{SGN}]\label{thm:stability}
Let
    \begin{equation*}
        \varepsilon := B\Lip_{\ell,\cC}\Lip_{\nabla\varphi,\cC}\quad\mbox{and}\quad\Lambda := B\Lip_{\varphi,\cC}^2 + \varepsilon,
    \end{equation*}
    and $\{\lambda_t:t\in\bN\}$ be such that $\lambda_{\min}(\bm{H}_t)\geq \lambda_t$ for all\footnote{Since $\lambda_{\min}(\bm{H}_k) \geq \lambda > 0$ for all $k\in\bN$, the existence of such a sequence is guaranteed.} $t \in\bN$. With the choices that satisfy 
    \begin{equation}\label{eqn:hyperpars}
    \frac{\eta}{\lambda} \leq \frac{1}{\Lambda}\quad\mbox{and}\quad\frac{\alpha}{\eta} \leq \frac{\mu_0^2\nu^2}{8B(\Lambda+\varepsilon)},
    \end{equation}
    we obtain, for each $k\in\bZ_+$,
\begin{align*}
    \bE\|\Delta_k\|_{\bar{\BH}_{k-1}} \lesssim \underbrace{k\sqrt{\eta B \Lip_{\nabla\varphi,\cC}}+{k\sqrt{\alpha}B^\frac{3}{2}\Lip_{\nabla\varphi,\cC}}}_{\mbox{\footnotesize \normalfont approximate non-expansivity}}
    + \underbrace{\alpha B^2\Big(\Lip_{\nabla\varphi,\cC}+\frac{1}{n}\Big)\sum_{t=0}^{k-1}\frac{t+1}{\lambda_t^\frac{1}{2}}}_{\normalfont \mbox{\footnotesize metric/preconditioner mismatch}}+\underbrace{\frac{\eta B}{n}\sum_{t=0}^{k-1}\lambda_t^{-\frac{1}{2}}}_{\normalfont\mbox{\footnotesize gradient mismatch}}
\end{align*}
\end{theorem}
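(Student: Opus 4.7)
The plan is to derive a per-step recursion of the form
\[
\bE\|\Delta_{k+1}\|_{\bar{\BH}_k}\le \bE\|\Delta_k\|_{\bar{\BH}_{k-1}}+R_k,
\]
with $R_k$ decomposing into three summands matching the three groupings in the statement, and then to telescope. The midpoint metric $\bar{\BH}_k$ is the natural carrier because the identity $\|x\|_{\bar{\BH}_k}^2=\tfrac{1}{2}\|x\|_{\BH_k}^2+\tfrac{1}{2}\|x\|_{\BH_k'}^2$ lets me treat both SGN trajectories symmetrically; neither $\pi_\cC^{\BH_k}$ nor $\pi_\cC^{\BH_k'}$ is non-expansive in the other's norm, so the midpoint is what allows the two sides of the recursion to talk to each other.

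For the one-step analysis I write the pre-projection difference as
\[
u_k-u_k' = \Delta_k - \eta\BH_k^{-1}[\Psi_k(w_k)-\Psi_k(w_k')] - \eta\BH_k^{-1}[\Psi_k(w_k')-\Psi_k'(w_k')] - \eta\bigl[(\BH_k')^{-1}-\BH_k^{-1}\bigr]\Psi_k'(w_k'),
\]
splitting into (A) a same-metric/same-loss term, (B) a gradient mismatch (nonzero only when $j^\star\in I_k$), and (C) a metric mismatch. For (A) I expand via the Hessian decomposition $\nabla_w^2\Phi_k(w) = \nu\,\bm{J}_k^\top(w)\bm{J}_k(w)+E_k(w)$ with $\|E_k\|\le\varepsilon$ obtained from Assumption~\ref{assumption:loss} and Lemma~\ref{lemma:continuity}(ii); the Gauss–Newton structure gives $\bm{I}-\eta\BH_k^{-1}\bm{J}_k^\top\bm{J}_k\preceq \bm{I}$ exactly when $\eta/\lambda\le 1/\Lambda$, yielding an approximately non-expansive step in the $\BH_k$-norm with additive nonlinearity error $\sqrt{\eta B\,\Lipdf}+\sqrt{\alpha}\,B^{3/2}\Lipdf$ from $E_k$ and the stochastic Jacobian fluctuation. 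For (B), Lipschitz bounds on $\ell'$ and $\nabla\varphi$ via $\Lipl,\Lipf$ combined with the event probability $\bP(j^\star\in I_k)=B/n$ and the spectral lower bound $\lambda_k$ yield the $(\eta B/n)\lambda_k^{-1/2}$ term.

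For (C) I use $(\BH_k')^{-1}-\BH_k^{-1}=(\BH_k')^{-1}(\BH_k-\BH_k')\BH_k^{-1}$ together with the telescoping representation
\[
\BH_k-\BH_k' = \alpha\sum_{t\le k}\bigl[\bm{J}_t^\top(w_t)\bm{J}_t(w_t)-(\bm{J}_t')^\top(w_t')\bm{J}_t'(w_t')\bigr],
\]
splitting each $t$-th term into a Jacobian-in-parameter difference (bounded by $\Lipdf\|\Delta_t\|$ through Lemma~\ref{lemma:continuity}) and a Jacobian-in-data difference supported on the single row $j^\star$ (active only when $j^\star\in I_t$, contributing the $1/n$ factor in expectation). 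Assumption~\ref{assumption:ntk} converts the operator-norm perturbation into the factor $\lambda_t^{-1/2}$, and the $(t+1)$ summands up to time $t$ produce the $(t+1)$ factor in the metric-mismatch group.

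The main obstacle is the \emph{coupled recursion}: the metric-mismatch piece (C) depends on past $\|\Delta_t\|$, exactly the quantity I am bounding, so the argument must proceed by induction with carefully controlled absorbing constants. I expect the hyperparameter constraint $\alpha/\eta\le \mu_0^2\nu^2/\bigl(8B(\Lambda+\varepsilon)\bigr)$ in~\eqref{eqn:hyperpars} is calibrated precisely to close this feedback loop so the recursion grows only polynomially in $k$ rather than exponentially. A secondary technical hurdle is the two-metric projection: since the runs project under different metrics, I will insert the auxiliary point $\pi_\cC^{\BH_k}(u_k')$, apply non-expansivity of $\pi_\cC^{\BH_k}$ in its own norm, and absorb the residue $\|\pi_\cC^{\BH_k}(u_k')-\pi_\cC^{\BH_k'}(u_k')\|$ into the metric-mismatch accounting via a perturbation bound on weighted projections.
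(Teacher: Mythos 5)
Your skeleton (midpoint metric, one non-expansive linearized step plus gradient-mismatch plus preconditioner/metric-mismatch errors, then telescoping) matches the paper's, but two load-bearing pieces are missing or misattributed. First, telescoping in the midpoint metric forces you to compare $\|\Delta_{k+1}\|_{\bar{\BH}_k}$ with $\|\Delta_k\|_{\bar{\BH}_{k-1}}$, and since $\bar{\BH}_k=\bar{\BH}_{k-1}+\tfrac{\alpha}{2}\big(\bm{J}_k^\top(w_k)\bm{J}_k(w_k)+(\bm{J}_k')^\top(w_k')\bm{J}_k'(w_k')\big)$, the norm itself grows by $\alpha\|\bm{J}_k(w_k)\Delta_k\|_2^2$ at every step. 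Your proposal never addresses this growth. In the paper it is exactly here that Assumption \ref{assumption:ntk} and the condition $\alpha/\eta\le \mu_0^2\nu^2/(8B(\Lambda+\varepsilon))$ are used: approximate co-coercivity of $\Psi_k$ produces a negative term $-\tfrac{\eta}{2\Lambda}\|\Psi_k(u)-\Psi_k(v)\|_2^2$, and the NTK lower bound $\mu_0$ together with $\nu$-strong convexity gives $\|\Psi_k(u)-\Psi_k(v)\|_2\gtrsim \mu_0\nu B^{-1/2}\|\bm{J}_k(u)(u-v)\|_1-O(B\Lipdf)$, so the metric growth is absorbed by the co-coercivity term precisely under that $\alpha/\eta$ condition (Lemma \ref{lemma:contractivity}). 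You instead assign $\mu_0$ the job of ``converting operator-norm perturbations into $\lambda_t^{-1/2}$'' --- but those factors come merely from the definition $\lambda_{\min}(\BH_t)\ge\lambda_t$ --- and you assign the $\alpha/\eta$ condition the job of closing an induction, which is not its role.

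Second, your treatment of the preconditioner mismatch creates a genuinely problematic coupled recursion. Bounding the Jacobian-in-parameter part of $\BH_k-\BH_k'$ by $\Lipdf\sum_{t\le k}\|\Delta_t\|$ and then hoping an induction ``closes the feedback loop'' would generically give Gronwall-type, exponential-in-$k$ growth; nothing in \eqref{eqn:hyperpars} is calibrated to prevent that, and the theorem's bound (which contains no self-referential terms) could not emerge from such a recursion without extra smallness assumptions. The paper sidesteps the coupling entirely with the crude but decisive observation that both trajectories are projected onto the compact set $\cC$, so $\|\Delta_t\|_2\le r_{\cC}$ deterministically; this yields $\bE\|\BH_k-\BH_k'\|_2\le 2B\alpha(k+1)r_{\cC}\Lipf\Lipdf+\alpha\Lipf^2(k+1)B/n$, i.e.\ additive, $\Delta$-free error terms of exactly the form $\alpha B^2(\Lipdf+1/n)(t+1)\lambda_t^{-1/2}$ that telescope immediately. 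Two smaller points: the decomposition $\nabla^2\Phi_k=\nu\bm{J}_k^\top\bm{J}_k+E_k$ is not valid for general losses satisfying Assumption \ref{assumption:loss} ($\ell''$ is only bounded below by $\nu$), which is why the paper works through approximate co-coercivity of $\Psi_k$ rather than a Hessian sandwich; and your auxiliary-point handling of the two-metric projection still needs the quantitative perturbation bound $\|(\pi_k-\bar{\pi}_k)u_k\|_{\bar{\BH}_k}\le\lambda_{\min}^{-1/2}(\bar{\BH}_k)\,\|\bar{\BH}_k-\BH_k\|_2\,\sup_{w\in\cC}\|w-u_k\|_2$, which the paper proves via the projection variational inequalities and which your sketch leaves unestablished.
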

A proof sketch for Theorem~\ref{thm:stability} is provided in Section~\ref{subsec:stability-analysis}, where each term above is explained. We have the following result since $\inf_{t\geq 0}\lambda_t \geq \lambda$ holds.
\begin{corollary}[Worst-case stability]\label{cor:worst-case}
    Given $\lambda > 0$, let $$M:=\max\big\{2/\nu,8\left(\Lambda+\varepsilon\right)/(\nu^2\mu_0^2)\big\},$$ and choose $\alpha = \frac{1}{Bk}$ and $\eta = \xi\alpha$ for $\xi \in [M, k\lambda/(2\Lambda)]$. Then,
    \begin{align*}
        \bE\|\Delta_k\|_{\bar{\BH}_{k-1}} \lesssim \sqrt{B\xi k\Lip_{\nabla\varphi,\cC}}+\sqrt{k}B\Lip_{\nabla\varphi,\cC}+ \frac{Bk}{\lambda^\frac{1}{2}}\Big(\Lip_{\nabla\varphi,\cC}+\frac{1}{n}\Big)+\frac{B\xi}{n\lambda^\frac{1}{2}}.
    \end{align*}
\end{corollary}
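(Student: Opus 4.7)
The corollary is a direct specialization of Theorem~\ref{thm:stability} to the worst-case lower bound $\lambda_t \equiv \lambda$, which is admissible because $\bm{H}_t = \alpha\sum_{s\leq t}\bm{J}_s^\top(w_s)\bm{J}_s(w_s) + \lambda\bm{I} \succeq \lambda\bm{I}$ along every sample path. I would first verify that the prescribed rates $\alpha = 1/(Bk)$ and $\eta = \xi\alpha$ satisfy the two hyperparameter conditions in \eqref{eqn:hyperpars}, and then substitute into the four summands in the conclusion of Theorem~\ref{thm:stability} and simplify.

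For admissibility, the metric-update condition $\eta/\lambda \leq 1/\Lambda$ becomes $\xi \leq Bk\lambda/\Lambda$, which follows from the assumed $\xi \leq k\lambda/(2\Lambda)$ together with $B\geq 1$. The approximate-contraction condition $\alpha/\eta \leq \mu_0^2\nu^2/(8B(\Lambda+\varepsilon))$ becomes $\xi \geq 8B(\Lambda+\varepsilon)/(\mu_0^2\nu^2)$, which is subsumed by $\xi \geq M$; the $2/\nu$ branch of $M$ additionally ensures compatibility with the optimization hyperparameter assumption $\xi \geq 2/\nu$ of Theorem~\ref{thm:optimization}. Non-emptiness of the window $[M, k\lambda/(2\Lambda)]$ thus only imposes a mild horizon requirement $k \gtrsim M\Lambda/\lambda$.

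Term by term, under $\eta = \xi/(Bk)$ and $\alpha = 1/(Bk)$, the two pieces of the approximate-non-expansivity term reduce to $k\sqrt{(\xi/(Bk))\,B\,\Lip_{\nabla\varphi,\cC}} = \sqrt{k\xi\,\Lip_{\nabla\varphi,\cC}}$ and $k\sqrt{1/(Bk)}\,B^{3/2}\Lip_{\nabla\varphi,\cC} = \sqrt{k}\,B\,\Lip_{\nabla\varphi,\cC}$, matching the first two summands in the target bound up to $B$-factors absorbed into the hidden constant. For the metric-mismatch term, applying $\sum_{t=0}^{k-1}(t+1)/\sqrt{\lambda_t} \leq k(k+1)/(2\sqrt{\lambda})$ and multiplying by $\alpha B^2 = B/k$ yields $\lesssim Bk(\Lip_{\nabla\varphi,\cC}+1/n)/\sqrt{\lambda}$. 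Finally, the gradient-mismatch term becomes $(\eta B/n)\cdot k/\sqrt{\lambda} = \xi/(n\sqrt{\lambda})$, which is the stated $B\xi/(n\sqrt{\lambda})$ up to the $B$-factor.

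No step is genuinely hard; the only obstacle is bookkeeping that preserves an explicit $\xi$-dependence so that the optimization/stability trade-off (the $\sqrt{k\xi}$ term decreases in $\xi$, while admissibility forces $\xi \geq M$) stays visible for a later generalization/optimization balance.
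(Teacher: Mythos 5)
Your proposal is correct and takes essentially the same route as the paper: the corollary is obtained by setting $\lambda_t \equiv \lambda$ (valid since $\bm{H}_t \succeq \lambda\bm{I}$ along every path) in Theorem~\ref{thm:stability}, verifying \eqref{eqn:hyperpars} under $\alpha = 1/(Bk)$, $\eta = \xi/(Bk)$, and simplifying term by term, exactly as the paper does; your simplifications are right, and in fact show the stated bound is loose by factors of $\sqrt{B}$ and $B$ in the first and last terms. The only caveat is that your claim that $\xi \ge 8B(\Lambda+\varepsilon)/(\mu_0^2\nu^2)$ is ``subsumed by $\xi \ge M$'' silently drops the factor $B$, but this mismatch between the condition in Theorem~\ref{thm:stability} and the corollary's definition of $M$ is present in the paper itself and does not originate with you.
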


\begin{remark}[Optimization vs. generalization] \normalfont
Note that ($\eta,\lambda,\alpha$) in Corollary \ref{cor:worst-case} satisfy the conditions of Theorems \ref{thm:optimization} and \ref{thm:stability} for a constant $\lambda > 0$, and yields \begin{align*}
    \bE\|\Delta_k\|_{\bar{\BH}_{k-1}} \lesssim \frac{\sqrt{Bk}}{m^{\frac{1}{4}}}
\;+\;
\frac{Bk}{\lambda^{\frac{1}{2}}}\Big(\frac{1}{n}+\frac{1}{m^{\frac{1}{2}}}\Big)+ \frac{\sqrt{k}B}{\sqrt{m}}
\end{align*}
In this case, a larger damping factor $\lambda > 0$ implies better conditioning for $\bm{H}_k$ and improves the generalization bounds for \textsc{SGN}. On the other hand, the optimization error increases with $\lambda$ since $\gamma \propto \frac{\lambda}{\alpha}=\Theta(k)$ in Theorem \ref{thm:optimization}. Thus, larger $\lambda$ can hinder optimization with a non-vanishing error term of order $\cO(\lambda)$, revealing a fundamental trade-off governed by $\lambda$. This illustrates the regularization effect of damping in \textsc{SGN}.
\end{remark}
\begin{remark}[Benefit of overparameterization]\normalfont
    Theorems \ref{thm:optimization} and \ref{thm:stability} indicate that both optimization and stability bounds for SGN improve with $m$, demonstrating the benefits of overparameterization. 
\end{remark}
\begin{remark}[Benefit of cumulative preconditioning.]\normalfont
    An important factor that helps generalization is the cumulative Gauss-Newton term in the preconditioner. In benign cases where $\alpha\sum_{t=0}^k\bm{J}_t^\top(w_t)\bm{J}_t(w_t)$ is well-conditioned, favorable optimization and stability bounds can be simultaneously achieved. 
    
\end{remark}

The following assumption is akin to persistence of excitation in adaptive control and stochastic regression \citep{willems2005note, sayedana2022consistency, anderson1979strong}.
\begin{condition}[Persistence of excitation]\normalfont
    In an event $E$ in the $\sigma$-algebra $\sigma(I_t:t\in\bN)$, we have $\lambda_{\min}\Big(\sum_{s\leq t}\bm{J}_s^\top(w_s)\bm{J}_s(w_s)\Big)\geq CB(t+1)^q$ for some $C,q\in\bR_{++}\mbox{ and }k_0\in\bN$ for all $t\geq k_0$.
    \label{condition:pe}
\end{condition}
We have $\lambda_{\min}(\bm{H}_t)\geq \alpha C B(t+1)^q+\lambda$ for all $t \in\{k_0,\ldots,k\}$ in the event $E$. Using this in Theorem \ref{thm:stability}, we obtain the following stability bound.
\begin{proposition}[Stability under PE]\label{prop:pe}
In an event $\cE$ in which Condition \ref{condition:pe} holds for some $q > 0$,
\begin{align*}
    \bE[\|\Delta_k\|_{\bar{\bm{H}}_{k-1}}&\mathbbm{1}_{E}] \lesssim k\Big(\sqrt{\alpha}B^\frac{3}{2}\Lip_{\nabla\varphi,\cC}+\sqrt{\eta B\Lip_{\nabla\varphi,\cC}}\Big)+\sqrt{\alpha}B^\frac{3}{2}\Big(\frac{1}{n}+\Lip_{\nabla\varphi,\cC}\Big)k^{2-\frac{q}{2}}+ k\sqrt{\eta}\frac{B^2}{n}.
\end{align*}
    Let $M:=\max\left\{\frac{2}{\nu},\frac{8\left(\Lambda+\varepsilon\right)}{\nu^2\mu_0^2}\right\}$. Then, $\alpha = \frac{1}{Bk}$, $\eta = \frac{\xi}{k}$ and $\lambda = \frac{\gamma\Lip_{\varphi,\cC}^2}{k}$ for $\xi \geq M$ and $\gamma \geq \frac{2\Lambda\xi}{\Lip_{\varphi,\cC}^2}$ satisfy all conditions of Theorems \ref{thm:optimization} and \ref{thm:stability}, and yield
    \begin{align*}
        \bE[\|\Delta_k\|_{\bar{\bm{H}}_{k-1}}\mathbbm{1}_{E}]&\lesssim \sqrt{k}\Big(\frac{B}{\sqrt{m}}+\frac{\sqrt{B}}{m^\frac{1}{4}}\Big)+ k^\frac{3-q}{2} B \Big(\frac{1}{n}+\frac{1}{\sqrt{m}}\Big)+\frac{B\sqrt{k}}{n}.
    \end{align*}
\end{proposition}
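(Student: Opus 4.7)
My strategy is to specialize Theorem~\ref{thm:stability} to the event $E$ on which Condition~\ref{condition:pe} applies, and then specialize the resulting bound under the stated hyperparameter schedule. The key observation that makes the proof clean is that Theorem~\ref{thm:stability} is valid for \emph{any} deterministic lower envelope $\lambda_t\le \lambda_{\min}(\bm{H}_t)$, so upgrading the lower envelope on $E$ feeds directly into the three-term stability estimate.

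First, I would note that on $E$, the definition $\bm{H}_t=\alpha\sum_{s\leq t}\bm{J}_s^{\top}(w_s)\bm{J}_s(w_s)+\lambda\bm{I}$ together with Condition~\ref{condition:pe} yields $\lambda_{\min}(\bm{H}_t)\ge \alpha CB(t+1)^q+\lambda$ for all $t\ge k_0$. Multiplying the stability bound by $\mathbbm{1}_{E}$ lets me substitute $\lambda_t:=\alpha CB(t+1)^q$ for $t\ge k_0$; the trivial substitution $\lambda_t:=\lambda$ for $t<k_0$ affects only a fixed number of summands and is absorbed into the $\lesssim$. I would then evaluate the two sums via Riemann-sum estimates,
\begin{align*}
\sum_{t=k_0}^{k-1}\frac{t+1}{\lambda_t^{1/2}} &\lesssim (\alpha B)^{-1/2}\sum_{t<k}(t+1)^{1-q/2} \lesssim (\alpha B)^{-1/2}\,k^{2-q/2},\\
\sum_{t=k_0}^{k-1}\lambda_t^{-1/2} &\lesssim (\alpha B)^{-1/2}\sum_{t<k}(t+1)^{-q/2} \lesssim (\alpha B)^{-1/2}\,k,
\end{align*}
where in the last step I retain the cruder $k$ envelope so that a single display covers all $q>0$. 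Feeding these into Theorem~\ref{thm:stability} and collapsing $\alpha B^2\cdot(\alpha B)^{-1/2}=\sqrt{\alpha}\,B^{3/2}$ produces the first asserted display.

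Second, for the corollary part I would plug $\alpha=1/(Bk)$, $\eta=\xi/k$, and $\lambda=\gamma\Lip_{\varphi,\cC}^2/k$ into each of the three terms, invoking $\Lip_{\nabla\varphi,\cC}\lesssim 1/\sqrt{m}$ from Lemma~\ref{lemma:continuity}. Representative simplifications are $k\sqrt{\alpha}\,B^{3/2}\Lip_{\nabla\varphi,\cC}\lesssim \sqrt{k}\,B/\sqrt{m}$ and $k\sqrt{\eta B\,\Lip_{\nabla\varphi,\cC}}\lesssim \sqrt{kB}/m^{1/4}$, with the PE-dependent middle term giving the $k^{(3-q)/2}$ scaling. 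Before collecting the bound, I would check that these choices simultaneously satisfy the hyperparameter constraints of both theorems. Crucially, the ratios $\eta/\lambda=\xi/(\gamma\Lip_{\varphi,\cC}^2)$ and $\alpha/\eta=1/(B\xi)$ are $k$-independent, so the stability conditions $\eta/\lambda\le 1/\Lambda$ and $\alpha/\eta\le \mu_0^2\nu^2/(8B(\Lambda+\varepsilon))$ reduce to $\gamma\ge \Lambda\xi/\Lip_{\varphi,\cC}^2$ and $\xi\ge 8(\Lambda+\varepsilon)/(\mu_0^2\nu^2)$, both implied by the stated lower bounds $\gamma\ge 2\Lambda\xi/\Lip_{\varphi,\cC}^2$ and $\xi\ge M$; the optimization condition $\xi\ge 2/\nu$ is likewise absorbed by $\xi\ge M$, and the definition $\gamma=\lambda/(\alpha\Lip_{\varphi,\cC}^2 B)$ from Theorem~\ref{thm:optimization} is consistent with the chosen $\lambda$.

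The main obstacle I anticipate is the case analysis at the boundary of the Riemann-sum estimates: for $q<2$ one obtains $k^{1-q/2}$, for $q=2$ a logarithm, and for $q>2$ a constant, so one must either produce a $q$-dependent display or dominate everything by the single $k$ envelope as done above. Beyond that, handling the initial $t<k_0$ block and checking the interplay of the two sets of hyperparameter constraints are routine but error-prone; the remainder of the proof is mechanical substitution and term collection.
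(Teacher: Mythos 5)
Your proposal follows essentially the same route as the paper, whose proof of this proposition is literally the remark at the end of Appendix C: substitute the PE lower bound $\lambda_{\min}(\bm{H}_t)\ge \alpha CB(t+1)^q+\lambda$ (valid on $E$ for $t\ge k_0$) into the three-term bound of Theorem \ref{thm:stability}, evaluate the resulting sums, and then plug in $\alpha=1/(Bk)$, $\eta=\xi/k$, $\lambda=\gamma\Lip_{\varphi,\cC}^2/k$ after checking \eqref{eqn:hyperpars} and $\xi\ge 2/\nu$, all of which you verify correctly (including consistency of $\gamma$ with $\lambda=\gamma\alpha B\Lip_{\varphi,\cC}^2$). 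One small point: for the gradient-mismatch sum you use the PE envelope, which gives a third term of order $k\eta\sqrt{B}/(n\sqrt{\alpha})=k\sqrt{\eta\,\xi B}/n$ rather than the stated $k\sqrt{\eta}\,B^2/n$; since $\xi=\eta/\alpha$ is only bounded below by the hypotheses, this does not literally reproduce the first display — the stated form comes from bounding that sum via $\lambda_t\ge\lambda\ge\eta\Lambda$ instead — but after the specialization $\eta=\xi/k$, $\alpha=1/(Bk)$ your version still yields the $B\sqrt{k}/n$-type term at the same level of constant-absorption the paper itself uses, so the final claim is unaffected.
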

Notably, under the persistence of excitation condition, \textsc{SGN} achieves $\widetilde{\cO}(1/k)$ convergence rate (Theorem \ref{thm:optimization} with the $\xi$ and $\gamma$ choices specified in Proposition \ref{prop:pe}) and $\cO\left(k^\frac{3-q}{2}(n^{-1}+m^{-\frac{1}{2}})+\sqrt{k}(n^{-1}+m^{-\frac{1}{4}})\right)$ stability. Particularly, linear growth $q = 1$ implies a stability bound of $\cO(k/n + \sqrt{k}/m^{1/4})$.

\subsection{Stability Analysis of SGN}\label{subsec:stability-analysis}
In this subsection, we provide a proof sketch for Theorem \ref{thm:stability}, which is highly illuminating to showcase the impact of (i) curvature, (ii) preconditioner mismatch, and (iii) metric mismatch on the stability of \textsc{SGN}.
\begin{proof}[Proof of Theorem \ref{thm:stability}]
    Let $\pi_k:=\pi_{\cC}^{\BH_k}$, $\pi_k':=\pi_{\cC}^{\BH_k'}$ and $\bar{\pi}_k:=\pi_{\cC}^{\bar{\BH}_k}$ be the projection operators, and define
    \begin{align*}
        u_k&=w_k-\eta\BH_k^{-1}\Psi_k(w_k)\\u_k'&=w_k'-\eta[\BH_k']^{-1}\Psi_k'(w_k').
    \end{align*}
    Then, we have the following error decomposition:
    \begin{align*}
        \|\Delta_{k+1}\|_{\bar{\BH}_k} \leq \|\underbrace{\bar{\pi}_k(u_k-u_k')}_{(A_k)}\|_{\bar{\BH}_k}+\|\underbrace{(\pi_k-\bar{\pi}_k)u_k}_{(B_k)}\|_{\bar{\BH}_k}+\|\underbrace{(\bar{\pi}_k-\pi_k')u_k'}_{(B_k')}\|_{\bar{\BH}_k}.
    \end{align*}
    In this inequality, $(B_k)$ and $(B_k')$ are error terms due to the metric (or projection) mismatch. We show that the critical term $(A_k)$ will yield $\|\Delta_k\|_{\bar{\BH}_{k-1}}$ plus controllable error terms with $n$ and $m$. 
    \paragraph{\textbf{\underline{Bounding $(A_{k,1})$.}}} $\cC$ is a compact and convex, and $\bar{\BH}_k$ is positive definite, thus $\bar{\pi}_k$ is non-expansive \citep{nesterov2018lectures, brezis2011functional}:
    \begin{equation*}
        \|\bar{\pi}_k(u_k-u_k')\|_{\bar{\BH}_k} \leq \|u_k-u_k'\|_{\bar{\BH}_k}.
    \end{equation*}
    Let $T_k(w) := w-\eta\bar{\BH}_k^{-1}\Psi_k(w).$ Then, we further decompose $(A_k)$ as follows:
    \begin{align*}
        \|u_k-u_k'\|_{\bbh_k} &\leq \|\underbrace{T_k(w_k)-T_k(w_k')}_{(A_{k,1})}\|_{\bbh_k} + \eta\|\underbrace{\bbh_k^{-1}(\Psi_k(w_k')-\Psi_k'(w_k'))}_{(A_{k,2})}\|_{\bbh_k}\\&+ \eta\|\underbrace{(\bbh_k^{-1}-\bm{H}_k^{-1})\Psi_k(w_k)}_{(A_{k,3})}\|_{\bbh_k}+\eta\|\underbrace{\big((\bm{H}_k')^{-1}-\bbh_k^{-1}\big)\Psi'_k(w_k')}_{(A'_{k,3})}\|_{\bbh_k}.
    \end{align*}
    In this decomposition, $(A_{k,3})$ and $(A_{k,3}')$ correspond to the \emph{preconditioner mismatch} terms. 
    \begin{lemma}[Approximate non-expansivity of $T_k$]
    For any $u,v\in\cC$, if $(\eta,\lambda)$ satisfy \eqref{eqn:hyperpars},
    then we have the (approximate) non-expansivity
    \begin{align*}
        \|T_k(u)-T_k(v)&\|_{\bar{\bm{H}}_k} \leq \|u-v\|_{\bar{\bm{H}}_{k-1}} + 2r_{\cC}\sqrt{\eta\varepsilon}+2\sqrt{\alpha}\Lip_{\nabla\varphi, \cC} B r_{\cC}\Big(\frac{r_{\cC}}{2}+\frac{\sqrt{B}\Lip_{\ell,\cC}}{\lambda_0\nu}\Big)
    \end{align*}
    almost surely.
        \label{lemma:contractivity}
    \end{lemma}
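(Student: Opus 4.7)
The plan is to bound $\|T_k(u)-T_k(v)\|_{\bar{\BH}_k}^2$ and then extract a square root via $\sqrt{a+b+c}\leq\sqrt{a}+\sqrt{b}+\sqrt{c}$. Starting from the quadratic identity
$$\|T_k(u)-T_k(v)\|_{\bar{\BH}_k}^2=\|u-v\|_{\bar{\BH}_k}^2-2\eta\langle u-v,\Psi_k(u)-\Psi_k(v)\rangle+\eta^2\|\Psi_k(u)-\Psi_k(v)\|_{\bar{\BH}_k^{-1}}^2,$$
I would apply the rank-two metric-update identity $\bar{\BH}_k=\bar{\BH}_{k-1}+\tfrac{\alpha}{2}\bigl(\bm{J}_k^{\top}(w_k)\bm{J}_k(w_k)+(\bm{J}_k')^{\top}(w_k')\bm{J}_k'(w_k')\bigr)$ to rewrite the first term as $\|u-v\|_{\bar{\BH}_{k-1}}^2$ plus two ``excess'' quadratic forms $\tfrac{\alpha}{2}\|\bm{J}_k(w_k)(u-v)\|_2^2$ and its primed counterpart. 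The goal is then to absorb these excess terms into the cross term using approximate convexity of the batch loss $\Phi_k$, picking up controllable residuals along the way.

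For the cross term, the Hessian decomposition $\nabla^2\Phi_k(\xi)=\bm{J}_k^{\top}(\xi)D(\xi)\bm{J}_k(\xi)+E(\xi)$ with $D(\xi)\succeq\nu\bm{I}$ (by Assumption~\ref{assumption:loss}) and $\|E(\xi)\|_2\leq\varepsilon$ (by Lemma~\ref{lemma:continuity}), combined with the mean-value identity $\Psi_k(u)-\Psi_k(v)=\int_0^1\nabla^2\Phi_k(\xi_t)\,dt\,(u-v)$ for $\xi_t:=v+t(u-v)$, yields
$$\langle u-v,\Psi_k(u)-\Psi_k(v)\rangle\geq\nu\int_0^1\|\bm{J}_k(\xi_t)(u-v)\|_2^2\,dt-\varepsilon r_\cC^2.$$
The smoothness term is controlled via $\Lambda$-Lipschitzness of $\Psi_k$ together with $\bar{\BH}_k^{-1}\preceq\lambda^{-1}\bm{I}$: the condition $\eta/\lambda\leq 1/\Lambda$ reduces it to $\eta\|\Psi_k(u)-\Psi_k(v)\|_2\|u-v\|_2$, which is reabsorbed into the cross term. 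The residual $2\eta\varepsilon r_\cC^2$ then produces the first stated error $2r_\cC\sqrt{\eta\varepsilon}$ after the square root.

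The crucial step is to match the intermediate-point Jacobians $\bm{J}_k(\xi_t)$ appearing in the convexity bound with the current-point Jacobians $\bm{J}_k(w_k),\bm{J}_k'(w_k')$ appearing in the metric update. Lemma~\ref{lemma:continuity} gives the Jacobian-Lipschitz estimate $\|\bm{J}_k(w_k)-\bm{J}_k(\xi_t)\|_2\leq\sqrt{B}\,\Lip_{\nabla\varphi,\cC}\,\|w_k-\xi_t\|_2$; I would split $\|w_k-\xi_t\|_2\leq\|w_k-v\|_2+\|v-\xi_t\|_2$ and bound the first summand either by the diameter $r_\cC/2$ or by the one-step preconditioned displacement $\|w_k-w_{k-1}\|_2\lesssim\sqrt{B}\Lip_{\ell,\cC}/(\lambda_0\nu)$ (derived from $\lambda_{\min}(\bm{H}_{k-1})\geq\lambda_0$ and inversion of $\nu$-strong convexity to convert gradient magnitude into parameter distance), and the second by $r_\cC$. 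The condition $\alpha/\eta\leq\mu_0^2\nu^2/(8B(\Lambda+\varepsilon))$ then guarantees the absorption is valid, leaving a residual of order $\alpha B^2\Lip_{\nabla\varphi,\cC}^2\,r_\cC^2\bigl(r_\cC+\sqrt{B}\Lip_{\ell,\cC}/(\lambda_0\nu)\bigr)^2$ that matches the second stated error after the square root.

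The main technical obstacle is precisely this Jacobian-matching step: the metric update and the approximate-convexity lower bound probe $\bm{J}_k$ at two genuinely different families of points—the trajectory iterates $w_k,w_k'$ on the two SGN runs versus the test-segment points $\xi_t\in[v,u]$—so no single Lipschitz estimate suffices. Chaining the Jacobian-Lipschitz bound with an a priori trajectory-displacement bound, and carefully exploiting the hyperparameter conditions~\eqref{eqn:hyperpars} so that the absorption leaves only the stated residuals, is where the delicacy lies; the remaining algebra is standard co-coercivity manipulation.
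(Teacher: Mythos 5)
Your overall skeleton (quadratic expansion in the $\bar{\BH}_k$-norm, peel off the metric increment $\bar{\BH}_k-\bar{\BH}_{k-1}$, absorb it using curvature of $\Phi_k$) matches the paper, but the two absorption mechanisms you propose differ from the paper's and, as described, both have genuine gaps. First, the smoothness term: you reduce $\eta^2\|\Psi_k(u)-\Psi_k(v)\|_{\bbh_k^{-1}}^2$ to $\eta\|\Psi_k(u)-\Psi_k(v)\|_2\|u-v\|_2$ and claim it is ``reabsorbed into the cross term.'' With only the strong-convexity lower bound $\langle u-v,\Psi_k(u)-\Psi_k(v)\rangle\geq\nu\int_0^1\|\bm{J}_k(\xi_t)(u-v)\|_2^2\,dt-\varepsilon r_{\cC}^2$ at your disposal, that reabsorption (via Cauchy--Schwarz/Young) unavoidably leaves a residual of order $\eta\,B\Lipf^2 r_{\cC}^2/\nu$, which is $O(1)$ in the width $m$; the lemma's first error term is $2r_\cC\sqrt{\eta\varepsilon}$ with $\varepsilon=B\Lipl\Lipdf=O(1/\sqrt m)$, and downstream (with $\eta\sim1/k$, summed over $k$ steps) your residual would give $\sqrt{Bk}$ instead of $\sqrt{Bk}/m^{1/4}$, destroying the stability bound. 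The missing device is precisely the paper's approximate co-coercivity lemma (a Baillon--Haddad argument applied to $\Phi_k+\tfrac{\varepsilon}{2}\|\cdot\|_2^2$), which cancels the $\eta^2$ term at cost only $4\eta\varepsilon\|u-v\|_2^2$ \emph{and} leaves a spare $-\tfrac{\eta}{2\Lambda}\|\Psi_k(u)-\Psi_k(v)\|_2^2$.

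Second, the metric-increment absorption. The paper does not match Jacobians at $w_k$ to Jacobians at segment points $\xi_t$; it upper-bounds $\alpha\|\bm{J}_k(\cdot)(u-v)\|_1^2$ by $\tfrac{2B}{\mu_0^2\nu^2}\|\Psi_k(u)-\Psi_k(v)\|_2^2$ plus small linearization terms, using Assumption~\ref{assumption:ntk} ($\mu_0$), strong convexity, and Lemma~\ref{lemma:norm}, and then swallows it with the spare $\|\Psi_k(u)-\Psi_k(v)\|_2^2$ term above; this is exactly where the factor $\sqrt{B}\Lipl/(\mu_0\nu)$ (the ``$\lambda_0$'' in the statement is $\mu_0$) and the second condition in \eqref{eqn:hyperpars} come from. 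Your alternative Lipschitz-matching could in principle handle the unprimed half $\tfrac{\alpha}{2}\|\bm{J}_k(w_k)(u-v)\|_2^2$ (and would then not need $\mu_0$ at all), but it fails for the primed half $\tfrac{\alpha}{2}\|\bm{J}_k'(w_k')(u-v)\|_2^2$: $\bm{J}_k'$ differs from $\bm{J}_k$ not by an evaluation-point perturbation but by a swapped data row at index $j^\star$ (an $O(\Lipf)$ row discrepancy whenever $j^\star\in I_k$), which Jacobian-Lipschitzness in $w$ cannot control and which, in this almost-sure statement, cannot be discounted by its $B/n$ probability; it would leave an $O(\sqrt{\alpha}\Lipf r_\cC)$ residual absent from the claimed bound. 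Relatedly, your proposed origin of the $\sqrt{B}\Lipl/(\lambda_0\nu)$ factor -- a one-step displacement bound $\|w_k-w_{k-1}\|$ with $\lambda_0$ a lower bound on $\lambda_{\min}(\bm{H}_{k-1})$ -- does not correspond to anything needed or valid here: $u,v$ are arbitrary points of $\cC$ (in the application $v=w_k'$, not $w_{k-1}$), and only the diameter bound $r_\cC$ is available.
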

    A key part to prove Lemma \ref{lemma:contractivity} is to show an approximate co-coercivity in the overparameterized regime.

    \paragraph{\textbf{\underline{Bounding $(A_{k,2})$.}}} This term corresponds to \emph{gradient mismatch}. Since $\{j^\star \notin  I_k\}\subset\{\Psi_k(\cdot)=\Psi_k'(\cdot)\}$, we obtain
    \begin{align}
        \nonumber \|\bbh_k^{-1}(\Psi_k(w_k')-\Psi_k'(w_k'))\|_{\bbh_k}\nonumber &\leq 2\sup_{w\in\cC}\|\Psi_k(w)\|_{\bbh_k^{-1}}\mathbbm{1}_{\{j^\star\in I_k\}}\\
        &\label{eqn:hk-bound-1}\leq \frac{2B}{\lambda_{\min}^{1/2}(\bbh_k)}\Lip_{\ell,\cC}\Lip_{\varphi,\cC}\mathbbm{1}_{\{j^\star\in I_k\}}.
    \end{align}

    \textbf{\underline{Bounding $(A_{k,3})$ and $(A_{k,3}')$.}} To bound these error terms that stem from the preconditioner mismatch, we use
    \begin{align}\label{eqn:hk-bound-2}
        \|(\bbh_k^{-1}-\bm{H}_k^{-1})\Psi_k(w_k)\|_{\bbh_k}\leq \|\bar{\bm{H}}_k^{-1/2}(\bar{\bm{H}}_k-\bm{H}_k)\bm{H}_k^{-1}\|_2\|\Psi_k(w_k)\|_2.
    \end{align}
    We have
        $$\lambda_{\max}(\bbh_k) \leq \alpha B (k+1)\Lip_{\varphi,\cC}^2+\lambda,$$
       $\sup_{w\in\cC}\|\Psi_k(w)\|_2\leq B\Lip_{\ell,\cC}\Lip_{\varphi,\cC},$
    and $$\|\bm{H}_k'-\bbh_k\|_2=\|\bm{H}_k-\bbh_k\|_2=\frac{1}{2}\|\bm{H}_k-\bm{H}_k'\|_2.$$ The following lemma upper bounds the error term $\bE\|\bm{H}_k-\bm{H}_k'\|_2$.
    \begin{lemma}[Stability of $\bm{H}_k$]
        For any $k\in\bN$,
        \begin{align*}
            \bE[\|\bm{H}_k-\bm{H}_k'\|_2] &\leq 2B\alpha(k+1) r_{\cC}\Lip_{\varphi,\cC}\Lip_{\nabla\varphi,\cC}+\alpha\Lip_{\varphi,\cC}^2\frac{(k+1)B}{n}.
        \end{align*}
    \end{lemma}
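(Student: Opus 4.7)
The plan is to telescope $\bm{H}_k-\bm{H}_k'$ over iterations $t=0,\ldots,k$, decompose each summand over the sampled indices $j\in I_t$, and then isolate the two sources of discrepancy: (i) the iterate drift $w_t\neq w_t'$ on indices where the data coincides, and (ii) the single index $j^\star$ at which the two datasets actually differ. Writing $g_j(w):=\nabla_w\varphi(\bm{x}_j;w)$ and $g_j'(w):=\nabla_w\varphi(\bm{x}_j';w)$, so that $g_j\equiv g_j'$ for all $j\neq j^\star$, I would start from
\[
\bm{H}_k-\bm{H}_k' \;=\; \alpha\sum_{t=0}^{k}\sum_{j\in I_t}\bigl[g_j(w_t)g_j^\top(w_t)-g_j'(w_t')(g_j')^\top(w_t')\bigr],
\]
apply the triangle inequality in operator norm, and split the inner sum according to whether $j=j^\star$.

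For the matching indices $j\neq j^\star$, I would apply the elementary rank-one identity $aa^\top-bb^\top=a(a-b)^\top+(a-b)b^\top$ with $a=g_j(w_t)$ and $b=g_j(w_t')$. Lemma~\ref{lemma:continuity} provides both the uniform gradient bound $\|g_j(w)\|_2\leq \Lip_{\varphi,\cC}$ on $\cC$ and, via the Hessian bound, the Lipschitz estimate $\|g_j(w_t)-g_j(w_t')\|_2\leq \Lip_{\nabla\varphi,\cC}\|w_t-w_t'\|_2$. Using the crude diameter bound $\|w_t-w_t'\|_2\leq r_{\cC}$ (both iterates remain in $\cC$ by the projected step), each such term contributes at most $2\Lip_{\varphi,\cC}\Lip_{\nabla\varphi,\cC}r_{\cC}$ in operator norm. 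Summing over the at most $B$ matching indices per iteration, over $k+1$ iterations, and multiplying by $\alpha$ yields the first claimed term $2\alpha B(k+1)r_{\cC}\Lip_{\varphi,\cC}\Lip_{\nabla\varphi,\cC}$, deterministically.

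For the mismatched index $j^\star$, I would bound $\|g_{j^\star}(w_t)g_{j^\star}^\top(w_t)-g_{j^\star}'(w_t')(g_{j^\star}')^\top(w_t')\|_2$ by $\Lip_{\varphi,\cC}^2$, exploiting that for PSD matrices $A$ and $B$ the symmetric difference satisfies $\|A-B\|_2\leq \max(\|A\|_2,\|B\|_2)$ since $v^\top(A-B)v\in[-\|B\|_2,\|A\|_2]$ for every unit $v$; here both rank-one matrices have norm at most $\Lip_{\varphi,\cC}^2$. This contribution is active only on the event $\{j^\star\in I_t\}$. Taking expectation over the uniform subsampling, using $\bP(j^\star\in I_t)=B/n$, summing over $t$, and multiplying by $\alpha$ produces the second term $\alpha\Lip_{\varphi,\cC}^2(k+1)B/n$.

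The main obstacle I foresee is obtaining the clean constant $\Lip_{\varphi,\cC}^2$ rather than $2\Lip_{\varphi,\cC}^2$ in the mismatched term, since the naive triangle inequality on $\|aa^\top-bb^\top\|_2$ overshoots by a factor of two; the remedy is the PSD comparison noted above, which is sharp for rank-one updates. Beyond this constant tracking, the argument is a routine bookkeeping exercise that relies only on Lemma~\ref{lemma:continuity}, the diameter bound on $\cC$, and the uniform subsampling identity $\bP(j^\star\in I_t)=B/n$; no contraction property of the iteration nor any concentration argument is needed.
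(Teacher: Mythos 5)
Your proposal is correct and follows essentially the same route as the paper's (very terse) proof: split each iteration's Gram-matrix difference into the matching indices, bounded via the $\Lip_{\nabla\varphi,\cC}$-Lipschitz continuity of the gradient and the diameter $r_{\cC}$, and the single index $j^\star$, bounded by $\Lip_{\varphi,\cC}^2$ on the event $\{j^\star\in I_t\}$, then take expectation with $\bP(j^\star\in I_t)=B/n$. Your PSD comparison $\|A-B\|_2\leq\max(\|A\|_2,\|B\|_2)$ is a nice explicit justification of the clean constant $\Lip_{\varphi,\cC}^2$ that the paper asserts without detail.
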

    \paragraph{\textbf{\underline{Bounding $(B_k)$ and $(B_k')$.}}} Finally, we bound the error due to metric (i.e., projection) mismatch.
    \begin{lemma}[Metric mismatch]
        For any $k\in\bN$, we obtain
        \begin{align}
            \|(\pi_k-\bar{\pi}_k)u_k\|_{\bbh_k} \nonumber &\leq \|\bbh_k^{-1/2}(\bm{H}_k-\bbh_k)\|_2\sup_{w\in\cC}\|w-u_k\|_2\\
            \label{eqn:hk-bound-3}&\leq \frac{\|\bm{H}_k-\bm{H}_k'\|_2}{2\lambda^{1/2}_{\min}(\bbh_k)}\sup_{w\in\cC}\|w-u_k\|_2
        \end{align}
        Also, for any $w\in\cC$,
            $\|w-u_k\|_2 \leq  r_{\cC}+\frac{\eta B}{\lambda}\Lip_{\ell,\cC}\Lip_{\varphi,\cC}.$
    \end{lemma}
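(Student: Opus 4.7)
The plan is to exploit the first-order (variational-inequality) characterizations of the two weighted projections and combine them into a single quadratic bound. Writing $v := \pi_k(u_k)$ and $\bar v := \bar\pi_k(u_k)$, each projection is characterized by $\langle \BH_k(v-u_k),\, w-v\rangle \geq 0$ and $\langle \bbh_k(\bar v-u_k),\, w-\bar v\rangle \geq 0$ for all $w \in \cC$. Testing the first with $w=\bar v$ and the second with $w=v$, adding the resulting inequalities, and splitting $\BH_k(v-u_k) = \bbh_k(v-u_k) + (\BH_k-\bbh_k)(v-u_k)$ makes the $\bbh_k$ cross-terms collapse to $-\|\bar v - v\|_{\bbh_k}^2$, which yields
\[
\|\bar v - v\|_{\bbh_k}^2 \;\leq\; \langle (\BH_k - \bbh_k)(v - u_k),\; \bar v - v\rangle.
\]

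From here I would apply Cauchy--Schwarz in the $\bbh_k$ inner product by factoring the right-hand side as $\langle \bbh_k^{-1/2}(\BH_k - \bbh_k)(v-u_k),\; \bbh_k^{1/2}(\bar v - v)\rangle$ and then dividing by $\|\bar v - v\|_{\bbh_k}$ to obtain $\|\bar v - v\|_{\bbh_k} \leq \|\bbh_k^{-1/2}(\BH_k - \bbh_k)\|_2 \cdot \|v - u_k\|_2$. Since $v = \pi_k(u_k) \in \cC$, the second factor is bounded by $\sup_{w\in\cC}\|w-u_k\|_2$, establishing the first stated inequality. The second stated inequality follows from the submultiplicative bound $\|\bbh_k^{-1/2}(\BH_k-\bbh_k)\|_2 \leq \lambda_{\min}^{-1/2}(\bbh_k)\,\|\BH_k-\bbh_k\|_2$ together with the algebraic identity $\BH_k - \bbh_k = \tfrac{1}{2}(\BH_k - \BH_k')$, immediate from the definition $\bbh_k := \tfrac{1}{2}(\BH_k+\BH_k')$.

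The auxiliary bound $\|w-u_k\|_2 \leq r_\cC + \tfrac{\eta B}{\lambda}\Lip_{\ell,\cC}\Lip_{\varphi,\cC}$ for $w \in \cC$ follows from the triangle inequality $\|w - u_k\|_2 \leq \|w-w_k\|_2 + \eta\,\|\BH_k^{-1}\Psi_k(w_k)\|_2$: the first piece is controlled by the diameter $r_\cC$ since $w, w_k \in \cC$, while the second combines $\|\BH_k^{-1}\|_2 \leq 1/\lambda$ (from $\BH_k \succeq \lambda \bm{I}$, which is built into the definition of $\BH_k$) with the pointwise gradient bound $\|\Psi_k(w_k)\|_2 \leq B\,\Lip_{\ell,\cC}\,\Lip_{\varphi,\cC}$, obtained from $\Psi_k = \bm{J}_k^\top G_k$, the Jacobian norm bound in Lemma~\ref{lemma:continuity}, and the local loss-Lipschitz constant from Assumption~\ref{assumption:loss}.

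None of the individual steps appears technically deep; the only real subtlety is careful bookkeeping of which metric each norm and projection refers to, since the combined variational inequality simultaneously mixes $\BH_k$ and $\bbh_k$. The pivotal sign computation is recognizing $\langle \bbh_k(v-\bar v),\, \bar v - v\rangle = -\|\bar v - v\|_{\bbh_k}^2$, which converts the combined inequality into a quadratic bound in $\|\bar v - v\|_{\bbh_k}$ whose linear coefficient is exactly the operator-norm perturbation factor $\|\bbh_k^{-1/2}(\BH_k-\bbh_k)\|_2$ that will couple cleanly into the subsequent $\|\BH_k-\BH_k'\|_2$ stability estimate in the main proof.
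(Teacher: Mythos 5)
Your proposal is correct and follows essentially the same route as the paper: the variational-inequality characterizations of the two projections, adding them so the common-metric terms collapse, Cauchy--Schwarz weighted by $\bbh_k^{-1/2}$, the identity $\BH_k-\bbh_k=\tfrac12(\BH_k-\BH_k')$, and the triangle inequality with $\BH_k\succeq\lambda\bm{I}$ for the auxiliary bound on $\|w-u_k\|_2$. Your bookkeeping (perturbation acting on $\pi_k u_k-u_k$ and collapsing to $-\|\bar v-v\|_{\bbh_k}^2$) in fact delivers the left-hand side in the $\bbh_k$-norm directly, matching the lemma statement slightly more cleanly than the paper's own mixing of $\BH_k$- and $\bbh_k$-norms in the final division step.
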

    The proof of Theorem \ref{thm:stability} follows from substituting the inequalities to establish an upper bound for $\bE\|\Delta_{k+1}\|_{\bbh_k}-\bE\|\Delta_k\|_{\bbh_{k-1}}$, telescoping sum over $k$, and finally noting that $\Delta_0=0$ since $w_0=w_0'$.
\end{proof}

\section{Conclusions}
In this work, we analyzed SGN for deep neural networks in the near-initialization regime. We established optimization and (algorithm-dependent) generalization bounds for SGN, with explicit dependencies on key factors such as damping, overparameterization, batch size, training duration, and the spectrum of the preconditioner. Our analysis demonstrates the robustness of SGN in regimes with ill-conditioned kernels and large loss curvature in deep learning. To the best of our knowledge, this is the first work to establish stability-based generalization bounds for Gauss-Newton in deep learning. Interesting directions for future research include extension of the analysis framework developed in this paper to more general preconditioned (e.g., adaptive gradient or Hessian-based) methods in deep learning, and also extension to other neural architectures.
\newpage

\bibliography{refs}

\appendix
\section{Computational Efficiency of Stochastic Gauss-Newton}
A key idea to make the \textsc{SGN} iterations computationally efficient is extended Kalman filter updates, introduced in \citep{bertsekas1996incremental}.
At iteration $k\in\bN$, we set $\theta_k^{(0)}:=w_k$ and $\bm{Z}_k^{(0)}:=\bm{H}_k$, and $g_k^{(j)}:= \nabla_w \varphi(\bm{x}_{I_k^{(j)}};w_k)$. Then, for $j=1,\ldots,B$, the following update is performed:
\begin{align*}
    \bm{Z}_k^{(j)} &= \bm{Z}_k^{(j-1)} + \alpha\cdot  g_k^{(j)}[g_k^{(j)}]^\top,\\
    \theta_k^{(j)} &= \theta_k^{(j-1)} - M_k^{(j)}\cdot [\bm{Z}_k^{(j)}]^{-1} g_k^{(j)},
\end{align*}
where
$$M_k^{(j)}:=\alpha [g_k^{(j)}]^\top (\theta_k^{(j)}-w_k)-\eta\ell'(\varphi(\bm{x}_{I_k^{(j)}};w_k);\bm{z}_{I_k^{(j)}}).$$
We set $\bm{H}_{k+1}=\bm{Z}_k^{(B)}$ and $w_{k+1}=\pi_{\cC}^{\bm{H}_k}(\theta_k^{(B)})$. For computational efficiency, Sherman-Morrison-Woodbury formula \citep{horn2012matrix} can be used. To that end, let $\tilde{g}_k^{(j)}:=[\bm{Z}_k^{(j-1)}]^{-1}g_k^{(j)}$. Then, for each $j=1,2,\ldots,B$,
\begin{equation*}
    [\bm{Z}_k^{(j)}]^{-1}g_k^{(j)} = \tilde{g}_k^{(j)}\Big(1 - \frac{[\tilde{g}_k^{(j)}]^\top}{[g_k^{(j)}]^\top\tilde{g}_k^{(j)}}\Big).
\end{equation*}
The computational complexity of the overall operation is $\cO(Bp^2)$, whereas the complexity of the original Gauss-Newton update is $\cO(p^3)$. The projection step can be performed in $\cO(pr+r^3)$ for (i) exactly via Sherman-Morrison-Woodbury with $r \leq kB$, and (ii) approximately for tunable $r$ via randomized sketching \citep{pilanci2017newton, xu2016sub}.

\section{Proofs for Section \ref{sec:opt}}\label{app:opt}
The following elementary results will be useful in the proofs.
\begin{lemma}\label{lemma:norm}
    For any $\bm{x}\in\cB_2(0,1)$ and $w,w'\in\cC$, we have
    \begin{equation*}
        |\varphi(\bm{x};w') - \varphi(\bm{x};w)- \nabla^\top\varphi(\bm{x};w)\big(w'-w\big)|\leq \frac{1}{2}\Lip_{\nabla\varphi,\cC}r_{\cC}^2.
    \end{equation*}
    For any $k\in\bN$, let 
    \begin{equation}
        \Bf_k(w) := [\varphi(\bm{x}_j;w)]_{j\in I_k} \in \bR^B
    \end{equation}
    be output vector of the neural network with parameter $w$ evaluated on the minibatch $I_k$, and $$\bm{\epsilon}_k := \Bf_k(\bar{w})-\Bf_k(w_k)-\bm{J}_k(w_k)\big(\bar{w}-w_k\big).$$
    Then, we have
        $\|\Be_k\|_2 \leq \frac{\sqrt{B}\Lip_{\nabla\varphi,\cC}r_{\cC}^2}{2}\quad\mbox{\textit{a.s.}}$
\end{lemma}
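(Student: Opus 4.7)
The plan is to treat both parts as an application of the second-order Taylor remainder estimate, leveraging the Hessian bound $\sup_{w\in\cC}\|\nabla_w^2\varphi(\bm{x};w)\|_2 \leq \Lip_{\nabla\varphi,\cC}$ from Lemma \ref{lemma:continuity}(ii). Since $\cC$ is convex, for any $w,w'\in\cC$ the segment $w_t := w + t(w'-w)$, $t\in[0,1]$, lies entirely in $\cC$, which is what makes the local Hessian bound globally applicable along the segment.

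For part (i), I would fix $\bm{x}\in\cB_2(0,1)$ and apply the integral-form Taylor expansion to the scalar function $g(t) := \varphi(\bm{x}; w_t)$. This gives
\begin{equation*}
\varphi(\bm{x}; w') - \varphi(\bm{x}; w) - \nabla^\top \varphi(\bm{x};w)(w'-w) = \int_0^1 (1-t)(w'-w)^\top \nabla_w^2\varphi(\bm{x}; w_t)(w'-w)\,dt.
\end{equation*}
Taking absolute values, applying Cauchy--Schwarz and the operator-norm bound from Lemma \ref{lemma:continuity}(ii), and using $\|w'-w\|_2 \leq r_{\cC}$ from the diameter definition \eqref{eqn:radius}, the right-hand side is bounded by $\Lip_{\nabla\varphi,\cC} r_{\cC}^2 \int_0^1 (1-t)\,dt = \tfrac{1}{2}\Lip_{\nabla\varphi,\cC} r_{\cC}^2$.

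For part (ii), I would apply (i) coordinatewise with $w = w_k$, $w' = \bar{w}$, and $\bm{x} = \bm{x}_j$ for each $j \in I_k$. Each coordinate of $\Be_k$ is then bounded in absolute value by $\tfrac{1}{2}\Lip_{\nabla\varphi,\cC} r_{\cC}^2$, provided $w_k, \bar{w} \in \cC$ (which holds since $w_k$ lies in $\cC$ by the projection step in \eqref{eqn:sgn} and $\bar{w}\in\cC$ by convention). Taking the Euclidean norm over the $B$ coordinates of the minibatch yields the factor $\sqrt{B}$, giving $\|\Be_k\|_2 \leq \tfrac{\sqrt{B}}{2}\Lip_{\nabla\varphi,\cC} r_{\cC}^2$ almost surely.

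I do not anticipate a genuine obstacle: the content is entirely a Taylor remainder estimate, and the only substantive ingredient is the uniform Hessian bound from Lemma \ref{lemma:continuity}(ii), which is already established. The only minor care point is ensuring the segment $w_t$ stays inside $\cC$ so that the Hessian bound applies along the whole path, but this is immediate from convexity of $\cC$.
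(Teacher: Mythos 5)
Your proof is correct, and it is exactly the standard argument the paper relies on: the paper states Lemma \ref{lemma:norm} as an ``elementary result'' without an explicit proof (the proof printed immediately after it is that of Lemma \ref{lemma:continuity}), and the intended justification is precisely your integral-form Taylor remainder bounded via the Hessian estimate of Lemma \ref{lemma:continuity}(ii), convexity of $\cC$, and the diameter $r_{\cC}$, followed by the coordinatewise application over the minibatch giving the $\sqrt{B}$ factor. No gaps.
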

\begin{proof}[Proof of Lemma 1]
    Let $\mathrm{D}_{\bm{x}^{(h-1)}}\bm{x}^{(h)}=\frac{1}{\sqrt{m}}\bm{\Delta}_hW^{(h)}$, where $$\bm{\Delta}_h := \mathrm{diag}\Big(\vec{\sigma}'\Big(W^{(h)}\bm{x}^{(h-1)}(\bm{W})\Big)\Big).$$ Then, $\|\mathrm{D}_{\bm{x}^{(h-1)}}\bm{x}^{(h)}\|_2\leq \sigma_1\kappa_{\cC}.$ Let 
    \begin{align*}
        u_H &:= \nabla_{\bm{x}^{(H)}(\bm{W})}\varphi(\bm{x};w)=c,\\
        u_{h-1} &:= \mathrm{D}_{\bm{x}^{(h-1)}}^\top\bm{x}^{(h)}u_h,~h=1,2,\ldots,H.
    \end{align*}
    Then, for any $h\in[H],$ $$\|u_h\|_2 \leq (\sigma_1\kappa_{\cC})^{H-h}\zeta_{\cC}.$$ We have
    \begin{align*}
        \nabla_c\varphi(\bm{x};w) &= \bm{x}^{(h)}(\bm{W}),\\
        \nabla_{W^{(h)}}\varphi(\bm{x};w)&= \frac{1}{\sqrt{m}}\bm{\Delta}_hu_h\big(\bm{x}^{(h-1)}(\bm{W})\big)^\top.
    \end{align*}
    Using the bound for $\|u_h\|_2$ established above, and $\|\bm{x}^{(h)}(\bm{W})\|_2 \leq \sigma_0$, and noting that $\|\nabla_{\bm{W}}\varphi(\bm{x};w)\|_2 \leq \zeta_{\cC}\frac{\sigma_0\sigma_1}{\sqrt{m}}\sqrt{H}(\sigma_1\kappa_{\cC})^{H-1}$, we obtain
    \begin{equation}
        \|\nabla_w\varphi(\bm{x};w)\|_2 \leq \sigma_0 + \zeta_{\cC}\frac{\sigma_0\sigma_1}{\sqrt{m}}\sqrt{H}(\sigma_1\kappa_{\cC})^{H-1}.
    \end{equation}

    For the second part, let $\delta w=(\delta W^{(1)},\ldots,\delta W^{(H)},\delta c)$ with $\|\delta w\|_2=1$, and let
$\delta(\cdot)$ denote the Fr\'echet derivative in direction $\delta w$. Then
\[
\delta\!\big(\nabla_c\varphi\big)=\delta \mathbf{x}^{(H)},\qquad
\delta\!\big(\nabla_{W^{(h)}}\varphi\big)
=\frac{1}{\sqrt{m}}\Big[(\delta\Delta_h)u_h\,\mathbf{x}^{(h-1)\top}
+\Delta_h\,\delta u_h\,\mathbf{x}^{(h-1)\top}
+\Delta_h u_h\,\delta\mathbf{x}^{(h-1)\top}\Big].
\]
Using $\|\delta\Delta_h\|_2\le \sigma_2\|\delta(W^{(h)}\mathbf{x}^{(h-1)})\|_2$ and
\[
\|\delta\mathbf{x}^{(t)}\|_2\le 2\sigma_0\sigma_1\frac{H}{\sqrt{m}}\kappa_{\mathcal{C}}^{\,t-1},
\qquad
\|\delta u_h\|_2\le (\sigma_2\sigma_0+\sigma_1^2\zeta_{\mathcal{C}})\frac{H}{\sqrt{m}}\kappa_{\mathcal{C}}^{\,H-h},
\]
together with $\|\mathbf{x}^{(t)}\|_2\le\sigma_0$, $\|u_h\|_2\le(\sigma_1\kappa_{\mathcal{C}})^{H-h}\zeta_{\mathcal{C}}$, and
$\|\Delta_h\|_2\le\sigma_1$, each bracketed term above is bounded by $(\sigma_2\sigma_0+\sigma_1^2\zeta_{\mathcal{C}})\,\frac{H}{m}\,\kappa_{\mathcal{C}}^{\,H-1},$ which yields
\[
\|\nabla_w^2\varphi(\mathbf{x};w)\|_2
\;\le\; 8\,(\sigma_2\sigma_0+\sigma_1^{2}\zeta_{\mathcal{C}})\,
\left[\frac{H\,\kappa_{\mathcal{C}}^{\,H-1}}{\sqrt{m}}
+\left(\frac{H\,\kappa_{\mathcal{C}}^{\,H-1}}{\sqrt{m}}\right)^{\!2}\right].
\]
\end{proof}

\begin{proof}[Proof of Theorem 1]
The key ideas behind analyzing this variant of Gauss-Newton is to interpret it as a combination of a variable-metric method akin to adaptive gradient methods \citep{duchi2018introductory}, which have path-dependent and time-variant Bregman divergences. In essence, we exploit this insight in quantifying the impact of time-variance of the Bregman divergence ($(a_k)$ below) and bounding the gradient norm ($(b_{k,2})$ below). The most significant component of the Lyapunov drift is the negative drift term ($(b_{k,1})$ below), which we handle by using the specific properties of SGN.

    First, let $$\mathscr{L}_k(w) := \|w-\bar{w}\|_{\bm{H}_k}^2,\quad w\in\cC$$ be the Lyapunov function. We apply the following decomposition \citep{duchi2018introductory}:
    \begin{align*}
        \sL_{k+1}(w_{k+1}) &= \sL_k(w_{k+1})+\|w_k-\bar{w}\|_{\bm{H}_{k+1}-\bm{H}_k}^2\\
        &\leq \|u_k-\bar{w}\|_{\BH_k}+\|w_k-\bar{w}\|_{\bm{H}_{k+1}-\bm{H}_k}^2\\
        &= \sL_k(w_k)+2\eta\underbrace{(\bar{w}-w_k)^\top\Psi_k(w_k)}_{=:(b_{k,1})} + \eta^2\underbrace{\|\Psi_k(w_k)\|_{\BH_k^{-1}}^2}_{=:(b_{k,2})} + \underbrace{\|w_k-\bar{w}\|_{\bm{H}_{k+1}-\bm{H}_k}^2}_{=:(a_k)}.
    \end{align*}
     Consider the filtration $\sF_k:=\sigma(I_0,I_1,\ldots,I_k),~k\in\bN$, which is the history of the optimization path up to time $k$. As elementary properties, notice that $w_k$ is $\sF_{k-1}$-measurable, and $$\bE[\Phi_k(w_k)\mid\sF_{k-1}]=B\cdot \er(\bv(w_k))\Rightarrow \bE[\Phi_k(w_k)-\Phi_k(\bar{w})\mid \sF_{k-1}]=B\Big(\er(\bv(w_k))-\er(\bar{w})\Big)$$ for each $k$.
    \paragraph{\textbf{\underline{Bounding $(b_{k,1})$.}}} Recall that $\Psi_k(w) = \bm{J}_k^\top(w)\bm{G}_k(w)$ and $\Be_k=\Bf_k(\bar{w})-\Bf_k(w_k)-\bm{J}_k(w_k)\big(\bar{w}-w_k)$. Then,
    \begin{align*}
        (b_{k,1}) &= \bm{G}_k^\top(w_k)\bm{J}_k(w_k)\big(\bar{w}-w_k\big)\\
        &= \bm{G}_k^\top(w_k)\Big(\Bf_k(\bar{w})-\Bf_k(w_k)-\Be_k\Big)\\
        &\leq \bm{G}_k^\top(w_k)\big(\Bf_k(\bar{w})-\Bf_k(w_k)\big) + \frac{B}{2}\cdot\Lip_{\ell,\cC}\cdot\Lip_{\nabla\varphi,\cC}\cdot r_{\cC}^2,
    \end{align*}
    since $\|\bm{G}_k(w)\|_2\leq \sqrt{B}\Lip_{\ell,\cC}$ and $\|\Be_k\|_2 \leq \frac{1}{2}\Lipdf r_{\cC}^2$ by Lemma \ref{lemma:norm}. Using the $\nu$-strong convexity of $\ell$,
    \begin{equation*}
        \bm{G}_k^\top(w_k)\big(\Bf_k(\bar{w})-\Bf_k(w_k)\big) \leq \Phi_k(\bar{w})-\Phi_k(w_k) - \frac{\nu}{2}\|\Bf_k(w_k)-\Bf_k(\bar{w})\|_2^2.
    \end{equation*}
    Let $V_k := \cE(w_k)$ be the optimality gap. Then, we have
    \begin{equation*}
        \bE[(b_{k,1})\mid\sF_{k-1}] \leq -B\cdot V_k -\frac{\nu}{2}\bE[\|\Bf_k(w_k)-\Bf_k(\bar{w})\|_2^2\mid \sF_{k-1}] + \frac{B}{2}\cdot\Lip_{\ell,\cC}\cdot\Lip_{\nabla\varphi,\cC}\cdot r_{\cC}^2.
    \end{equation*}
    By the law of iterated expectation, we obtain
    \begin{equation}\label{eqn:b-k1}
        \bE[(b_{k,1})] \leq -B\cdot\bE[V_k]-\frac{\nu}{2}\bE[\|\Bf_k(w_k)-\Bf_k(\bar{w})\|_2] + \frac{B}{2}\cdot\Lip_{\ell,\cC}\cdot\Lip_{\nabla\varphi,\cC}\cdot r_{\cC}^2.
    \end{equation}

    \paragraph{\textbf{\underline{Bounding $(b_{k,2})$.}}} This part corresponds to the squared norm of the preconditioned gradient update with respect to the $\bm{H}_k$-norm. The initial term is bounded by using Sherman-Morrison-Woodbury matrix identity, and the proceeding terms are bounded by exploiting trace inequalities (building on \cite{hazan2007logarithmic}) and the incremental nature of $\bm{H}_k$.

    First, note that 
    \begin{align*}
        \|\bm{J}_k^\top(w_k)\BG_k(w_k)\|_{\bm{H}_{k}^{-1}}^2 &= \BG_k^\top (w_k)\bm{J}_k(w_k)\bm{H}_{k}^{-1}\bm{J}_k^\top(w_k)\BG_k(w_k)\\
        &\leq B\cdot \Lipl^2\cdot \rr{tr}\Big(\bm{J}_k^\top(W_k)\bm{J}_k(w_k)\bm{H}_k^{-1}\Big),
    \end{align*}
    where the last inequality follows from the circular shift invariance of $\rr{tr}$. By Klein's inequality (Lemma 1 in \cite{landford1967mean}, also see \cite{hazan2007logarithmic}), we have 
    \begin{equation*}
        \rr{tr}\log\bm{H}_{k+1}-\alpha\cdot \rr{tr}\Big(\bm{J}_{k+1}^\top(w_{k+1})\bm{J}_{k+1}(w_{k+1})\bm{H}_{k+1}^{-1})\Big)-\rr{tr}\log\bm{H}_k \geq 0.
    \end{equation*}
    Hence, we have
    \begin{align*}
        \sum_{k=0}^{t-1}(b_{k,2}) &\leq B\cdot \Lipl^2\Bigg(\|\bm{J}_0(w_0)\bm{H}_0^{-1}\bm{J}_0^\top(w_0)\|_2+\sum_{k=1}^{t-1}\rr{tr}\Big(\bm{J}_k^\top(w_k)\bm{J}_k(w_k)\bm{H}_k^{-1}\Big)\Bigg)\\
        &\leq B\cdot \Lipl^2\Big(\|\bm{J}_0(w_0)\bm{H}_0^{-1}\bm{J}_0^\top(w_0)\|_2+\frac{1}{\alpha}\big(\rr{tr}\log \bm{H}_{t}-\rr{tr}\log\bm{H}_0\big)\Big).
    \end{align*}
    In order to bound $\|\bm{J}_0(w_0)\bm{H}_0^{-1}\bm{J}_0^\top(w_0)\|_2$, we use the Sherman-Morrison-Woodbury matrix identity \citep{horn2012matrix, cayci2024riemannian}:
    \begin{equation*}
        \bm{J}_0(w_0)\bm{H}_0^{-1}\bm{J}_0^\top(w_0) = \frac{1}{\lambda}\Big(\bm{K}_0-\frac{\alpha}{\lambda}\bm{K}_0[\bm{I}+\alpha\lambda^{-1}\bm{K}_0]^{-1}\bm{K}_0\Big),
    \end{equation*}
    where $\bm{K}_0:=\bm{J}_0(w_0)\bm{J}_0^\top(w_0)\in\bR^{n\times n}$. If $(\mu^2,u)\in\bR^+\times\bR^n$ is an eigenpair of $\bm{K}_0$, then $\Big(\frac{\mu^2}{\lambda+\alpha\mu^2}, u\Big)$ is an eigenpair of $\bm{J}_0(w_0)\bm{H}_0\bm{J}_0^\top(w_0)$. Note that $s\mapsto \frac{s}{\lambda+\alpha s}$ is an increasing function, thus
    \begin{align*}
        \|\bm{J}_0(w_0)\bm{H}_0\bm{J}_0^\top(w_0)\|_2 &\leq \frac{\|\bm{K}_0\|_2}{\lambda+\alpha\|\bm{K}_0\|_2}\\
        &\leq \frac{B\cdot \Lipf^2}{\lambda + \alpha\cdot B\cdot \Lipf^2}
    \end{align*}
    where the second line follows from
    \begin{equation*}
        \sup_{w\in\cC}\|\bm{J}_k(w)\|_2 \leq \sqrt{B}\Lipf\quad{a.s.\mbox{ for all }k\in\bN}.
    \end{equation*}
    Also, note that $$\rr{tr}\log\bm{H}_t = \log\det\bm{H}_t,~t\in\bN.$$
    Putting everything together, we obtain:
    \begin{align*}
        \sum_{k=0}^{t-1}(b_{k,2}) \leq B\cdot \Lipl^2\Big(\frac{B\cdot \Lipf^2}{\lambda + \alpha\cdot B\cdot \Lipf^2} + \frac{1}{\alpha}\cdot\log\frac{\det\bm{H}_t}{\det \bm{H}_0}\Big).
    \end{align*}
    For any $\gamma > 0$, let $\lambda := \gamma\alpha B \Lipf^2$. Substituting this into the above inequality and taking expectation yields
    \begin{equation}\label{eqn:b-k2}
        \sum_{k=0}^{t-1}\bE\left[(b_{k,2})\right] \leq \frac{B\Lipl^2}{\alpha}\Big(\frac{1}{1+\gamma}+\bE\log\frac{\det\bm{H}_t}{\det\bm{H}_0}\Big).
    \end{equation}

    \paragraph{\textbf{\underline{Bounding $(a_k)$.}}} Note that the Bregman divergence in SGN updates are time-variant, and this error term quantifies the impact of this metric change at $k$-th iteration. First, note that $$\|w_{k+1}-\bar{w}\|_{\bm{H}_{k+1}-\bm{H}_k}^2 = \alpha \|\bm{J}_{k+1}(w_{k+1})(w_{k+1}-\bar{w})\|^2.$$ By using Lemma \ref{lemma:norm} and $(x+y)^2\leq 2(x^2+y^2)$,
    \begin{align*}
        \|\bm{J}_{k+1}(w_{k+1})(w_{k+1}-\bar{w})\|^2 &= \| \bm{\epsilon}_{k+1} - \Bf_{k+1}(\bar{w})+\Bf_{k+1}(w_{k+1})\|_2^2\\
        &\leq \frac{{B}\Lip^2_{\nabla\varphi,\cC}r_{\cC}^4}{2} + 2\cdot \|\Bf_{k+1}(\bar{w})-\Bf_{k+1}(w_{k+1})\|_2^2.
    \end{align*}
    Hence, we obtain
    \begin{equation}\label{eqn:a-k}
        \sum_{k=0}^{t-1}\bE[(a_k)] \leq \alpha t B \Lipdf^2r_{\cC}^4 + 2\alpha\sum_{k=1}^t\bE\|\Bf_{k}(\bar{w})+\Bf_{k}(w_{k})\|_2^2.
    \end{equation}

    \paragraph{\underline{\textbf{Drift analysis.}}} Note that
    \begin{align*}
        \bE[\sL_t(w_t)]-\bE[\sL_0(w_0)] &= \sum_{k=0}^{t-1}\bE[\sL_{k+1}(w_k)-\sL_k(w_k)]\\
        &\leq 2\eta \sum_{k=0}^{t-1}\bE[(b_{k,1})] + \eta^2\sum_{k=0}^{t-1}\bE[(b_{k,2})] + \sum_{k=0}^{t-1}\bE[(a_k)].
    \end{align*}
    Substituting \eqref{eqn:b-k1},\eqref{eqn:b-k2} and \eqref{eqn:a-k} into the above inequality, we obtain
    \begin{align*}
        \bE[\sL_t(w_t)]-\bE[\sL_0(w_0)] 
        &\leq -2\eta B\sum_{k=0}^{t-1}\bE[V_k] +(2\alpha-\nu\eta)\sum_{k=1}^{t-1}\bE\|\Bf_{k}(\bar{w})+\Bf_{k}(w_{k})\|_2^2\\
        &+2\alpha\bE\|\bm{f}_t(w_t)-\bm{f}_t(\bar{w})\|_2^2+\eta t B \Lipl\Lipdf r_{\cC}^2\\
        &+\frac{\eta^2 B \Lipl^2}{\alpha}\Big(\frac{1}{1+\gamma}+\bE\log\frac{\det\bm{H}_t}{\det\bm{H}_0}\Big)+\frac{\alpha t B \Lipdf^2r_{\cC}^4}{2}.
    \end{align*}
    By rearranging terms and noting that $\sL_t(\cdot)>0$ and
    $$
    \|\bm{f}_t(\bar{w})-\bm{f_t}(w_t)\|_2^2=\sum_{j\in I_t}|\varphi(\bm{x}_j;w_t)-\varphi(\bm{x}_j;\bar{w})|^2 \leq B\Lipf^2r_{\cC}^2,
    $$
    we obtain
    \begin{align*}
        \frac{1}{t}\sum_{k=0}^{t-1}\bE[V_k]&\leq \frac{\bE\sL_0(w_0)}{2\eta Bt} + \frac{\alpha}{\eta t}\Lipf^2r_{\cC}^2 + \frac{\eta}{\alpha}\frac{\Lipl^2}{2t}\Big(\frac{1}{1+\gamma}+\bE\log\frac{\det\bm{H}_t}{\det\bm{H}_0}\Big)\\
        &+ \frac{1}{4}\Lipl r_{\cC}^2\Lipdf + \frac{1}{4}\cdot\frac{\alpha}{\eta}\cdot r_{\cC}^4\cdot \Lipdf^2.
    \end{align*}
    We have
    \begin{align*}
        \sL_0(w_0)=\|w_0-\bar{w}\|_{\BH_0}^2 &\leq r_{\cC}^2\Big(\alpha B \Lipf^2+\lambda\Big)\\
        &= \alpha\cdot r_{\cC}^2\cdot (\gamma+1)\cdot B\cdot \Lipf^2
    \end{align*}
    almost surely. Using these and the definition $\xi:=\frac{\eta}{\alpha}$, we finally get
    \begin{align}\label{eqn:final-opt}
        \begin{aligned}
        \frac{1}{t}\sum_{k=0}^{t-1}\bE[V_k]&\leq \frac{\gamma+2}{\xi t}\cdot \Lipf^2\cdot r_{\cC}^2 + \xi \frac{\Lipl^2}{2t}\Big(\frac{1}{1+\gamma}+\bE\log\frac{\det\bm{H}_t}{\det\bm{H}_0}\Big)\\
        &+ \frac{1}{4}\Lipl r_{\cC}^2\Lipdf + \frac{1}{4\xi}\cdot r_{\cC}^4\cdot \Lipdf^2.
        \end{aligned}
    \end{align}
    This concludes the proof of the first part.

    In order to prove the average-iterate convergence result, we first define the following linearization:
    \begin{equation*}
        \varphilin(\bm{x};w) = \varphi(\bm{x};w_0)+\nabla^\top\varphi(\bm{x};w_0)\big(w-w_0\big),\quad \bm{x}\in\cB_2(0,1),w\in\cC,
    \end{equation*}
    and use the fact that 
    \begin{equation}\label{eqn:linearization}
        \sup_{\substack{\bm{x}\in\cB_2(0,1)\\ w\in\cC}}|\varphi(\bm{x};w)-\varphilin(\bm{x};w)|\leq \frac{1}{2}\Lipdf r_{\cC}^2.
    \end{equation}
    Then, for any $w\in\cC$,
    \begin{align*}
        \er(\bm \varphi(w)) &= \frac{1}{n}\sum_{j=1}^n\ell(\varphilin(\bm{x}_j;w);\bm{z}_j)+\frac{1}{n}\sum_{j=1}^n \Big(\ell(\varphi(\bm{x}_j;w);\bm{z}_j)-\ell(\varphilin(\bm{x}_j;w);\bm{z_j})\Big)\\
        &\leq \frac{1}{n}\sum_{j=1}^n\ell(\varphilin(\bm{x}_j;w);\bm{z}_j) + \frac{1}{2}\Lipl\Lipdf r_{\cC}^2,
    \end{align*}
    and similarly
    \begin{align*}
        \frac{1}{n}\sum_{j=1}^n\ell(\varphilin(\bm{x}_j;w);\bm{z}_j) \leq \er(\bv(w)) + \frac{1}{2}\Lipl\Lipdf r_{\cC}^2.
    \end{align*}
    Now, note that $w\mapsto\varphilin(\bm{x};w)$ is an affine function, and $y'\mapsto \ell(y';\bm{z})$ is (strongly) convex, which implies that $w\mapsto \ell(\varphilin(\bm{x}_j;w);\bm{z}_j)$ is convex for each $j\in[n]$. Thus, Jensen's inequality implies
    \begin{align}
        \frac{1}{n}\sum_{j\in[n]}\ell(\varphilin(\bm{x}_j;\widehat{w}_k); \bm{z}_j) &\leq \frac{1}{k}\sum_{t=0}^{k-1}\frac{1}{n}\sum_{j\in[n]}\ell(\varphilin(\bm{x}_j;w_t); \bm{z}_j)\\
        &\leq \frac{1}{k}\sum_{t=0}^{k-1}\er(\bv(w_t)) + \frac{1}{2}\Lipl\Lipdf r_{\cC}^2.
    \end{align}
    Finally, using, we obtain
    \begin{equation*}
        \er(\bm\varphi(\widehat{w}_k)) \leq \frac{1}{n}\sum_{j\in[n]}\ell(\varphilin(\bm{x}_j;\widehat{w}_k);\bm{z}) + \frac{1}{2}\Lipl\Lipdf r_{\cC}^2 \leq \frac{1}{k}\sum_{t=0}^{k-1}\er(\bm\varphi(w_t)) + \Lipl\Lipdf r_{\cC}^2.
    \end{equation*}
    Taking expectation yields
    \begin{align}
        \nonumber \bE[\er(\bm\varphi(\widehat{w}_k))]-\er(\bm\varphi(\bar{w})) &\leq \underbrace{\frac{1}{k}\sum_{t=0}^{k-1}\bE[V_t]}_{(\spadesuit)} + \Lipl\Lipdf r_{\cC}^2\\
        &\begin{aligned}\label{eqn:avg-iterate-error}&\leq \frac{\gamma+2}{\xi k}\cdot \Lipf^2\cdot r_{\cC}^2 + \xi \frac{\Lipl^2}{2k}\Big(\frac{1}{1+\gamma}+\bE\log\frac{\det\bm{H}_k}{\det\bm{H}_0}\Big)\\
        &\qquad\qquad\quad\quad + \frac{5}{4}\Lipl r_{\cC}^2\Lipdf + \frac{1}{4\xi}\cdot r_{\cC}^4\cdot \Lipdf^2,
        \end{aligned}
    \end{align}
    which concludes the proof.
\end{proof}

\begin{proof}[Proof of Proposition 1]

    Let $$\bm{S}_k := \sum_{t=0}^k\bm{J}_t^\top(w_t)\bm{J}_t(w_t),\quad k\in\bN.$$ Then, we have $$\log\frac{\det\bm{H}_k}{\det\bm{H}_0}=\log\det\Big(\bm{I}+\frac{\alpha}{\lambda}\bm{S}_k\Big).$$ Denote the non-zero eigenvalues of $\bm{S}_k$ by $\mu_{i},~i=1,2,\ldots,r'_k$ where $r'_k=\rr{rank}(\bm{S}_k)\leq p$ is the intrinsic rank. Then, using
    \begin{align*}
        \det\big(\bm{I}+\frac{\alpha}{\lambda}\bm{S}_k\big)=\sum_{i=1}^{r'_k}\log(1+\frac{\alpha}{\lambda}\mu_i)&=r'_k\cdot\frac{1}{r'_k}\sum_{i=1}^{r'_k}\log(1+\frac{\alpha}{\lambda}\mu_i)\\
        &\leq r'_k\log\Big(1+\frac{\alpha/\lambda}{r'_k}\sum_{i=1}^{r'_k}\mu_i\Big),
    \end{align*}
    where the last inequality stems from Jensen's inequality and the concavity of $x\mapsto\log(1+x)$. Note that $\sum_{i=1}^{r'_k}\mu_i=\rr{tr}(\bm{S}_k)$ and furthermore $\rr{tr}(\bm{S}_k) \leq (k+1)B\Lipf^2$. Thus,
    \begin{equation}
        \log\frac{\det\bm{H}_k}{\det\bm{H}_0}\leq r'_k\log\Big(1+\frac{\alpha}{\lambda r'_k}(k+1)B\Lipf^2\Big).
    \end{equation}
    Note that $r'_k\leq p$, which gives the first inequality. For the second inequality, note that 
    \begin{equation}
        \bE\log\frac{\det\bm{H}_k}{\det\bm{H}_0}\leq \log\det\Big(\bm{I}+\frac{\alpha}{\lambda}B(k+1)\bar{\Sigma}_{k+1}\Big).
    \end{equation}
    Then, 
    \begin{align*}
        \log\det\Big(\bm{I}+\frac{\alpha}{\lambda}B(k+1)\bar{\Sigma}_{k}\Big) &\leq \sum_{i=1}^{\bar{r}_{k}}\log(1+\frac{\alpha}{\lambda}B(k+1)\bar{\mu}_i)\\
        &\leq \sum_{i}\Big(\log\Big(1+\frac{\alpha}{\lambda}B(k+1)\Big)+\log(1+\bar{\mu}_i)\Big)\\&\leq \bar{r}_k\log\Big(1+\frac{\alpha}{\lambda} B(k+1)\Big)+\bar{r}_k\log(1+\rr{tr}(\bar{\Sigma}_k)/\bar{r}_k)
    \end{align*}
    The proof follows by noting that $\lambda = \gamma \alpha B\Lipf^2$:
    \begin{equation}
        \bE[\log\frac{\det\bm{H}_k}{\det\bm{H}_0}]\leq \bar{r}_k\log\Big(1+\frac{k+1}{\gamma\Lipf^2}\Big)+\bar{r}_k\log(1+\rr{tr}(\bar{\Sigma_k}/\bar{r}_k).
    \end{equation}
\end{proof}

\begin{proof}[Proof of Corollary 1]
    The idea here is to construct a parameter set $\cC$ such that there exists $w_f^\star\in\cC$ that approximates $\{f^\star(\bm{x}_j):j\in[n]\}$ well. To that end, first let
    \begin{equation}
        c_i^\star := c_0^{(i)}+\frac{1}{\sqrt{m}}v_c(u_0^{(i)})\quad \mbox{ and }\quad W_i^\star:=W_0^{(i)} + \frac{1}{\sqrt{m}}v_W(u_0^{(i)}),
    \end{equation}
    where $(v_c,v_W)$ is the transportation mapping. Set $w_f^\star := [(c_i^\star,W_i^\star)]_{i\in[m]}$. Note that \begin{align*}
        \|w_f^\star-w_0\|_2^2 &= \sum_{i=1}^m(\underbrace{|c_i^\star-c_0^{(i)}|^2}_{\leq \frac{1}{m}\bar{v}_c^2}+\underbrace{\|w_i^\star-w_0^{(i)}\|_2^2}_{\leq \frac{1}{m}\bar{v}_W^2})\\ &\leq \|(\bar{v}_c,\bar{v}_W)\|_2^2,
    \end{align*}
    hence we always have
    \begin{equation}
        w_f^\star \in \cB_2(w_0, \|(\bar{v}_c,\bar{v}_W)\|_2)\subset \cC.
    \end{equation}
    Next, we show that $\max_{i\in[n]}|f^\star(\bm{x}_i)-\varphi(\bm{x}_i;w_f^\star)|\lesssim \sqrt{\frac{\log(2n/\delta)}{m}}$ with high probability over the random initialization to bound $\er(\bv(w_f^\star))$. Fix $\bm{x}\in\cB_2(0,1)$ first. Then,
    \begin{align}
        \varphilin(\bm{x};w_f^\star) &:= \underbrace{\varphi(\bm{x};w_0)}_{=0}+\nabla^\top\varphi(\bm{x};w_0)(w_f^\star-w_0)\\
        &=\frac{1}{m}\sum_{i\in[m]}\Big(\sigma(\langle W_0^{(i)},\bm{x}\rangle)v_c(u_0^{(i)}) + c_0^{(i)}v_W^\top(u_0^{(i)})\bm{x}\sigma'(\langle W_0^{(i)},\bm{x}\rangle)\Big)\overset{a.s.}{\underset{m\rightarrow\infty}\longrightarrow} f^\star(\bm{x}),
    \end{align}
    by the Strong Law of Large Numbers. Now, for each $i\in[m]$, 
    \begin{equation*}
        |\sigma(\langle W_0^{(i)},\bm{x}\rangle)v_c(u_0^{(i)})| \leq \sigma_0\bar{v}_c,
    \end{equation*}
    and
    $$
        |c_0^{(i)}v_W^\top(u_0^{(i)})\bm{x}\sigma'(\langle W_0^{(i)},\bm{x}\rangle)| \leq \bar{v}_W\sigma_1|c_0^{(i)}|.
    $$
    For the sub-Gaussian $\psi_2$-norm (see Definition 2.5.6 in \cite{vershynin2018high}), these two inequalities imply
    \begin{align*}
        \|\sigma(\langle W_0^{(i)},\bm{x}\rangle)v_c(u_0^{(i)})\|_{\psi_2} &\leq \frac{\sigma_0\bar{v}_c}{\sqrt{\log 2}},\\
        \|c_0^{(i)}v_W^\top(u_0^{(i)})\bm{x}\sigma'(\langle W_0^{(i)},\bm{x}\rangle)\|_{\psi_2} &\leq C \bar{v}_W\sigma_1
    \end{align*}
    for an absolute constant (Exercise 2.5.8 in \cite{vershynin2018high}). Let
    \begin{equation}
        X_i := \sigma(\langle W_0^{(i)},\bm{x}\rangle)v_c(u_0^{(i)}) + c_0^{(i)}v_W^\top(u_0^{(i)})\bm{x}\sigma'(\langle W_0^{(i)},\bm{x}\rangle),\quad i\in[m].
    \end{equation}
    Then, since $\|\cdot\|_{\psi_2}$ is a norm on sub-Gaussian random variables, we obtain $$\|X_i-\mathbb{E}[X_i]\|_{\psi_2}\leq 2\max\{\sigma_0/\sqrt{\log 2}, C\sigma_1\}\|(\bar{v}_c,\bar{v}_W)\|_2=:K,\quad i\in[m].$$ Thus, by Generalized Hoeffding's inequality for $\psi_2$-bounded random variables (Theorem 2.6.3 in \cite{vershynin2018high}), we obtain
    \begin{equation*}
        \max\Big\{\Big|\frac{1}{m/2}\sum_{i=1}^{m/2}X_i-f^\star(\bm{x})\Big|, \Big|\frac{1}{m/2}\sum_{i=m/2+1}^{m}X_i-f^\star(\bm{x})\Big|\Big\} \leq cK\sqrt{\frac{2\log(2/\delta)}{m}},
    \end{equation*}
    with probability at least $1-\delta$ for some absolute constant $c > 0$. Note that $u_0^{(i)}$ and $u_0^{(i+m/2)}$ are correlated for each $i\in[m/2]$ due to symmetric initialization, and the bound above is divided into two halves to handle this. Using triangle inequality, we obtain
    \begin{equation}
    \Big|\underbrace{\frac{1}{m}\sum_{i=1}^{m}X_i}_{=\varphilin(\bm{x};w_f^\star)}-f^\star(\bm{x})\Big| \leq 2cK\sqrt{\frac{2\log(2/\delta)}{m}}.
    \end{equation}
    with probability at least $1-\delta$. Recall that the bound above holds for $\bm{x}\in\cB_2(0,1)$, thus holds for $\bm{x}_j,~j\in[n]$ individually. By using union bound, we obtain
    \begin{equation}
        \max_{j\in[n]}\Big|\varphilin(\bm{x}_j;w_f^\star)-f^\star(\bm{x}_j)\Big| \leq 2cK \sqrt{\frac{2\log(2n/\delta)}{m}}.
    \end{equation}
    Finally, \eqref{eqn:linearization} implies that
    \begin{equation}
        \max_{j\in[n]}\Big|\varphi(\bm{x}_j;w_f^\star)-f^\star(\bm{x}_j)\Big| \leq 2cK \sqrt{\frac{2\log(2n/\delta)}{m}}+\frac{1}{2}\Lipdf r_{\cC}^2
    \end{equation}
    with probability at least $1-\delta$ over the random initialization. Hence,
    \begin{align}
        \nonumber \er(\bv(w_f^\star))&=\frac{1}{n}\sum_{j=1}^n\ell(\varphi(\bm{x}_j;w_f^\star);\bm{z}_j)\\
        \nonumber &= \frac{1}{n}\sum_{j=1}^n\ell(\varphi(\bm{x}_j;w_f^\star);\bm{z}_j)-\frac{1}{n}\sum_{j=1}^n\ell(f^\star(\bm{x}_j);\bm{z}_j)\\
        \nonumber &\leq \frac{\Lipl}{n}\sum_{j=1}^n |\varphi(\bm{x}_j;w_f^\star)-f^\star(\bm{x}_j)|\\
        \label{eqn:wf-risk-bound}&\leq \Lipl \Big(2cK \sqrt{\frac{2\log(2n/\delta)}{m}}+\frac{1}{2}\Lipdf r_{\cC}^2\Big).
    \end{align}
    Hence, $w_f^\star \in \cC$ and $\er(\bv(w_f^\star)) \leq c'$ with probability at least $1-\delta$ over the random initialization. Since $$\bar{w}\in\arg\min_{w\in\cC}\er(\bv(w)),$$ we have $\er(\bv(\bar{w}))\leq \er(\bv(w_f^\star))$ and therefore 
    \begin{equation}
        \bE_0[\er(\bv(\widehat{w}_k))] \leq \underbrace{\bE_0[\er(\bv(\widehat{w}_k))] - \er(\bv(\bar{w}))}_{\eqref{eqn:avg-iterate-error}} + \underbrace{\er(\bv(w_f^\star))}_{\eqref{eqn:wf-risk-bound}}.
    \end{equation}
    Substituting \eqref{eqn:avg-iterate-error} and \eqref{eqn:wf-risk-bound} into the above inequality concludes the proof.
    
\end{proof}

\section{Proofs for Section \ref{sec:gen}}\label{app:gen}
\begin{proof}[Proof of Lemma 2]
    Consider $\widehat{w}_k,\widehat{w}_k'\in\cC$. Then,
    \begin{align*}
        |\ell(\varphi(\bm{x};\widehat{w}_k);\bm{z})-\ell(\varphi(\bm{x};\widehat{w}_k');\bm{z}))| &\leq \Lipl\Lipf\|\widehat{w}_k-\widehat{w}_k'\|_2\\
        &\leq \Lipl\Lipf\frac{1}{k}\sum_{t=1}^k\|w_t-w_t'\|_2\\
        &\leq \frac{\Lipl\Lipf}{\sqrt{\lambda}}\frac{1}{k}\sum_{t=1}^k\|w_t-w_t'\|_{\bar{\bm{H}}_{t-1}}
    \end{align*}
    where the second line follows from Jensen's inequality and the third line follows from the fact that $\bar{\bm{H}}_k\succeq \lambda\bm{I}$. RHS of the above inequality is independent of $\bm{z}\in\mathscr{Z}$, thus
    $$
        \sup_{\bm{z}\in\mathscr{Z}}|\ell(\varphi(\bm{x};\widehat{w}_k);\bm{z})-\ell(\varphi(\bm{x};\widehat{w}_k');\bm{z}))| \leq \frac{\Lipl\Lipf}{\sqrt{\lambda}}\frac{1}{k}\sum_{t=1}^{k}\|w_t-w_t'\|_{\bar{\bm{H}}_{t-1}}.
    $$
    Taking expectation over the optimization path concludes the proof.
\end{proof}

\begin{proof}[Proof of Theorem 2]
    Let $\pi_k:=\pi_{\cC}^{\BH_k}$, $\pi_k':=\pi_{\cC}^{\BH_k'}$ and $\bar{\pi}_k:=\pi_{\cC}^{\bar{\BH}_k}$ be the projection operators, and define
    \begin{align*}
        u_k&=w_k-\eta\BH_k^{-1}\Psi_k(w_k)\\u_k'&=w_k'-\eta[\BH_k']^{-1}\Psi_k'(w_k').
    \end{align*}
    Then, we have the following error decomposition:
    \begin{align*}
        \|\Delta_{k+1}\|_{\bar{\BH}_k} \leq \|\underbrace{\bar{\pi}_k(u_k-u_k')}_{(A_k)}\|_{\bar{\BH}_k}+\|\underbrace{(\pi_k-\bar{\pi}_k)u_k}_{(B_k)}\|_{\bar{\BH}_k}+\|\underbrace{(\bar{\pi}_k-\pi_k')u_k'}_{(B_k')}\|_{\bar{\BH}_k}.
    \end{align*}
    In this inequality, $(B_k)$ and $(B_k')$ are error terms due to the metric (or projection) mismatch. We show that the critical term $(A_k)$ will yield $\|\Delta_k\|_{\bar{\BH}_{k-1}}$ plus controllable error terms with $n$ and $m$. 
    \paragraph{\textbf{\underline{Bounding $(A_{k,1})$.}}} $\cC$ is compact and convex, and $\bar{\BH}_k$ is positive definite, thus $\bar{\pi}_k$ is non-expansive \citep{nesterov2018lectures, brezis2011functional}:
    \begin{equation*}
        \|\bar{\pi}_k(u_k-u_k')\|_{\bar{\BH}_k} \leq \|u_k-u_k'\|_{\bar{\BH}_k}.
    \end{equation*}
    Let $T_k(w) := w-\eta\bar{\BH}_k^{-1}\Psi_k(w).$ Then, we further decompose $(A_k)$ as follows:
    \begin{align*}
        \|u_k-u_k'\|_{\bbh_k} &\leq \|\underbrace{T_k(w_k)-T_k(w_k')}_{(A_{k,1})}\|_{\bbh_k} + \eta\|\underbrace{\bbh_k^{-1}(\Psi_k(w_k')-\Psi_k'(w_k'))}_{(A_{k,2})}\|_{\bbh_k}
        \\&+ \eta\|\underbrace{(\bbh_k^{-1}-\bm{H}_k^{-1})\Psi_k(w_k)}_{(A_{k,3})}\|_{\bbh_k}+\eta\|\underbrace{\big((\bm{H}_k')^{-1}-\bbh_k^{-1}\big)\Psi'_k(w_k')}_{(A'_{k,3})}\|_{\bbh_k}.
    \end{align*}
    In this decomposition, $(A_{k,3})$ and $(A_{k,3}')$ correspond to the \emph{preconditioner mismatch} terms. 

    The first part is to establish an approximate co-coercivity result in the sense of \cite{baillon1977quelques}.
    \begin{lemma}[Approximate co-coercivity]
        Let $\varepsilon = B\Lipl\Lipdf$ and $\Lambda = 2\varepsilon+B\Lipf^2$. Then, for any $u,v\in\cC$, we have
        \begin{equation}
            \Big(\Psi_k(u)-\Psi_k(v)\Big)^\top (u-v) \geq \frac{1}{2\Lambda}\|\Psi_k(u)-\Psi_k(v)\|_2^2-2\varepsilon\|u-v\|_2^2.
        \end{equation}
        \label{lemma:co-coercivity}
    \end{lemma}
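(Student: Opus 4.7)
My plan is to reduce the claim to the classical Baillon--Haddad co-coercivity theorem applied to a regularized version of the mini-batch loss $\Phi_k$ that is genuinely convex and $\Lambda$-smooth, and then to absorb the resulting $\varepsilon\Delta$ perturbation into an $O(\varepsilon\|u-v\|_2^2)$ slack on the right-hand side.

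The first step is a two-sided Hessian bound on $\cC$. By the chain rule,
\[
\nabla^2\Phi_k(w)=\bm{J}_k^\top(w)\,\mathrm{diag}(\ell''(\varphi(\bm{x}_j;w);\bm{z}_j))\,\bm{J}_k(w)+\sum_{j\in I_k}\ell'(\varphi(\bm{x}_j;w);\bm{z}_j)\,\nabla^2\varphi(\bm{x}_j;w).
\]
Using Lemma~\ref{lemma:continuity} together with $|\ell'|\leq\Lipl$ on the compact image set $K$, the second sum has operator norm at most $\varepsilon = B\Lipl\Lipdf$; strong convexity of $\ell$ makes the first summand positive semidefinite, and bounding its top eigenvalue (for loss functions with $\ell''$ uniformly bounded on $K$, as for both running examples after Assumption~\ref{assumption:loss}) yields $\bm{J}_k^\top\mathrm{diag}(\ell'')\bm{J}_k\preceq B\Lipf^2\bm{I}$. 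Together,
\[
-\varepsilon\,\bm{I}\ \preceq\ \nabla^2\Phi_k(w)\ \preceq\ (\Lambda-\varepsilon)\,\bm{I}\quad\text{on }\cC.
\]

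Next, I would introduce $\tilde{\Phi}_k(w):=\Phi_k(w)+\tfrac{\varepsilon}{2}\|w\|_2^2$, whose Hessian satisfies $\bm{0}\preceq\nabla^2\tilde{\Phi}_k\preceq\Lambda\bm{I}$, so $\tilde{\Phi}_k$ is convex and $\Lambda$-smooth. Baillon--Haddad then gives
\[
\bigl\langle\nabla\tilde{\Phi}_k(u)-\nabla\tilde{\Phi}_k(v),\,u-v\bigr\rangle\ \geq\ \tfrac{1}{\Lambda}\,\bigl\|\nabla\tilde{\Phi}_k(u)-\nabla\tilde{\Phi}_k(v)\bigr\|_2^2.
\]
Setting $\Delta:=u-v$ and $\Psi:=\Psi_k(u)-\Psi_k(v)$, one has $\nabla\tilde{\Phi}_k(u)-\nabla\tilde{\Phi}_k(v)=\Psi+\varepsilon\Delta$, so the left-hand side becomes $\langle\Psi,\Delta\rangle+\varepsilon\|\Delta\|_2^2$; I would lower-bound the right-hand side with the elementary inequality $\|\Psi+\varepsilon\Delta\|_2^2\geq\tfrac{1}{2}\|\Psi\|_2^2-\varepsilon^2\|\Delta\|_2^2$. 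Because $\Lambda\geq 2\varepsilon$ we have $\varepsilon^2/\Lambda+\varepsilon\leq 2\varepsilon$, and rearranging delivers precisely $\langle\Psi,\Delta\rangle\geq\tfrac{1}{2\Lambda}\|\Psi\|_2^2-2\varepsilon\|\Delta\|_2^2$, which is the claim.

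The only nontrivial obstacle I anticipate is the Hessian upper bound: Assumption~\ref{assumption:loss} supplies strong convexity and local Lipschitzness of $\ell$ but not directly an upper bound on $\ell''$. The cleanest remedy is to observe that on the compact output range $K$, both canonical examples listed after Assumption~\ref{assumption:loss} satisfy $\ell''\leq 1$ (after at most a harmless rescaling absorbed into $\Lipf$), which makes the upper Hessian bound go through. Everything else---Lemma~\ref{lemma:continuity} for the geometric constants, Baillon--Haddad for smooth-convex co-coercivity, and the regularize/unregularize device---is routine.
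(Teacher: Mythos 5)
Your proposal is correct and follows essentially the same route as the paper's proof: add $\tfrac{\varepsilon}{2}\|w\|_2^2$ to convexify $\Phi_k$, use co-coercivity of the resulting convex $\Lambda$-smooth surrogate, and absorb the $\varepsilon(u-v)$ perturbation using $\Lambda\geq\varepsilon$. The only caveat is that, because the Hessian bounds hold only on $\cC$, the co-coercivity step should be justified via the integral mean-value form $M=\int_0^1\nabla^2\Phi_k^{\varepsilon}\big((1-s)u+sv\big)\,ds$ with $0\preceq M\preceq\Lambda\bm{I}$ (hence $M^2\preceq\Lambda M$) along the segment contained in the convex set $\cC$ --- exactly as the paper does --- rather than by citing the global Baillon--Haddad theorem; your explicit handling of the $\mathrm{diag}(\ell'')$ factor and of the implicit bound $\ell''\leq 1$ is, if anything, more careful than the paper's own presentation.
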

    \begin{proof}[Proof of Lemma \ref{lemma:co-coercivity}] Note the following decomposition of the Hessian:
        \begin{equation}
            \nabla^2\Phi_k(w)=\bm{J}_k^\top(w)\bm{J}_k(w) + \sum_{i\in I_k}\ell'(\varphi(\bm{x}_i;w);\bm{z}_i)\nabla^2\varphi(\bm{x}_i;w).
        \end{equation}
        Since $\bm{J}_k^\top(w)\bm{J}_k(w)\succeq 0$, we have
        \begin{equation*}
            -\varepsilon:=-B\Lipl\Lipdf\leq \lambda_{\min}(\nabla^2\Phi_k(w))\leq \lambda_{\max}(\nabla^2\Phi(w)) \leq B\Lipl\Lipdf=\varepsilon
        \end{equation*}
        by Weyl's inequality \citep{horn2012matrix}. Let $$\Phi_k^\varepsilon(w) = \Phi_k(w) + \frac{1}{2}\varepsilon\|w\|_2^2.$$ Then, the above inequality implies $$\nabla^2\Phi_k^\varepsilon(w) \succeq 0,$$ and we also have $$\nabla\Phi_k^\varepsilon(w) = \nabla\Phi_k(w) + \varepsilon w.$$ For any $u, v\in\cC$, $$\nabla\Phi_k^\varepsilon(u)-\nabla\Phi_k^\varepsilon(v)=\int_0^1\nabla^2\Phi_k^\varepsilon((1-s)u+sv)ds\Big(u-v\Big)=M(u-v).$$ Then,
        \begin{align*}
            \|\nabla\Phi_k^\varepsilon(u)-\nabla\Phi_k^\varepsilon(v)\|_2^2 &= (u-v)^\top M^2(u-v)\\
            &\leq \Lambda(u-v)^\top M(u-v)\\
            &= \Lambda(u-v)^\top \Big(\nabla\Phi_k^\varepsilon(u)-\nabla\Phi_k^\varepsilon(v)\Big)\\
            &= \Lambda (u-v)^\top (\nabla\Phi_k(u)-\nabla\Phi_k(v))+\varepsilon\Lambda \|u-v\|_2^2.
        \end{align*}
        Also, $$\|\nabla\Phi_k(u)-\nabla\Phi_k(v)\|_2^2\leq 2\|\nabla\Phi_k^\varepsilon(u)-\nabla\Phi_k^\varepsilon(v)\|_2^2+2\varepsilon^2\|u-v\|_2^2.$$
        Hence,
        \begin{equation*}
            \frac{1}{2}\|\nabla\Phi_k(u)-\nabla\Phi_k(v)\|_2^2-\varepsilon^2\|u-v\|_2^2 \leq \Lambda (u-v)^\top (\nabla\Phi_k(u)-\nabla\Phi_k(v)) + \varepsilon\Lambda \|u-v\|_2^2,
        \end{equation*}
        and therefore,
        $$
            \|\nabla\Phi_k(u)-\nabla\Phi_k(v)\|_2^2\leq 2\Lambda (u-v)^\top (\nabla\Phi_k(u)-\nabla\Phi_k(v)) + 2\varepsilon(\varepsilon+\Lambda)\|u-v\|_2^2,
        $$
        and equivalently
        \begin{align*}
            (u-v)^\top (\nabla\Phi_k(u)-\nabla\Phi_k(v)) &\geq \frac{1}{2\Lambda } \|\nabla\Phi_k(u)-\nabla\Phi_k(v)\|_2^2-\varepsilon\frac{\varepsilon+\Lambda}{\Lambda}\|u-v\|_2^2\\&\geq 
            \frac{1}{2\Lambda } \|\nabla\Phi_k(u)-\nabla\Phi_k(v)\|_2^2-2\varepsilon \|u-v\|_2^2
        \end{align*}
        since $\Lambda \geq \varepsilon.$ \end{proof}
    Using Lemma \ref{lemma:co-coercivity}, we obtain a non-expansivity result for $T_k$.
    \begin{lemma}[Approximate non-expansivity of $T_k$]
    For any $u,v\in\cC$, if $(\eta,\lambda)$ satisfy the conditions in the paper,
    then we have the (approximate) non-expansivity
    \begin{align*}
        \|T_k(u)-T_k(v)&\|_{\bar{\bm{H}}_k} \leq \|u-v\|_{\bar{\bm{H}}_{k-1}} + 2r_{\cC}\sqrt{\eta\varepsilon}+2\sqrt{\alpha}\Lip_{\nabla\varphi, \cC} B r_{\cC}\Big(\frac{r_{\cC}}{2}+\frac{\sqrt{B}\Lip_{\ell,\cC}}{\mu_0\nu}\Big)
    \end{align*}
    almost surely.
        \label{lemma:contractivity-app}
    \end{lemma}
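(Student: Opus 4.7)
The plan is to start from the natural expansion of $\|T_k(u)-T_k(v)\|_{\bbh_k}^2$, apply the approximate co-coercivity from the previous lemma to cancel the linear and quadratic gradient-norm contributions under the stepsize condition, convert the metric from $\bbh_k$ to $\bbh_{k-1}$ via the rank-update structure of the preconditioner, and finally use a Taylor bound for the residual Jacobian-vector products. Writing
\[
\|T_k(u)-T_k(v)\|_{\bbh_k}^2 = \|u-v\|_{\bbh_k}^2 - 2\eta(u-v)^\top\bigl(\Psi_k(u)-\Psi_k(v)\bigr) + \eta^2\|\Psi_k(u)-\Psi_k(v)\|_{\bbh_k^{-1}}^2,
\]
and using $\bbh_k^{-1}\preceq \lambda^{-1}\bm{I}$ together with $\eta/\lambda\leq 1/\Lambda$ from \eqref{eqn:hyperpars}, the $O(\eta^2)$ gradient-norm term is at most $(\eta/\Lambda)\|\Psi_k(u)-\Psi_k(v)\|_2^2$, which is exactly cancelled by the coercive lower bound from Lemma~\ref{lemma:co-coercivity}. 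The residual slack $4\eta\varepsilon\|u-v\|_2^2 \leq 4\eta\varepsilon r_{\cC}^2$ then contributes the $2r_{\cC}\sqrt{\eta\varepsilon}$ term after taking square root and using subadditivity $\sqrt{a+b}\leq\sqrt{a}+\sqrt{b}$.

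Next, I would apply the incremental identity
\[
\bbh_k-\bbh_{k-1} = \tfrac{\alpha}{2}\bigl(\bm{J}_k^\top(w_k)\bm{J}_k(w_k)+(\bm{J}_k')^\top(w_k')\bm{J}_k'(w_k')\bigr)
\]
to obtain $\|u-v\|_{\bbh_k}^2 = \|u-v\|_{\bbh_{k-1}}^2 + \tfrac{\alpha}{2}\|\bm{J}_k(w_k)(u-v)\|_2^2 + \tfrac{\alpha}{2}\|\bm{J}_k'(w_k')(u-v)\|_2^2$. Square-root subadditivity $\sqrt{a+b+c}\leq\sqrt{a}+\sqrt{b}+\sqrt{c}$ then reduces the problem to bounding each Jacobian-vector product $\|\bm{J}_k(w_k)(u-v)\|_2$ by a quantity proportional to $\Lip_{\nabla\varphi,\cC}=O(1/\sqrt{m})$, rather than the crude operator-norm bound $\sqrt{B}\Lip_{\varphi,\cC}=O(1)$ which would defeat the purpose of the overparameterization analysis.

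The main obstacle is this refined Jacobian bound, and it is where Assumption~\ref{assumption:ntk} and strong convexity of $\ell$ enter nontrivially. I would decompose $\bm{J}_k(w_k)(u-v)=[\bm{f}_k(u)-\bm{f}_k(v)]+\bm{r}_k$, where the linearization remainder $\bm{r}_k$ is controlled by Lemma~\ref{lemma:continuity}(ii) together with the fundamental theorem of calculus to give $\|\bm{r}_k\|_2\leq \sqrt{B}\Lip_{\nabla\varphi,\cC}\,r_{\cC}\,\|u-v\|_2$, which supplies the $\sqrt{\alpha}B\Lip_{\nabla\varphi,\cC}r_{\cC}^2/2$ contribution once summed over the two Jacobians. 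For the harder term $\|\bm{f}_k(u)-\bm{f}_k(v)\|_2$ the operator-norm estimate is inadmissible, so I would route through the per-sample interpolation target $\bm{y}_{I_k}$: $\nu$-strong convexity of $\ell$ gives $\|\bm{f}_k(w)-\bm{y}_{I_k}\|_2\leq \|G_k(w)\|_2/\nu$, and a batch-restricted analogue of Assumption~\ref{assumption:ntk} extracts a further factor $1/\mu_0$ from $\|G_k(w)\|_2\leq \|\Psi_k(w)\|_2/\mu_0$, yielding (via the deterministic bound $\|\Psi_k(w)\|_2\lesssim \mu_0\sqrt{B}\Lip_{\ell,\cC}$ that follows from combining these two steps with the a-priori estimate on $G_k$) the target inequality $\|\bm{f}_k(u)-\bm{f}_k(v)\|_2\lesssim \sqrt{B}\Lip_{\ell,\cC}/(\mu_0\nu)$ after triangle inequality. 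Assembling the two Jacobian-vector-product contributions, scaling by $\sqrt{\alpha/2}$, and summing over the two summands of $\bbh_k-\bbh_{k-1}$ produces the claimed three-term bound; almost-sure validity follows since all bounds are deterministic given $(w_t,w_t')_{t\leq k}$. The delicate point I would check carefully is the transfer of Assumption~\ref{assumption:ntk} from the full Jacobian to the mini-batch Jacobian with the same constant $\mu_0$, which in the wide-network regime should follow from the uniform subsampling together with the concentration arguments already invoked for the stability analysis.
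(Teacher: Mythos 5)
Your opening steps match the paper (the three-term expansion of $\|T_k(u)-T_k(v)\|_{\bbh_k}^2$, invoking Lemma~\ref{lemma:co-coercivity}, the rank-update identity for $\bbh_k-\bbh_{k-1}$, and the Taylor-remainder control of the linearization error), but your treatment of the Jacobian--vector product in the metric increment has a genuine gap. First, the sub-argument for $\|\Bf_k(u)-\Bf_k(v)\|_2\lesssim \sqrt{B}\Lipl/(\mu_0\nu)$ is circular: there is no estimate $\|\Psi_k(w)\|_2\lesssim\mu_0\sqrt{B}\Lipl$ (the available bound is $B\Lipl\Lipf$), and chaining $\|G_k(w)\|_2\le\|\Psi_k(w)\|_2/\mu_0$ with the a priori bound on $G_k$ just returns $\|G_k(w)\|_2\le\sqrt{B}\Lipl$, so the best absolute bound this route gives is $\|\Bf_k(u)-\Bf_k(v)\|_2\le 2\sqrt{B}\Lipl/\nu$, an $O(1)$ quantity with no gain from $\mu_0$. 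Second, and decisively, even granting such a bound, after scaling by $\sqrt{\alpha}$ your assembled error term is of order $\sqrt{\alpha B}\,\Lipl/(\mu_0\nu)$ with \emph{no} $\Lipdf$ factor, whereas the lemma's second error term is $2\sqrt{\alpha}\Lipdf B r_{\cC}\cdot\frac{\sqrt{B}\Lipl}{\mu_0\nu}$, smaller by the factor $\Lipdf Br_{\cC}=O(Br_{\cC}/\sqrt{m})$. Thus your route proves a strictly weaker inequality whose slack does not vanish with width; fed into the telescoping argument it would replace the ``approximate non-expansivity'' contribution $k\sqrt{\eta B\Lipdf}+k\sqrt{\alpha}B^{3/2}\Lipdf$ of Theorem~\ref{thm:stability} by a term of order $\sqrt{k}\,\Lipl/(\mu_0\nu)$, which ruins the generalization bound.

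The missing idea is that the increment $\alpha\|\bm{J}_k(\cdot)(u-v)\|^2$ must be \emph{absorbed into a retained negative gradient term}, not bounded absolutely. You spent the entire coercive term $-\tfrac{\eta}{\Lambda}\|\Psi_k(u)-\Psi_k(v)\|_2^2$ on exactly cancelling $\eta^2\|\Psi_k(u)-\Psi_k(v)\|_{\bbh_k^{-1}}^2$; the paper keeps half of it, $-\tfrac{\eta}{2\Lambda}\|\Psi_k(u)-\Psi_k(v)\|_2^2$, and runs your chain in the opposite direction: Assumption~\ref{assumption:ntk} passes to the mini-batch deterministically (no concentration is needed, since $\bm{J}_k\bm{J}_k^\top$ is a principal submatrix of $\bm{J}\bm{J}^\top$ and eigenvalue interlacing gives $\lambda_{\min}(\bm{J}_k\bm{J}_k^\top)\ge\mu_0^2$; this is also what preserves the almost-sure statement, which a concentration argument would not), and combined with $\nu$-strong convexity and the Taylor remainder it yields the lower bound $\|\Psi_k(u)-\Psi_k(v)\|_2\ge\frac{\mu_0\nu}{\sqrt{B}}\|\bm{J}_k(u)(u-v)\|_1-\frac{\mu_0\nu\sqrt{B}}{2}\Lipdf r_{\cC}^2-B\Lipdf\Lipl r_{\cC}$, hence $\alpha\|\bm{J}_k(u)(u-v)\|_1^2\le\frac{2\alpha B}{\mu_0^2\nu^2}\|\Psi_k(u)-\Psi_k(v)\|_2^2+4\alpha\Lipdf^2B^2r_{\cC}^2\bigl(\tfrac{r_{\cC}^2}{4}+\tfrac{B\Lipl^2}{\mu_0^2\nu^2}\bigr)$. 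The first piece is cancelled by the retained $-\tfrac{\eta}{2\Lambda}\|\Psi_k(u)-\Psi_k(v)\|_2^2$ precisely under the second condition $\alpha/\eta\le\mu_0^2\nu^2/\bigl(8B(\Lambda+\varepsilon)\bigr)$ in \eqref{eqn:hyperpars} (this is that condition's only role), and the square root of the remaining piece is exactly the claimed term $2\sqrt{\alpha}\Lipdf Br_{\cC}\bigl(\tfrac{r_{\cC}}{2}+\tfrac{\sqrt{B}\Lipl}{\mu_0\nu}\bigr)$. To repair your proof, replace the absolute bound on $\|\Bf_k(u)-\Bf_k(v)\|_2$ by this absorption mechanism.
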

    \begin{proof}[Proof of Lemma \ref{lemma:contractivity-app}]
        First we make the following decomposition:
        \begin{equation*}
            \|T_k(u)-T_k(v)\|_{\bbh_k}^2=\|u-v\|_{\bbh_k}^2-2\eta(u-v)^\top\Big(\Psi_k(u)-\Psi_k(v)\Big)+\eta^2\|\Psi_k(u)-\Psi_k(v)\|_{\bbh_k^{-1}}^2.
        \end{equation*}
        Using Lemma \ref{lemma:co-coercivity}, we obtain
        \begin{equation}
            \|T_k(u)-T_k(v)\|_{\bbh_k}^2 \leq \|u-v\|_{\bbh_k}^2 - \Big(\frac{\eta}{\Lambda}-\eta^2\Big)\|\Psi_k(u)-\Psi_k(v)\|_2^2+4\eta\varepsilon\|u-v\|_2^2.
        \end{equation}
        By choosing $\eta \leq \frac{1}{2\Lambda}$, the above implies
        \begin{equation}\label{eqn:contr-1}
            \|T_k(u)-T_k(v)\|_{\bbh_k}^2 \leq \|u-v\|_{\bbh_k}^2 - \frac{\eta}{2\Lambda}\|\Psi_k(u)-\Psi_k(v)\|_2^2+4\eta\varepsilon\|u-v\|_2^2.
        \end{equation}
        Note that
        \begin{align}
            \nonumber \|u-v\|_{\bbh_k}^2 &= \|u-v\|_{\bbh_{k-1}}^2 + \alpha\|\bm{J}_k(w_k)(u-v)\|_2^2\\
            &\leq \|u-v\|_{\bbh_{k-1}}^2 + \alpha\|\bm{J}_k(w_k)(u-v)\|_1^2\label{eqn:contr-2}
        \end{align}
        Now,
        \begin{align*}
            \|\Psi_k(u)-\Psi_k(v)\|_2 &\geq \|\bm{J}_k^\top(u)\Big(\bm{G}_k(u)-\BG_k(v)\Big)\|_2-\|\Big(\bm{J}_k(u)-\bm{J}_k(v)\Big)^\top \BG_k(v)\|_2\\
            &\geq \|\bm{J}_k^\top(u)\Big(\bm{G}_k(u)-\BG_k(v)\Big)\|_2-B\Lipdf\Lipl r_{\cC}.
        \end{align*}
        Here, we use Assumption 2,
        $$\|\bm{J}_k^\top(u)\Big(\bm{G}_k(u)-\BG_k(v)\Big)\|_2\|\geq \mu_0\|\bm{G}_k(u)-\BG_k(v)\|_2.$$ We also have
        \begin{equation*}
            \|\BG_k(u)-\BG_k(v)\|_2 \geq \frac{\nu}{\sqrt{B}}\sum_{j\in I_k}|\varphi(\bm{x}_j;u)-\varphi(\bm{x}_j;v)|.
        \end{equation*}
        From Lipschitz-smoothness, 
        \begin{equation*}
            \|\BG_k(u)-\BG_k(v)\|_2 \geq \frac{\nu}{\sqrt{B}}\|\bm{J}_k(u)(u-v)\|_1 -\frac{\nu\sqrt{B}}{2}r_{\cC}^2\Lipdf.
        \end{equation*}
        Hence,
        \begin{align*}
            \|\Psi_k(u)-\Psi_k(v)\|_2 \geq \mu_0\Big(\frac{\nu}{\sqrt{B}}\|\bm{J}_k(u)(u-v)\|_1 -\frac{\nu\sqrt{B}}{2}r_{\cC}^2\Lipdf\Big),
        \end{align*}
        which implies that
        \begin{equation}\label{eqn:contr-3}
            \|\bm{J}_k(u)(v-u)\|_1^2 \leq \frac{2B}{\mu_0^2\nu^2}\|\Psi_k(u)-\Psi_k(v)\|_2^2 + 2\Lipdf^2B^2r_{\cC}^2\Big(\frac{r_{\cC}^2}{4} + \frac{B\Lipl^2}{\nu^2\mu_0^2}\Big).
        \end{equation}
        By substituting \eqref{eqn:contr-2} and \eqref{eqn:contr-3} into \eqref{eqn:contr-1}, we obtain
        \begin{align*}
            \|T_k(u)-T_k(v)\|_{\bbh_k}^2 &\leq \|u-v\|_{\bbh_{k-1}}^2-\Big(\frac{\eta}{2\Lambda}-\frac{2\alpha B}{\mu_0^2\nu^2}\Big)\|\Psi_k(u)-\Psi_k(v)\|_2^2 \\&+ 4\alpha \Lipdf^2B^2r_{\cC}^2\Big(\frac{r_{\cC}^2}{4} + \frac{B\Lipl^2}{\nu^2\mu_0^2}\Big)+4\eta\varepsilon r_{\cC}^2
        \end{align*}
        The choice of $\eta/\alpha$ cancels the second term, thereby concluding the proof.
    \end{proof}

    \paragraph{\textbf{\underline{Bounding $(A_{k,2})$.}}} This term corresponds to \emph{gradient mismatch}. Since $\{j^\star \notin  I_k\}\subset\{\Psi_k(\cdot)=\Psi_k'(\cdot)\}$, we obtain
    \begin{align}
        \nonumber \|\bbh_k^{-1}(\Psi_k&(w_k')-\Psi_k'(w_k'))\|_{\bbh_k}\\ &\nonumber \leq 2\sup_{w\in\cC}\|\Psi_k(w)\|_{\bbh_k^{-1}}\mathbbm{1}_{\{j^\star\in I_k\}}\\
        &\label{eqn:hk-bound-1a}\leq \frac{2B}{\lambda_{\min}^{1/2}(\bbh_k)}\Lip_{\ell,\cC}\Lip_{\varphi,\cC}\mathbbm{1}_{\{j^\star\in I_k\}}.
    \end{align}
    Here, note that $\bE[\mathbbm{1}_{\{j^\star\in I_k\}}]=\cP(j^\star\in I_k) = \frac{B}{n}$ for any $k\in\bN$. In the worst-case scenario, 
    \begin{equation*}
        \lambda_{\min}(\bbh_k) \geq \lambda,
    \end{equation*}
    which implies that 
        $$\|\bbh_k^{-1}(\Psi_k(w_k')-\Psi_k'(w_k'))\|_{\bbh_k} \leq \frac{2B^2}{\lambda^{1/2}n} \Lipl\Lipf.$$

    \textbf{\underline{Bounding $(A_{k,3})$ and $(A_{k,3}')$.}} To bound these error terms that stem from the preconditioner mismatch, we use
    \begin{align}\label{eqn:hk-bound-2a}
        \|(\bbh_k^{-1}-\bm{H}_k^{-1})\Psi_k(w_k)\|_{\bbh_k}\leq \|\bar{\bm{H}}_k^{-1/2}(\bar{\bm{H}}_k-\bm{H}_k)\bm{H}_k^{-1}\|_2\|\Psi_k(w_k)\|_2.
    \end{align}
    We have
    \begin{align*}
        \lambda_{\max}(\bbh_k) &\leq \alpha B (k+1)\Lip_{\varphi,\cC}^2+\lambda,\\
        \sup_{w\in\cC}\|\Psi_k(w)\|_2&\leq B\Lip_{\ell,\cC}\Lip_{\varphi,\cC}
    \end{align*}
    and $$\|\bm{H}_k'-\bbh_k\|_2=\|\bm{H}_k-\bbh_k\|_2=\frac{1}{2}\|\bm{H}_k-\bm{H}_k'\|_2.$$ The following lemma upper bounds the error term $\bE\|\bm{H}_k-\bm{H}_k'\|_2$.
    \begin{lemma}[Stability of $\bm{H}_k$]\label{lemma:H-stability}
        For any $k\in\bN$,
        \begin{align*}
            \bE[\|\bm{H}_k-\bm{H}_k'\|_2] \leq 2B\alpha(k+1) r_{\cC}\Lip_{\varphi,\cC}\Lip_{\nabla\varphi,\cC}+\alpha\Lip_{\varphi,\cC}^2\frac{(k+1)B}{n}.
        \end{align*}
    \end{lemma}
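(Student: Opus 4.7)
The plan is to expand $\bm{H}_k-\bm{H}_k'$ telescopically over the iterations and then, at each iteration, separate the two sources of discrepancy: (i) the parameter mismatch $w_t\neq w_t'$ on the common indices, and (ii) the row mismatch coming from the single differing sample $j^\star$. Concretely, since $\bm{H}_k=\alpha\sum_{t=0}^{k}\bm{J}_t^\top(w_t)\bm{J}_t(w_t)+\lambda\bm{I}$ (and analogously for $\bm{H}_k'$), the triangle inequality gives
\begin{equation*}
  \|\bm{H}_k-\bm{H}_k'\|_2 \;\leq\; \alpha\sum_{t=0}^{k}\|\bm{J}_t^\top(w_t)\bm{J}_t(w_t)-\bm{J}_t^{\prime\top}(w_t')\bm{J}_t'(w_t')\|_2,
\end{equation*}
so it suffices to control one summand and then take expectation over $I_t$.

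For a fixed $t$, I would write the two empirical Gram matrices as sums of rank-one terms indexed by $j\in I_t$, and split the resulting difference into the indices $j\in I_t\setminus\{j^\star\}$ (where the input is the same in $\s$ and $\s'$, so only the parameter differs) and, on the event $\{j^\star\in I_t\}$, the singular index $j^\star$ (where both the parameter and the input differ). For the first group, I would use the identity $ab^\top-a'b^{\prime\top}=(a-a')b^\top+a'(b-b')^\top$ together with Lemma~\ref{lemma:continuity}, which yields $\|\nabla\varphi(\bm{x}_j;\cdot)\|_2\leq \Lip_{\varphi,\cC}$ and the Lipschitz estimate $\|\nabla\varphi(\bm{x}_j;w_t)-\nabla\varphi(\bm{x}_j;w_t')\|_2\leq \Lip_{\nabla\varphi,\cC}\|w_t-w_t'\|_2\leq \Lip_{\nabla\varphi,\cC}\,r_{\cC}$, giving the per-index bound $2\Lip_{\varphi,\cC}\Lip_{\nabla\varphi,\cC}r_{\cC}$ and hence an aggregate contribution of order $B\Lip_{\varphi,\cC}\Lip_{\nabla\varphi,\cC}r_{\cC}$. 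For the $j^\star$-term, since both factors are bounded by $\Lip_{\varphi,\cC}$, the crude bound $2\Lip_{\varphi,\cC}^2$ suffices.

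Combining the two cases gives, almost surely,
\begin{equation*}
  \|\bm{J}_t^\top(w_t)\bm{J}_t(w_t)-\bm{J}_t^{\prime\top}(w_t')\bm{J}_t'(w_t')\|_2
  \;\leq\; 2B\,\Lip_{\varphi,\cC}\Lip_{\nabla\varphi,\cC}\,r_{\cC} \;+\; 2\Lip_{\varphi,\cC}^2\,\mathbbm{1}_{\{j^\star\in I_t\}}.
\end{equation*}
Taking expectations and using $\bP(j^\star\in I_t)=B/n$, then summing $t=0,\dots,k$ and multiplying by $\alpha$, yields the claimed bound. There is no real obstacle here; the only subtlety is keeping bookkeeping clean by isolating the $j^\star$-row from the rest of $I_t$, so that the $\|w_t-w_t'\|_2\leq r_{\cC}$ bound (which is the only quantity we control uniformly at this point) is applied exactly to the common-index terms while the pointwise worst-case bound is applied only on the low-probability event $\{j^\star\in I_t\}$.
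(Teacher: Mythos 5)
Your proof follows essentially the same route as the paper's: telescope over $t$, split each minibatch Gram difference into the common indices, handled via the Lipschitz bound $\|\nabla\varphi(\bm{x}_j;w_t)-\nabla\varphi(\bm{x}_j;w_t')\|_2\le\Lipdf\,r_{\cC}$ from Lemma~\ref{lemma:continuity}, and the event $\{j^\star\in I_t\}$, then take expectation using $\bP(j^\star\in I_t)=B/n$. The only discrepancy is a constant: bounding the $j^\star$ rank-one difference by $2\Lipf^2$ via the triangle inequality yields $2\alpha\Lipf^2(k+1)B/n$ rather than the stated $\alpha\Lipf^2(k+1)B/n$; to recover the lemma's exact constant, use that the difference of two positive semidefinite matrices each of spectral norm at most $\Lipf^2$ has spectral norm at most $\Lipf^2$ (a factor that is immaterial downstream, where the bound is only used up to constants).
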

    \begin{proof}[Proof of Lemma \ref{lemma:H-stability}]
        By using the uniform $\Lipdf$-Lipschitz continuity of $\nabla\varphi$, we have 
        \begin{equation*}
            \|\BH_k-\BH_k'\|_2 \leq \alpha \Lipf^2\sum_{t=0}^k\mathbbm{1}_{\{j^\star \in I_t\}} + 2\alpha(k+1)Br_{\cC}\Lipf\Lipdf.
        \end{equation*}
        The result is proved by taking expectation over $I_t$.
    \end{proof}
    \begin{lemma}[Metric mismatch]\label{lemma:metric-mismatch}
        For any $k\in\bN$, we obtain
        \begin{align}
            \nonumber \|(\pi_k&-\bar{\pi}_k)u_k\|_{\bbh_k} \\\nonumber &\leq \|\bbh_k^{-1/2}(\bm{H}_k-\bbh_k)\|_2\sup_{w\in\cC}\|w-u_k\|_2\\
            \label{eqn:hk-bound-3a}&\leq \frac{\|\bm{H}_k-\bm{H}_k'\|_2}{2\lambda^{1/2}_{\min}(\bbh_k)}\sup_{w\in\cC}\|w-u_k\|_2
        \end{align}
        Also, for any $w\in\cC$,
            $\|w-u_k\|_2 \leq  r_{\cC}+\frac{\eta B}{\lambda}\Lip_{\ell,\cC}\Lip_{\varphi,\cC}.$
    \end{lemma}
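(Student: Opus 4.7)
The plan is to exploit the variational-inequality (first-order optimality) characterization of metric projections onto the closed convex set $\cC$, specialize it to each of the two metrics, and then combine the two inequalities at paired test points so that a clean $\bbh_k$-quadratic form of the projection difference emerges.

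Write $p := \pi_k(u_k)$ and $\bar p := \bar{\pi}_k(u_k)$. Since $\cC$ is closed and convex and both $\bm{H}_k$ and $\bbh_k$ are positive definite, the projection theorem yields the variational inequalities
\begin{equation*}
\langle \bm{H}_k(p - u_k),\; w - p\rangle \geq 0,\quad \langle \bbh_k(\bar p - u_k),\; w - \bar p\rangle \geq 0,\qquad \forall\, w\in\cC.
\end{equation*}
Testing the first with $w=\bar p$ and the second with $w=p$ and subtracting produces
\begin{equation*}
\langle \bbh_k(\bar p - u_k) - \bm{H}_k(p - u_k),\; p - \bar p\rangle \geq 0.
\end{equation*}
Rewriting $\bbh_k(\bar p - u_k) - \bm{H}_k(p - u_k) = -\bbh_k(p-\bar p) - (\bm{H}_k - \bbh_k)(p - u_k)$ and rearranging gives
\begin{equation*}
\|p - \bar p\|_{\bbh_k}^{2} \;\leq\; -\bigl\langle (\bm{H}_k - \bbh_k)(p - u_k),\; p - \bar p\bigr\rangle.
\end{equation*}
Factoring out $\bbh_k^{1/2}$ on the second argument and applying Cauchy-Schwarz, the right-hand side is bounded by $\|\bbh_k^{-1/2}(\bm{H}_k - \bbh_k)\|_2 \, \|p - u_k\|_2 \, \|p - \bar p\|_{\bbh_k}$. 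Dividing through by $\|p-\bar p\|_{\bbh_k}$ (the case of equality to zero being trivial) and using $p\in\cC$ to enlarge $\|p-u_k\|_2 \leq \sup_{w\in\cC}\|w-u_k\|_2$ delivers the first claimed inequality \eqref{eqn:hk-bound-3a}.

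For the second inequality, since $\bbh_k = \tfrac{1}{2}(\bm{H}_k + \bm{H}_k')$, we have $\bm{H}_k - \bbh_k = \tfrac{1}{2}(\bm{H}_k - \bm{H}_k')$, and the submultiplicative bound $\|\bbh_k^{-1/2}(\bm{H}_k-\bbh_k)\|_2 \leq \|\bbh_k^{-1/2}\|_2 \|\bm{H}_k-\bbh_k\|_2 = \lambda_{\min}^{-1/2}(\bbh_k)\cdot \tfrac{1}{2}\|\bm{H}_k-\bm{H}_k'\|_2$ yields the stated form. Finally, for $w\in\cC$ we use the triangle inequality on $u_k = w_k - \eta \bm{H}_k^{-1}\Psi_k(w_k)$ together with $\|w-w_k\|_2\leq r_{\cC}$, $\|\bm{H}_k^{-1}\|_2 \leq \lambda^{-1}$, and the uniform gradient bound $\|\Psi_k(w_k)\|_2 \leq B\Lipl\Lipf$ established earlier in the proof of Theorem~\ref{thm:stability}, giving $\|w-u_k\|_2 \leq r_{\cC} + \tfrac{\eta B}{\lambda}\Lipl\Lipf$.

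No substantive obstacle is anticipated here; the only delicate point is the appropriate pairing of test points in the two variational inequalities so that the cross-metric term cancels into a genuine $\bbh_k$-norm on the left-hand side, after which Cauchy-Schwarz and routine operator-norm estimates close the argument.
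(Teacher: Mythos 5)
Your proposal is correct and follows essentially the same route as the paper: the variational-inequality characterization of the two metric projections (the paper cites this as Lemma 3.1 of \cite{bubeck2015convex}), combined at paired test points, then Cauchy--Schwarz with $\bbh_k^{\pm 1/2}$ and the bounds $\|\bbh_k^{-1/2}\|_2=\lambda_{\min}^{-1/2}(\bbh_k)$, $\bm{H}_k-\bbh_k=\tfrac12(\bm{H}_k-\bm{H}_k')$, $\|\bm{H}_k^{-1}\|_2\le\lambda^{-1}$, and $\|\Psi_k(w_k)\|_2\le B\Lipl\Lipf$. If anything, your arrangement (keeping the $\bbh_k$-quadratic form on the left so that $\|p-\bar p\|_{\bbh_k}$ cancels cleanly) matches the stated $\bbh_k$-norm bound slightly more directly than the paper's own display, which carries the $\bm{H}_k$-norm on the left; this is a cosmetic difference only.
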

    \begin{proof}
        First,
        \begin{align*}
            \|w-u_k\|_2 &= \|w-w_k+\eta\bm{H}_k^{-1}\Psi_k(w_k)\|_2\\
            &\leq \|w-w_k\|_2 + \frac{\eta}{\lambda}\|\Psi_k(w_k)\|_2\\
            &\leq r_{\cC} + \frac{\eta}{\lambda}B\Lipl\Lipf.
        \end{align*}
        Secondly, let $\bar{w}_k := \bar{\pi}_ku_k$ and $w_k:=\pi_ku_k$. We will use the following geometric argument due to \cite{bubeck2015convex} (Lemma 3.1): for any $u'\in\bR^p$,
        \begin{align*}
            \big( \pi_ku_k-\bar{\pi}_ku_k \big)^\top \bbh_k\big(u_k-\bar{\pi}_ku_k\big) &\leq 0,\\
            \big( \pi_ku_k-\bar{\pi}_ku_k \big)^\top \BH_k\big(\pi_ku_k-u_k\big) &\leq 0.
        \end{align*}
        Then, by adding and subtraction $\bar{\pi}_ku_k$ in the second inequality,
        \begin{align*}
            \big( \pi_ku_k-\bar{\pi}_ku_k \big)^\top \BH_k\big(\pi_ku_k-\bar{\pi}_ku_k+\bar{\pi}_ku_k - u_k\big) = \|(\pi_k-\bar{\pi}_k)u_k\|_{\BH_k}^2 -((\pi_k-\bar{\pi}_k)u_k)^\top (-\BH_k)(u_k-\bar{\pi}_ku_k)
        \end{align*}
        Adding these two inequalities, we obtain
        \begin{align*}
            \|(\pi_k-\bar{\pi}_k)u_k\|_{\BH_k}^2 &\leq ((\pi_k-\bar{\pi}_k)u_k)^\top (\bar{\bm{H}}_k-\bm{H}_k)(u_k-\bar{\pi}_ku_k)\\
            &\leq ((\pi_k-\bar{\pi}_k)u_k)^\top\bbh_k^{1/2}\bbh_K^{-1/2}(\bbh_k-\BH_k)(u_k-\bar{\pi}_ku_k)\\
            &\leq \|(\pi_k-\bar{\pi}_k)u_k\|_{\bbh_k}\|\bbh_K^{-1/2}(\bbh_k-\BH_k)(u_k-\bar{\pi}_ku_k)\|_2\\
            &\leq \|(\pi_k-\bar{\pi}_k)u_k\|_{\bbh_k}\lambda_{\min}^{-1/2}(\bbh_k)\|\bbh_k-\BH_k\|_2\|u_k-\bar{\pi}_ku_k\|_2.
        \end{align*}
        Hence,
        \begin{align*}
             \|(\pi_k-\bar{\pi}_k)u_k\|_{\BH_k} &\leq \lambda_{\min}^{-1/2}(\bbh_k)\|\bbh_k-\BH_k\|_2\|u_k-\bar{\pi}_ku_k\|_2\\
             &\leq \lambda_{\min}^{-1/2}(\bbh_k)\|\bbh_k-\BH_k\|_2\sup_{w\in\cC}\|u_k-w\|_2.
        \end{align*}
        
    \end{proof}
    Consider an event $\bar{E}$ in the sigma-algebra generated by the subsampling process (i.e., optimization path/history) $\sigma(I_k:k\in\bN)$, which is defined 
    \begin{equation}
        \lambda_{\min}(\bm{H}_t)\geq \lambda_t,~t\in\bN
    \end{equation}
    for a given sequence $\{\lambda_t:t\in\bN\}$. We always have $\inf_{t\in\bN}\lambda_t \geq \lambda > 0$. Now, we summarize the bounds found earlier in this proof as follows:
    \begin{align*}
        \bE[(A_{t,1});\bar{E}] &\leq \bE\big[\|\Delta_t\|_{\bbh_{t-1}} ;\bar{E}]+ \underbrace{2r_{\cC}\sqrt{\eta\varepsilon}+2\sqrt{\alpha}\Lip_{\nabla\varphi, \cC} B r_{\cC}\Big(\frac{r_{\cC}}{2}+\frac{\sqrt{B}\Lip_{\ell,\cC}}{\mu_0\nu}\Big)}_{=:Z_1},\\
        \bE[(A_{t,2});\bar{E}] &\leq \frac{2B^2}{\lambda_t^{1/2}n}\Lipl\Lipf=:\frac{1}{n}\lambda_t^{-1/2}\underbrace{2B^2\Lipl\Lipf}_{=:Z_2},\\
        \bE[(A_{t,3})+(A_{t,3}'); \bar{E}] &\leq \alpha (t+1)\lambda_t^{-3/2}\underbrace{B^2\Lipl\Lipf}_{=:Z_3}\underbrace{\Big(2r_{\cC}\Lipl\Lipdf+\Lipf^2\frac{1}{n}\Big)}_{=:Z_0^{(m,n)}},\\
        \bE[(B_k)+(B_k');\bar{E}] &\leq \alpha (t+1)\lambda_{t}^{-1/2}\underbrace{\Big(r_{\cC}+\frac{\eta B}{\lambda}\Lipl\Lipf\Big)B}_{=:Z_4}\underbrace{\Big(2 r_{\cC}\Lipf\Lipdf+\frac{\Lipf^2}{n}\Big)}_{=Z_0^{(m,n)}}
    \end{align*}
    Consequently, we have
    \begin{equation*}
        \bE[\|\Delta_{t+1}\|_{\BH_t};\bar{E}] \leq \bE[\|\Delta_t\|_{\BH_{t-1}};\bar{E}] + Z_1 + \eta Z_2\lambda_t^{-1/2}\frac{1}{n} + \alpha \frac{t+1}{\lambda_t^{1/2}}\Big(\eta \frac{Z_3}{\lambda_t}+Z_4\Big)Z_0^{(m,n)}.
    \end{equation*}
    Telescoping sum over $t=0,1,2,\ldots,k-1$ yields
    \begin{equation}
        \bE[\|\Delta_k\|_{\BH_{k-1}};\bar{E}] \leq kZ_1 + \frac{\eta Z_2}{n}\sum_{t=0}^{k-1}\lambda_t^{-1/2}+\alpha\eta Z_3Z_0^{(m,n)}\sum_{t=0}^{k-1}\frac{t+1}{\lambda_t^{3/2}} + \alpha Z_4 Z_0^{(m,n)}\sum_{t=0}^{k-1}\frac{t+1}{\lambda_t^{1/2}}.
    \end{equation}
    Now, note that $\bP(\bar{E})=1$ if $\bar{E}=\{\lambda_t = \lambda,~t \geq 0\}$. Then, we obtain the worst-case bound:
    $$
        \bE\|\Delta_k\|_{\BH_{k-1}} \leq kZ_1 + \frac{\eta k Z_2}{\lambda^{1/2}n} + \alpha\eta Z_3Z_0^{(m,n)}\frac{k^2}{\lambda^{3/2}} + \alpha Z_4 Z_0^{(m,n)}\frac{k^2}{\lambda^{1/2}},
    $$
    which concludes the proof.
\end{proof}
\begin{remark}
    Fix $\lambda > 0$, and set $\eta = \frac{C}{k}$ and $\alpha = \frac{\xi C}{k}$ for some $C > 0$ that satisfies the conditions in (8). Then, we obtain
    \begin{equation*}
        \bE\|\Delta_{k}\|_{\BH_{k-1}} \lesssim_C kZ_1 + \frac{Z_2}{\lambda^{1/2}n} + Z_3Z_0^{(m,n)}\frac{1}{\lambda^{3/2}}+\frac{Z_4Z_0^{(m,n)}k}{\lambda^{1/2}}\mbox{ for }k\geq 1.
    \end{equation*}
    Now, note that 
    \begin{align*}
        Z_1 &\lesssim r_{\cC}\frac{1}{\sqrt{k}}\Bigg(\sqrt{B\Lipl\Lipdf}+Br_{\cC}\Lipdf\Big(\frac{r_{\cC}}{2}+\frac{\sqrt{B}\Lipl}{\mu_0\nu}\Big)\Bigg)\lesssim \frac{\sqrt{B}}{\sqrt{k\sqrt{m}}}+\frac{Br_{\cC}}{\sqrt{m}}(r_{\cC}+\sqrt{B}),\\
        Z_0^{(m,n)} &\lesssim r_{\cC}\Lipdf + \frac{1}{n}\lesssim \frac{r_{\cC}}{\sqrt{m}}+\frac{1}{n}.
    \end{align*}
    Substituting these into the stability bound, we obtain
    \begin{align*}
        \bE\|\Delta_{k}\|_{\BH_{k-1}} &\lesssim \frac{\sqrt{B\cdot k}}{m^{1/4}}+\frac{B\sqrt{k}}{\sqrt{m}}(1+\sqrt{B})+\frac{B^2}{\lambda^{1/2}n}+\frac{B^2}{\lambda^{3/2}}\Big(\frac{1}{\sqrt{m}}+\frac{1}{n}\Big)+\frac{k}{\lambda^{1/2}}\Big(\frac{1}{\sqrt{m}}+\frac{1}{n}\Big),\\
        &\lesssim \frac{\sqrt{B k}}{m^{1/4}}+\frac{1}{\sqrt{m}}\Big(B\sqrt{k}+B^{3/2}\sqrt{k}+B^2\lambda^{-3/2}+k\lambda^{-1/2}\Big) + \frac{1}{n}\Big(\frac{B^2}{\lambda^{1/2}}+\frac{B^2}{\lambda^{3/2}}+\frac{k}{\lambda^{1/2}}\Big).
    \end{align*}
    Proposition 2 is proved via exactly the same methodology, whereby we substitute the lower bound for $\lambda_t$ into the stability bound.
\end{remark}

\end{document}